\documentclass[accepted]{uai2021} % after acceptance, for a revised
                                    % version; also before submission to
                                    % see how the non-anonymous paper
                                    % would look like
%% There is a class option to choose the math font
% \documentclass[mathfont=ptmx]{uai2021} % ptmx math instead of Computer
                                         % Modern (has noticable issues)
% \documentclass[mathfont=newtx]{uai2021} % newtx fonts (improves upon
                                          % ptmx; less tested, no support)
% NOTE: Only keep *one* line above as appropriate, as it will be replaced
%       automatically for papers to be published. Do not make any other
%       change above this note for an accepted version.

%% Choose your variant of English; be consistent
\usepackage[american]{babel}
% \usepackage[british]{babel}
% \usepackage{hyperref}
%% Some suggested packages, as needed:
\usepackage{natbib} % has a nice set of citation styles and commands
    \bibliographystyle{plainnat}
    
\usepackage{mathtools} % amsmath with fixes and additions
\usepackage{booktabs} % commands to create good-looking tables
\usepackage{tikz} % nice language for creating drawings and diagrams
\usepackage{subfig}
\usepackage{comment}
\usepackage{mathtools}
\usepackage{epsfig}
\usepackage{graphicx}
\usepackage{amsmath}
\usepackage{amssymb}
\usepackage{amsthm}
\usepackage{thmtools, thm-restate}
\usepackage[utf8x]{inputenc} 
\usepackage{multirow}
\usepackage{tabularx}
\newcolumntype{Y}{>{\centering\arraybackslash}X}

%% Provided macros
% \smaller: Because the class footnote size is essentially LaTeX's \small,
%           redefining \footnotesize, we provide the original \footnotesize
%           using this macro.
%           (Use only sparingly, e.g., in drawings, as it is quite small.)

%% Self-defined macros
 % just an example
\newtheorem{theorem}{Theorem}
\newtheorem{lemma}[theorem]{Lemma}
\title{RISAN: Robust Instance Specific Deep Abstention Network}

% The standard author block has changed for UAI 2021 to provide
% more space for long author lists and allow for complex affiliations
%
% All author information is authomatically removed by the class for the
% anonymous submission version of your paper, so you can already add your
% information below.
%
% ############################################ external reference
\usepackage{xr}
\makeatletter
% \newcommand*{\addFileDependency}[1]{
%   \typeout{(#1)}
%   \@addtofilelist{#1}
%   \IfFileExists{#1}{}{\typeout{No file #1.}}
% }
% \makeatother

% \newcommand*{\myexternaldocument}[1]{
%     \externaldocument{#1}
%     \addFileDependency{#1.tex}
%     \addFileDependency{#1.aux}
% }
% %%% END HELPER CODE

% % put all the external documents here!
% \myexternaldocument{kalra_590-supp}

%%% Our commands
\newcommand{\rrv}{\lambda}    % Rademacher random variable
\newcommand{\rrvi}[1]{\lambda_{#1}} % Rademacher random variable at #1 index
\newcommand{\urho}{\bar{\rho}}   % Upper bound on \rho
\newcommand{\pc}{p'} % Conjugate of p, New notation from p* to p'
\newcommand{\Rhds}[2]{ \hat{R}_{ds} (#1, #2) }

% Add authors
\author[1]{\href{mailto:<bhavya.kalra@research.iiit.ac.in>?Subject=Your UAI 2021 paper}{Bhavya Kalra}{}}
\author[2]{\href{mailto:<t-kusha@microsoft.com>?Subject=Your UAI 2021 paper}{Kulin Shah}{}}
\author[1]{\href{mailto:<naresh.manwani@iiit.ac.in>?Subject=Your UAI 2021 paper}{Naresh Manwani}{}}
% \author[1]{Further~Coauthor}
% \author[3]{Further~Coauthor}
% \author[3,1]{Further~Coauthor}
% Add affiliations after the authors
\affil[1]{%
    Machine Learning Lab\\
    International Institute of Technology\\
    Hyderabad, India
}
\affil[2]{%
    Microsoft Research\\
    Bangalore, India\\
}
% \affil[3]{%
%     Another Affiliation\\
%     Address\\
%     …
%   }
  
  \begin{document}
\maketitle

% \begin{abstract}
% In this paper, we propose deep architectures for learning instance specific abstain (reject option) binary classifiers.
% The proposed approach uses double sigmoid loss function $L_{ds}$ \cite{shah2020online} as a performance measure. We show that the loss $L_{ds}$ is classification calibrated. We also show that the excess risk of $L_{d}$ loss is upper bounded by the excess risk of $L_{ds}$. We derive the generalization error bounds for the proposed architecture for reject option classifiers. To show the effectiveness of the proposed approach, we experiment with several real world datasets. We observe that the proposed approach not only performs comparable to the state-of-the-art approaches, it is also robust against label noise. We also provide visualizations to observe the important features learned by the network corresponding to the abstaining decision.
% \end{abstract}
% \begin{comment}
\begin{abstract}
In this paper, we propose deep architectures for learning instance specific abstain (reject option) binary classifiers.
    The proposed approach uses double sigmoid loss function as described by Kulin Shah and Naresh Manwani in ("Online Active Learning of Reject Option Classifiers", AAAI, 2020), as a performance measure. We show that the double sigmoid loss is classification calibrated. We also show that the excess risk of 0-d-1 loss is upper bounded by the excess risk of double sigmoid loss. We derive the generalization error bounds for the proposed architecture for reject option classifiers. To show the effectiveness of the proposed approach, we experiment with several real world datasets. We observe that the proposed approach not only performs comparable to the state-of-the-art approaches, it is also robust against label noise. We also provide visualizations to observe the important features learned by the network corresponding to the abstaining decision.
\end{abstract}
% \end{comment}

\section{Introduction}
In classification problems, learning becomes difficult when the cost of misclassification is extremely high. It becomes more challenging when learning critical tasks such as stock markets, medical diagnosis, autonomous driving, biotech, cyber-security, identification technologies, and robot-assisted surgery. In such situations, it becomes advantageous to refrain from taking any decision when in a dilemma. Such classifiers are called abstain (reject) option classifiers. Abstain classifiers have been successfully used in medical diagnosis \citep{10.1007/978-3-642-21257-4_73}, financial forecasting \citep{Rosowsky2013RejectionBS}, genomics \citep{PMID:18621758}, speech emotion recognition \citep{Sridhar2019}, crowdsourcing \citep{7747496} etc.

Let $\mathcal{X}\subset \mathbb{R}^D$ be the feature space and $\{+1,-1\}$ be the label space. An abstaining classifier can be defined using a function $f:\mathcal{X}\rightarrow \mathbb{R}$ and a rejection function $\rho: \mathcal{X}\rightarrow  \mathbb{R}_+$ as follows. 
\begin{align*}
    g(f(\mathbf{x}),\rho(\mathbf{x}))=\begin{cases}
    1, & \mathbb{I}[f(\mathbf{x})>\rho(\mathbf{x})]\\
    \text{reject}, & \mathbb{I}[\vert f(\mathbf{x})\vert \leq \rho(\mathbf{x})]\\
    -1, & \mathbb{I}[f(\mathbf{x})<-\rho(\mathbf{x})]
    \end{cases}
\end{align*}
The goal here is to simultaneously learn the function $f(\cdot)$ and $\rho(.)$. The performance of a given abstain classifier is measured using loss $L_d$ (0-d-1) as follows.
\begin{align}
   \label{eq:0-d-1-loss}
   L_d(yf(\mathbf{x}),\rho(\mathbf{x}))=\mathbb{I}[yf(\mathbf{x})&< -\rho(\mathbf{x})] + d\;\mathbb{I}[\vert yf(\mathbf{x})\vert \leq \rho(\mathbf{x})]
\end{align}
where $d\in (0,0.5)$ is the cost of rejection.
Loss $L_d$ is minimized by {\em generalized Bayes classifier} \citep{chow1970optimum} described as follows. 
\begin{align}
    \label{eq:generalized-bayes-classifier}
    f_d^*(\mathbf{x})=\begin{cases}
    1, & \eta(\mathbf{x})>1-d \\
    \text{reject},  & d\leq \eta(\mathbf{x}) \leq 1-d\\
    -1 , & \eta(\mathbf{x}) <d
    \end{cases}
\end{align}
where $\eta(\mathbf{x})=P(y=1|\mathbf{x})$. Loss $L_d$ is discontinuous. Thus, minimizing risk under $L_d$ is difficult. In practice, various surrogate losses of $L_d$ have been used for learning abstain classifiers.  

\paragraph{Kernel Based Approaches: }Different algorithms for learning abstaining classifiers are proposed based on different choices of surrogates of $L_d$. Generalized hinge \citep{bartlett2008classification} and double hinge \citep{grandvalet2009support} are convex surrogates of $L_d$. Risk minimization using these losses results in support vector machine (SVM) like algorithms. However, approaches proposed in \citep{bartlett2008classification,grandvalet2009support} learn the rejection bandwidth as a post-processing step resulting in suboptimal solutions. \citet{manwani2015double, shah2019sparse} propose approaches based on nonconvex surrogate of $L_d$ called double ramp loss. \citet{cortes2016learning} propose max-hinge loss and plus-hinge loss for rejection option and propose a kernel-based approach that minimizes these losses. Online active learning of abstaining classifiers is discussed in \citep{shah2020online}. These approaches face three major challenges. (a) These approaches rely on kernel trick to learn nonlinear classifiers. Thus, the scalability of these methods with big data is an issue. (b) Function $\rho(.)$ is assumed to be a constant for all instances (i.e., $\rho(\mathbf{x})=\rho,\;\forall \mathbf{x}\in \mathcal{X}$). Thus, these approaches do not learn instance-specific rejection functions. (c) Most of these approaches are not robust against the label noise. Though the approach proposed in \cite{shah2019sparse} is shown robust against label noise, it uses kernels to learn nonlinear classifiers and cannot produce instance-specific rejection bandwidth. 

\paragraph{Deep Learning-based Approaches for Abstain Classifiers: } A neural networks based classifiers with abstain option is proposed  \citet{de2000reject}. In this model, rejections are done after the learning of the classifier. This results in a suboptimal abstain option classifier. A similar approach for deep neural networks(DNNs) is proposed in \citet{geifman2017selective}, which finds the best abstaining threshold based on the softmax output corresponding to each class from already trained networks. The method proposed in \citet{el2010foundations} optimizes a pair of functions, a classification function, and a selective function with a risk-coverage trade-off, where coverage is defined as the ratio of samples selected for classification amongst the complete dataset. Deep learning implementation of the same is proposed in Selectivenet \citep{geifman2019selectivenet}. This approach learns the appropriate selection and classification function for a given coverage in a deep learning setting. However, this approach does not take rejection cost $d$ into account in their objective function. The main issue with such an approach is that it does not allow the data to decide the rejection rate. For example, in instances where the classes are separable with sufficient margin, this approach rejects and learns the classifier using the remaining examples based on specified coverage parameters. \citet{thulasidasan2018knows} consider abstaining option as another class. However, this changes the abstain option's interpretation as the purpose of abstaining option is to capture the overlapping regions of any two classes.

\paragraph{Proposed Approach: }In this paper, we propose an instance-specific deep learning approach with abstain option. The proposed approach takes the cost of rejection also as an input. It simultaneously learns the decision surface ($f(\mathbf{x})$) and rejection function ($\rho(\mathbf{x})$) which depends on the cost of rejection $d$. We use double sigmoid loss function to compare the output of the network with the ground truth. Note that the double sigmoid loss is a smooth nonconvex surrogate of $L_d$ (see Eq.~(\ref{eq:0-d-1-loss})).

\paragraph{Key Contributions: }Our key contributions in this paper are as follows.
\begin{enumerate}
\item We show that the double sigmoid loss function is classification calibrated. We provide the excess risk bounds of the double sigmoid loss. 
\item We propose a novel instance-specific deep abstain network called RISAN. RISAN has two variants, with and without instance-specific rejection function. 
\item We derive the generalization error bounds for the proposed approach RISAN. 
\item We show the proposed approach's effectiveness by comparing it with various state-of-the-art algorithms on various benchmark datasets. We also show by experiments that RISAN is robust against label noise in the data.
\item We also show visualizations that focus on the areas in an image leading the network to choose to abstain option. These visualizations reflect that our network learns useful representations for the rejection as well as classification.
\end{enumerate}

\paragraph{Paper Organization: }The rest of the paper is organized as follows. We discuss the double sigmoid loss and its properties in Section~2. In Section~3, we discuss the proposed approach RISAN, its different variants, and generalization bounds. We show the experimental results in Section~4. Robustness results of RISAN are given in Section~5. We discuss the visualizations of the representations learned by RISAN in Section~6. We conclude the paper with some remarks and future directions in Section~7. 
%\textcolor{red}{Paper Organization}
\section{Double Sigmoid Loss for Abstention}
As discussed earlier, $\mathcal{X}\subseteq \mathbb{R}^D$ is the feature space and $\mathcal{Y}\in\{\pm 1\}$ is the label space. Let $\mathcal{P}(\mathbf{x},y)$ be the unknown joint distribution on $\mathcal{X}\times \mathcal{Y}$. Let $\mathcal{S}=\{(\mathbf{x}_1,y_1),\ldots,(\mathbf{x}_N,y_N)\}$ be the finite training set where each $(\mathbf{x}_i,y_i)$ is generated i.i.d. from the distribution $\mathcal{P}(\mathbf{x},y)$. The goal here is to learn functions $f: \mathcal{X} \rightarrow \mathbb{R}$ and $\rho:\mathcal{X}\rightarrow \mathbb{R}_+$ using the training set $\mathcal{S}$. 

Here, functions $f(.)$ and $\rho(.)$ are represented using deep neural network (to be discussed shortly). To evaluate the performance of the learnt functions $f(.)$ and $\rho(.)$, we use double sigmoid loss function \cite{shah2020online} as follows.
\begin{align}
    L_{ds}(yf(\mathbf{x}), \rho(\mathbf{x})) &= 2d\sigma(yf(\mathbf{x}) - \rho(\mathbf{x})) \nonumber \\ &\quad+ 2(1 - d)\sigma(yf(\mathbf{x}) + \rho(\mathbf{x})) 
    \label{eq:loss_fn}
\end{align}
%The equation \ref{eq:loss_fn} describes the double sigmoid loss
where $d$ is the cost of rejection
%The variable $z=f(\mathbf{x})$,where $\mathbf{x} \in X$ and $f(\mathbf{x})$ is the output of the classification function. 
and $\sigma(a) = (1 + \exp{(\gamma a)} )^{-1}$ is the sigmoid function with $(\gamma > 0)$. The risk under double sigmoid loss function is as follows. 
\begin{equation}
   R_{ds}(f,\rho) = \mathbb{E}_{\mathcal{X},\mathcal{Y}}
%   _{y \mid \mathbf{x}}
   \left[L_{ds}(yf(\mathbf{x}), \rho(\mathbf{x}))\right] \nonumber
\end{equation}
Here, we establish theoretical properties of the double sigmoid loss function.
\paragraph{Classification Calibration}
Double sigmoid loss is a linear combination of two sigmoid functions and hence is a non convex loss function. We first show classification calibration on double sigmoid loss by ensuring that the risk under $L_{ds}$ is minimized by the generalized bayes classifier. To approximate the optimal classifier, classification calibration is the minimal requirement for any loss function. 

\begin{theorem}
\label{THM:CLASSCALIB}
For a fixed cost of rejection $d$, the risk under double sigmoid loss is minimized by the generalized Bayes classifier $f_d^*(.)$ (see Eq.(\ref{eq:generalized-bayes-classifier})).
\end{theorem}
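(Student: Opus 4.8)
The plan is to pass to pointwise (in $\mathbf{x}$) minimization of the conditional risk. Writing $\eta = \eta(\mathbf{x}) = P(y=1\mid\mathbf{x})$, $u = f(\mathbf{x})$ and $r = \rho(\mathbf{x})\ge 0$, we have $R_{ds}(f,\rho) = \mathbb{E}_{\mathbf{x}}\bigl[C_\eta(u,r)\bigr]$ where $C_\eta(u,r) := \eta\,L_{ds}(u,r) + (1-\eta)\,L_{ds}(-u,r)$, so it suffices to minimize $C_\eta(u,r)$ over $u\in\mathbb{R}$, $r\ge 0$ for each fixed $\mathbf{x}$ and to check that the decision $g(u,r)$ of any minimizing configuration agrees with $f_d^*(\mathbf{x})$ in Eq.~(\ref{eq:generalized-bayes-classifier}). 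The key structural step is the change of variables $a = u-r$, $b = u+r$ (so $u = (a+b)/2$, $r = (b-a)/2$): the feasible set becomes $\{(a,b): b\ge a\}$, the objective \emph{decouples} as $C_\eta(a,b) = \phi_1(a) + \phi_2(b)$ with $\phi_1(a) = 2d\eta\,\sigma(a) + 2(1-d)(1-\eta)\,\sigma(-a)$ and $\phi_2(b) = 2(1-d)\eta\,\sigma(b) + 2d(1-\eta)\,\sigma(-b)$, and the induced classifier outputs $+1$ exactly when $a>0$, outputs $-1$ exactly when $b<0$, and abstains exactly when $a\le 0\le b$.

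Next I would differentiate. Using $\sigma'(-a)=\sigma'(a)$ and the fact that $\sigma$ is strictly decreasing ($\sigma'<0$), a one-line computation gives $\phi_1'(a) = 2\sigma'(a)\bigl(\eta-(1-d)\bigr)$ and $\phi_2'(b) = 2\sigma'(b)\bigl(\eta-d\bigr)$. Hence $\phi_1$ is strictly decreasing if $\eta>1-d$, strictly increasing if $\eta<1-d$, and constant if $\eta=1-d$; likewise $\phi_2$ is strictly decreasing if $\eta>d$, strictly increasing if $\eta<d$, and constant if $\eta=d$. Minimizing $\phi_1$ and $\phi_2$ separately then prescribes the direction in which $(a,b)$ must be driven, and because $d\in(0,0.5)$ (so $\eta>1-d$ forces $\eta>d$ and $\eta<d$ forces $\eta<1-d$) one checks case by case that this direction is compatible with $b\ge a$: for $\eta>1-d$, drive $a,b\to+\infty$ jointly (e.g.\ $b=a$), which makes $a>0$, i.e.\ prediction $+1$; for $d<\eta<1-d$, drive $a\to-\infty$ and $b\to+\infty$, which makes $a\le 0\le b$, i.e.\ rejection; for $\eta<d$, drive $a,b\to-\infty$ jointly, which makes $b<0$, i.e.\ prediction $-1$. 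In every regime the decision of the minimizing configuration coincides with $f_d^*(\mathbf{x})$; taking expectation over $\mathbf{x}$ finishes the argument.

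The main obstacle — and the place that needs careful wording — is that, since $\sigma$ is smooth and strictly monotone, the infimum of $C_\eta$ is not attained at a finite $(u,r)$ but only approached as the margins $|a|,|b|\to\infty$; thus "minimized by $f_d^*$" must be read in the limiting sense (every minimizing sequence induces the classification of $f_d^*$ for a.e.\ $\mathbf{x}$, and conversely any configuration realizing that classification with unbounded margin attains the infimal conditional risk). I would also dispose of the two boundary values $\eta(\mathbf{x})\in\{d,1-d\}$ with the remark that there the relevant $\phi_i$ is constant, so the decision made by $f_d^*$ (abstain) is one among several equally optimal decisions — precisely as for the target loss $L_d$, whose conditional risk also ties at $\eta=d$ and $\eta=1-d$ — so $f_d^*$ remains a minimizer.
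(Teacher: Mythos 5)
Your proposal is correct, and it takes a genuinely different and substantially more elementary route than the paper's. The paper holds $\rho$ fixed and minimizes the conditional risk $r_\eta(z)$ over $z$ alone: it rewrites the sigmoids via $\tanh$, differentiates, solves a quadratic in $K=\tanh(z/2)$ to locate interior stationary points, checks second-order conditions, and then compares the candidate minima $2\eta$, $2(1-\eta)$ and a closed-form interior value regime by regime in $\eta$; the payoff is an explicit formula for the optimal conditional risk $H(\eta)$, which the paper reuses in the excess-risk bound of Theorem~2. Your change of variables $a=u-r$, $b=u+r$ instead decouples the conditional risk into two one-dimensional strictly monotone functions whose derivative signs, proportional to $\eta-(1-d)$ and $\eta-d$, read off the Bayes decision regions directly; I verified the algebra ($d\eta-(1-d)(1-\eta)=\eta-(1-d)$ and $(1-d)\eta-d(1-\eta)=\eta-d$, using that $\sigma'$ is even), the identification of the decision regions $\{a>0\}$, $\{a\le 0\le b\}$, $\{b<0\}$, and the feasibility of the minimizing directions under $b\ge a$ in every regime. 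What you lose is the closed-form conditional risk at finite $\rho$; what you gain is transparency and an honest treatment of non-attainment, an issue the paper itself has (its $z^*=\pm\infty$ in the outer regimes) but does not discuss. One interpretive divergence worth flagging: because you optimize over $r$ jointly, your middle-regime infimum is approached only as $r\to\infty$ (with limiting value $2d$), whereas the paper, with $\rho$ fixed, finds a finite minimizer $z^*\in[-\rho,\rho]$; both induce the reject decision and hence both support the stated conclusion, but they answer slightly different optimization questions. Your disposal of the boundary cases $\eta\in\{d,1-d\}$ is also sound.
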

The proof of the theorem is provided in \ref{Appendix:A1}
\paragraph{Excess Risk Bound}
We now relate the excess risk of $L_{d}$, ($R_{d}(f,\rho)-R_{d}(f_d^*)$) with the excess risk of the double sigmoid loss ($R_{ds}(f,\rho)-R_{ds}(f_d^*)$). Note that here $R_{d}(f,\rho)=\mathbb{E}_{\mathcal{X},\mathcal{Y}}[L_{d}(yf(\mathbf{x}),\rho(\mathbf{x}))]$ and $f_d^*$ (see Eq.(\ref{eq:generalized-bayes-classifier})) is the generalized Bayes classifier which minimizes $R_{d}(f,\rho)$. $R_d(f_d^*)$ and $R_{ds}(f_d^*)$ represents risk of generalized Bayes classifies $f_d^*$ under $L_{d}$ and $L_{ds}$ loss. We know that $L_{d}(yf(\mathbf{x}),\rho(\mathbf{x}))\leq L_{ds}(yf(\mathbf{x}),\rho(\mathbf{x}))$. Thus, taking expectations on both sides, we get, $R_{d}(f,\rho)\leq R_{ds}(f,\rho)$.
We follow the approach of \cite{bartlett2006convexity} to establish an excess risk bound for the double sigmoid loss function $L_{ds}$. 

\begin{theorem}
\label{THM:EXCESS}
Let $0\leq d \leq 1/2$ and a measurable function $z$. Then we have the excess risk relation as 
\begin{equation}
     \psi\left(R_{d}(f,\rho)-R_{d}(f_d^*)\right) \leq R_{ds}(f,\rho) - R_{ds}(f_d^*) \nonumber
\end{equation}
where  
\begin{equation}
    \psi(\theta) = \begin{cases}
    0 & \theta = 0 \\
    (2d-1)\zeta+\left(\frac{\theta+1-2d}{2}\right)\left(\frac{T+\zeta^{2}\theta}{\zeta\theta+T\zeta}\right)\\\hspace{1.25 cm}+\left(\frac{\theta+2d-1}{2}\right)\left(\frac{T-\zeta^{2}\theta}{\zeta\theta-T\zeta}\right) & \theta \in (0,1-2d] \\
    \theta + (2d-1)\zeta & \theta \in [1-2d,1]
    \end{cases} \nonumber
\end{equation}
and $\theta = R_{d}(f,\rho) - R_{d}(f_{d}^{*})$ and . Also, $\zeta=tanh(\frac{\rho}{2})$ and $T=(1-2d)-\sqrt{(1-2d)^{2}-\theta^{2}}$. 
\end{theorem}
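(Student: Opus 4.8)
The plan is to follow the route of \citet{bartlett2006convexity} for comparing excess risks, adapted to the three-way abstain decision and the nonconvex double sigmoid surrogate. First I would reduce everything to a pointwise statement conditional on $\mathbf{x}$. Writing $\eta=\eta(\mathbf{x})$ and using $\sigma(-a)=1-\sigma(a)$, the conditional double sigmoid risk simplifies to
\[
C_\eta(f,\rho):=\mathbb{E}_{y}\!\left[L_{ds}(yf(\mathbf{x}),\rho(\mathbf{x}))\mid\mathbf{x}\right]=2(\eta+d-1)\,\sigma(f-\rho)+2(\eta-d)\,\sigma(f+\rho)+2(1-\eta),
\]
whereas the conditional $L_d$ risk of the induced decision $g(f,\rho)$ equals $1-\eta$, $d$ or $\eta$ according to whether $f>\rho$, $|f|\le\rho$ or $f<-\rho$. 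Minimizing the latter gives the conditional Bayes value $\min\{\eta,1-\eta,d\}$, and minimizing $C_\eta$ over $f$ (for the rejection width $\rho$ in force) gives the smallest conditional surrogate risk $C_{\eta}^{*}$; by Theorem~\ref{THM:CLASSCALIB} these are consistent (as $\rho\to\infty$, $C_{\eta}^{*}\to 2\min\{\eta,1-\eta,d\}$), while for $d\le\eta\le1-d$ the value $C_{\eta}^{*}$ exceeds $2d$ by a quantity depending on $\rho$ only through $\zeta=\tanh(\gamma\rho/2)$ --- this is the source of the $\zeta$ in the final bound.

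Next I would define $\psi$ as the largest convex minorant of the map
\[
\theta\;\longmapsto\;\inf\Bigl\{\,C_\eta(f,\rho)-C_{\eta}^{*}\ :\ \eta\in[0,1],\ (f,\rho)\ \text{with conditional $L_d$ excess}\ \ge\theta\,\Bigr\},
\]
i.e., the analogue of the $\psi$-transform of the convex-surrogate theory, and then read off its closed form by a short case analysis. The convenient substitution is $t=\tanh(\gamma f/2)\in(-1,1)$ together with $\zeta=\tanh(\gamma\rho/2)\in[0,1)$, under which $\sigma(f-\rho)=\tfrac{(1-t)(1+\zeta)}{2(1-t\zeta)}$ and $\sigma(f+\rho)=\tfrac{(1-t)(1-\zeta)}{2(1+t\zeta)}$, so that $C_\eta(f,\rho)-C_{\eta}^{*}$ becomes a rational function of $t$ and the decision boundaries become $t=\pm\zeta$. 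The three regimes of $\psi$ then correspond to: $\theta=0$ (trivial); $\theta\in(0,1-2d]$, where the only way to incur regret $\theta$ is to confuse \emph{reject} with a confident label and the constrained minimization over $t$ has an interior stationary point --- setting the derivative to zero yields a quadratic whose solution is packaged as $T=(1-2d)-\sqrt{(1-2d)^2-\theta^2}$, producing the middle branch; and $\theta\in[1-2d,1]$, where regret is incurred by predicting the wrong confident label, the constrained optimum sits at the boundary $f=\pm\rho$, and the excess is affine in $\theta$ with a $\zeta$-correction.

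Finally, with the explicit piecewise $\psi$ in hand I would verify that it is convex, nonnegative and satisfies $\psi(0)=0$ (checking in particular that the branches agree in value and slope at $\theta=1-2d$), and then close the argument with Jensen's inequality: letting $\theta(\mathbf{x})$ denote the conditional $L_d$ excess of $(f(\mathbf{x}),\rho(\mathbf{x}))$,
\[
R_{ds}(f,\rho)-R_{ds}(f_d^{*})=\mathbb{E}_{\mathbf{x}}\bigl[C_{\eta(\mathbf{x})}(f,\rho)-C_{\eta(\mathbf{x})}^{*}\bigr]\ \ge\ \mathbb{E}_{\mathbf{x}}\bigl[\psi(\theta(\mathbf{x}))\bigr]\ \ge\ \psi\bigl(\mathbb{E}_{\mathbf{x}}[\theta(\mathbf{x})]\bigr)=\psi\bigl(R_d(f,\rho)-R_d(f_d^{*})\bigr).
\]
I expect the principal obstacle to be the middle branch of the case analysis: carrying out the constrained pointwise minimization (equivalently, over $t$ with $\zeta$ a parameter), solving the quadratic stationarity condition in closed form to recover the $T$-expression, and identifying which region pair realizes the infimum for each $\theta$ --- together with confirming the convexity of the assembled $\psi$ so that the final Jensen step is legitimate.
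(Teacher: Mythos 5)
Your plan is essentially the paper's own proof: both follow the $\psi$-transform machinery of Bartlett et al., compute the conditional double-sigmoid risk, substitute $t=\tanh(\gamma f/2)$ and $\zeta=\tanh(\gamma\rho/2)$ to turn the stationarity condition into a quadratic whose root gives $T=(1-2d)-\sqrt{(1-2d)^2-\theta^2}$, assemble the same three-branch $\psi$, and verify convexity and agreement of the branches at $\theta=1-2d$. The only substantive difference is cosmetic: you make the closing Jensen step over $\mathbf{x}$ explicit, whereas the paper defines $\tilde\psi$ via $H^-\bigl(\tfrac{1+\theta}{2}\bigr)-H\bigl(\tfrac{1+\theta}{2}\bigr)$ and delegates that final step to the cited framework.
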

The proof of the theorem is provided in  \ref{Appendix:A2}

Since we have established statistical properties of double sigmoid loss, we can use this loss in deep networks to train classifiers with abstention option.
\begin{figure}[h]
  \centering
     \includegraphics[width=0.35\textwidth]{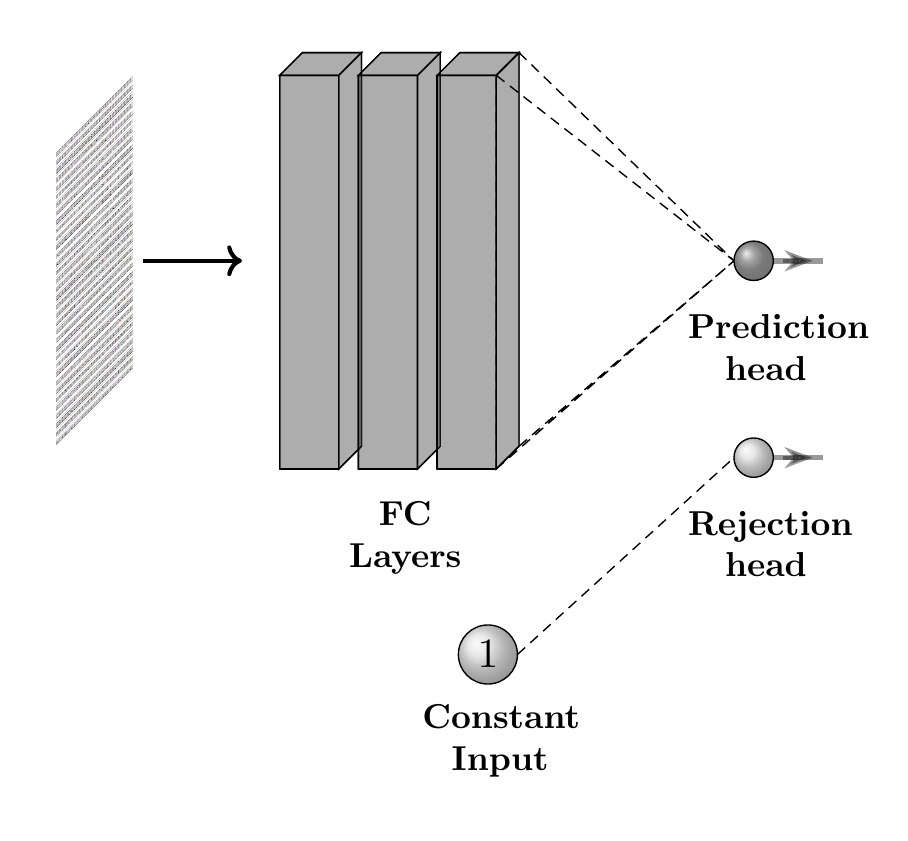}
  \caption{RISAN architecture with input independent $\rho$}
    \label{fig:DNN-1}
  \end{figure}
  
\begin{figure*}[t]
\centering
% \begin{tabular}{cc}
    \subfloat[RISAN without an auxiliary head (RISAN-NA) \label{fig:DNN-na-1}]{
        \includegraphics[width=0.45 \textwidth]{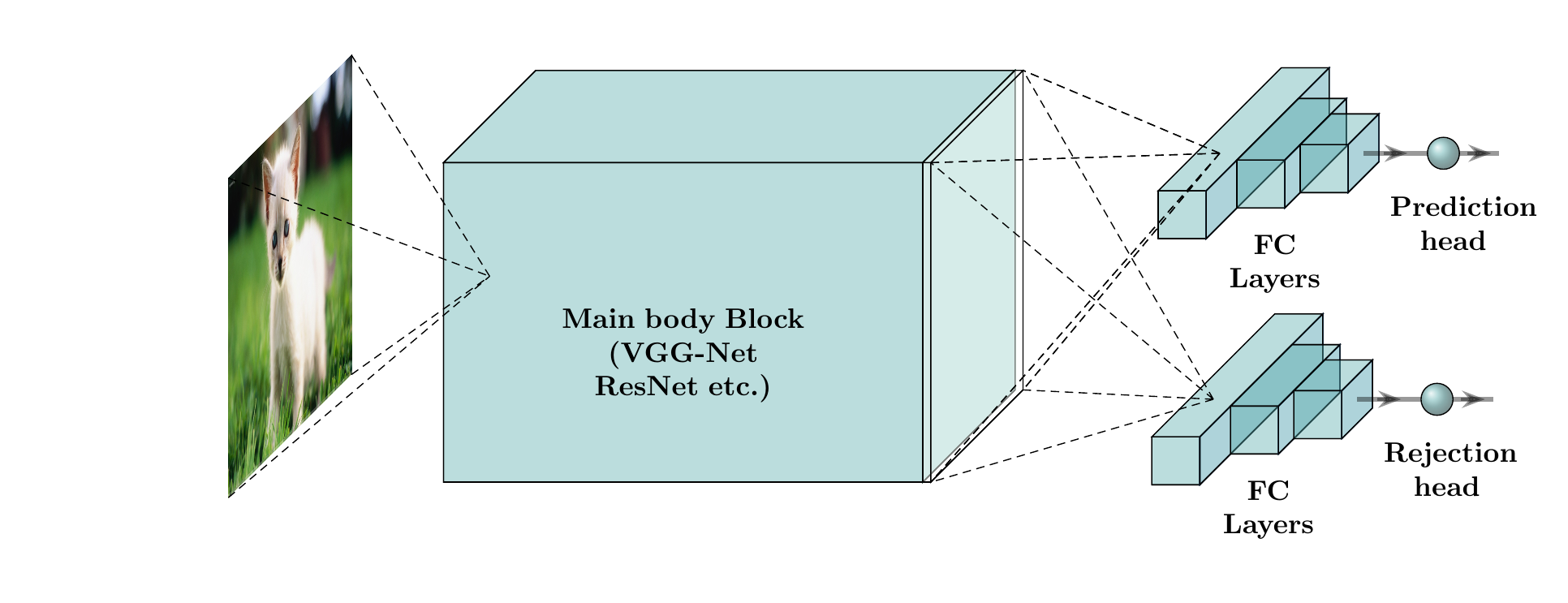}
        %  \caption{Hello1}
        % 
     }
  \subfloat[RISAN with an auxiliary head (RISAN) \label{fig:DNN-2}]{
        % \centering
        \includegraphics[width=0.45 \textwidth]{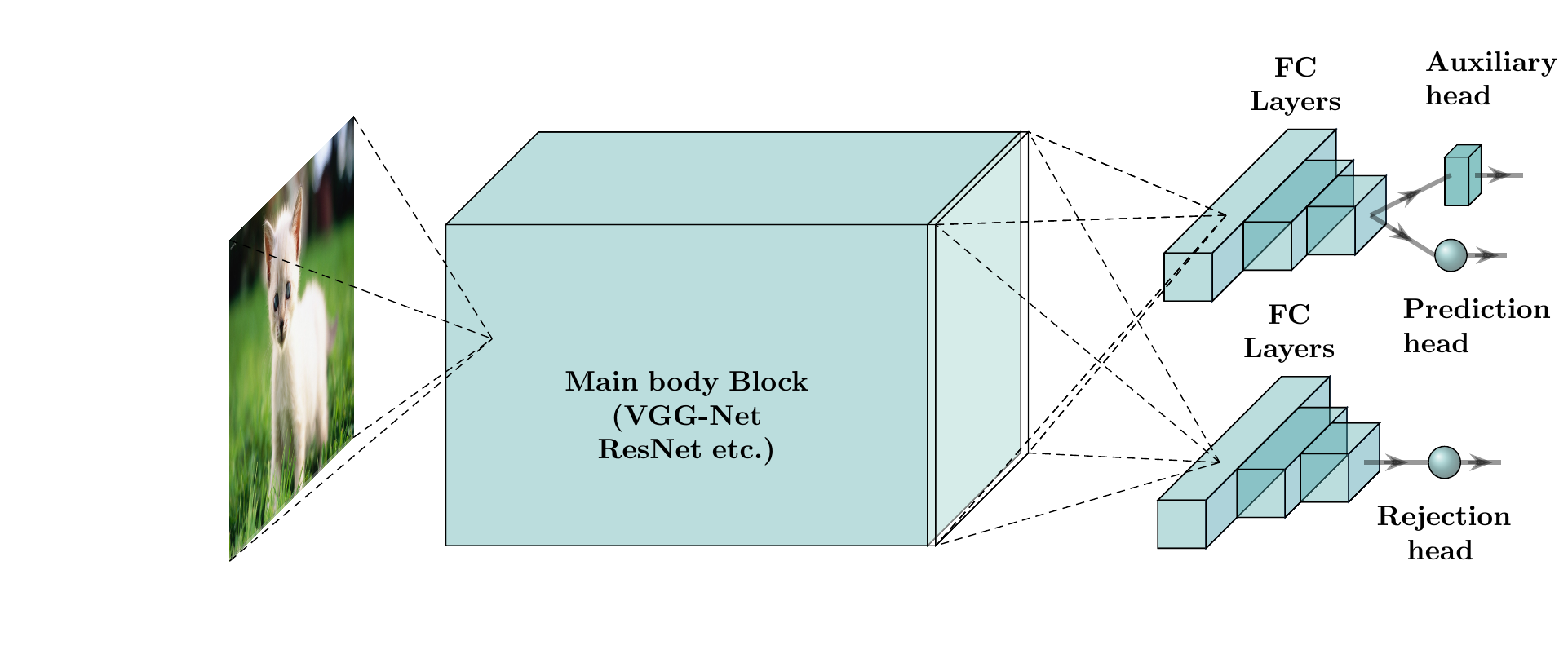}
        % \caption{Hello2}
  }
  \caption{Different implementations of RISAN with input dependent $\rho(.)$}
  \label{fig:inp-dep}
%   \end{tabular}
\end{figure*}

\section{Proposed Approach: RISAN}

The proposed architecture models both decision surface $f(.)$ and rejection function $\rho(.)$ in a single DNN model. Schematic view of RISAN implementations is depicted in Figure-\ref{fig:DNN-1} and Figure-\ref{fig:inp-dep}. 
The network's input is processed by the main body block and an associated (separate or same) network that would learn the rejection region parameter. The main body block consists of hidden layers or sub-blocks. The rejection function $\rho(.)$ can be modeled by a separate single neuron or a network similar to the main body block. The main body block can be assembled using any type of architecture relevant to the problem at hand (e.g., convolutional, fully connected, or recurrent architectures). 
\subsection{RISAN: Input Independent Rejection}
RISAN architecture represented in Figure-\ref{fig:DNN-1} describes the architecture when the rejection function takes the same value for all $\mathbf{x}$, that is, $\rho(\mathbf{x})=\rho,\;\forall \mathbf{x} \in \mathcal{X}$. RISAN for input independent rejection has two output heads, prediction head ($f(\mathbf{x})$) and rejection region parameter($\rho$). The input data $\mathbf{x}$ is fed into the fully connected (FC) layers while a fixed constant is fed into the rejection head. % Furthermore, an external input of a constant value, apart from the data, is provided to the network. This external input is fed only to the rejection head. 
The role of the prediction head is to learn the appropriate decision surface $f(\mathbf{x})$, and the rejection head learns the rejection region parameter (denoted as $\rho$). In this case, the main body block is a stack of fully connected layers that are used for processing the input data.
% The auxiliary head is only used for training, and we explain its relevance in the next section. 
\subsection{RISAN: Input-dependent Rejection}
 RISAN architecture in Figure-\ref{fig:inp-dep} describes the architectures when rejection function depends on the specific instance.  The primary architecture is provided in Figure~\ref{fig:DNN-na-1}
 for input dependent rejection.
 This architecture has two output heads similar to the input independent architecture. However, the rejection head is fed the input from the main body block. An additional architecture for incorporating auxiliary loss has been provided in Figure~\ref{fig:DNN-2}. This architecture
 has three output heads, prediction head ($f(\mathbf{x})$), rejection head ($\rho(\mathbf{x})$) and an auxiliary head.
%  While, the roles of the prediction head and the rejection head are similar to the input independent architecture.
 The auxiliary head, only used for training the networks, sometimes plays an important role in the initial process of acquiring complex features from convolutional blocks. We follow the notion of the auxiliary head for very deep neural networks as mentioned in \cite{geifman2019selectivenet}. The auxiliary head's role is to learn a related prediction task that facilitates the consolidation of apropos features in the main body block. 
 Thus, the prediction and rejection head are optimized with the auxiliary head helping build features that minimize $L_{c}$, the convex combination of categorical cross entropy loss $L_{ce}$ and double sigmoid loss $L_{ds}$.
 \begin{equation*}
    L_{c} = \alpha\times L_{ds} + (1-\alpha)\times L_{ce}    
\end{equation*}
%  In doing so, the prediction and rejection head are optimized for their specific tasks, while the auxiliary head helps build features that minimize $L_{ds}(yf(\mathbf{x}),\rho(\mathbf{x}) )$. 
%Explain auxiliary head's need
The number and size of fully connected layers preceding these two or three heads (depending on the architecture) are independent and can vary depending on the task type and complexity. 
The final neuron, however, for both the prediction head and rejection head are single neurons. The final layer of auxiliary head $h(\mathbf{x})$ depends on the application and could be a softmax layer. 
The relevance of the different architectures has been explored in the experiments section.

% Since the  architecture has two output heads, we entertained the possibility of having two separate networks such that each network builds features independently. This architecture is shown in fig-\ref{fig:DNN-3}.

    % \begin{wrapfigure}{r}{0.35\textwidth}
    %     \vspace{-10pt}
    % %   \begin{center}
    %     \centering
    %     \includegraphics[width=0.358\textwidth]{dnn.png}
    % %   \end{center}
    %   \caption{Deep Neural Network  with reject option}
    %   \label{fig:compare}
    %   \vspace{-20pt}
    % \end{wrapfigure}

\subsection{Generalization Error Bounds of RISAN with Input Independent Rejection}

 We followed the approach of \cite{neyshabur2015norm} to establish an upper bound on the Rademacher complexity of regularized DNN with double sigmoid loss function and an input independent $\rho$ as shown in figure \ref{fig:DNN-1}. We show in Theorem~\ref{THM:GENBOUND} that the Rademacher complexity for rectified linear unit based neural networks and consider two intuitive types of norm regularization (i) bounding the norm of the incoming weights of each unit (per-unit regularization) and (ii) bounding the overall norm of all the weights in the system jointly (overall regularization)
%% -- edited out
Let $\ell_{p}$, be the norm over all incoming weights to each unit and $\ell_{q}$, the norm over all the units collectively. Now, considering the above definitions. Our neural network can be defined as a graph with group norm regularization as:
\begin{equation}
    \xi_{p, q}(\mathbf{w})=\left(\sum_{v \in V}\left(\sum_{(u \rightarrow v) \in E}|\mathbf{w}(u \rightarrow v)|^{p}\right)^{q / p}\right)^{1 / q} \nonumber
\end{equation}
where $u$ and $v$ are nodes in adjacent layers belonging to set of vertices, $V$. And $\mathbf{w}(u \rightarrow v)$ represents the weight associated with the edge $u\rightarrow v$ belonging to set of edges, $E$.

Let us consider a deep abstain network with $n+1$ layers including input and output layers. Let us assume that all the hidden layer have the same number of nodes $(H)$. Let $W_j$ denotes the weight matrix corresponding to the connections from $(j-1)$ layer to $j^{th}$ layer. Then, $W_{1}\in\mathbb{R}^{H\times D}$, $W_{2}$,\ldots,$W_{n-1}\in\mathbb{R}^{H\times H}$, $W_n\in \mathbb{R}^{1\times H}$ and $\rho(\mathbf{x})=\rho$. The output of the network can be defined written as, 
\begin{equation}
    f_{W}(x)=W_n \sigma\left(W_{n-1} \sigma\left(W_{n-2}\left(\ldots \sigma\left(W_{1} \mathbf{x}\right)\right)\right)\right)-\rho \nonumber
\end{equation}
where $\sigma$ is the activation function and $W=(W_1,\ldots,W_n,\rho)$. Let $\mathcal{F}$ denotes the set all possible functions represented by such a neural network. %The rejection parameter $\rho$ can also be represented as a learnable weight $W_{\rho}$. 
\begin{theorem}
\label{THM:GENBOUND}
Let $\mathcal{D}$ be any distribution on $\mathcal{X}\times \{-1,+1\}$. Let $0<\delta\leq 1$. Then for any $n$, $q\geq1$, $1\leq p <\infty$ and any set $S=\{\mathbf{x}_{1},\ldots,\mathbf{x}_{m}\}$; with probability at least $1-\delta$ (over $S\sim \mathcal{D}^m$), all functions $f\in \mathcal{F}$ satisfy

\begin{align}
R_{ds} (f, \rho) &\leq \Rhds{f, \rho}
+\frac{\urho}{\sqrt{m}}+ \sqrt{\frac{8\ln \left(\frac{4}{\delta}\right)}{m}} + \sqrt{\frac{2\ln \left(\frac{2}{\delta}\right)}{m}} \nonumber \\&\hspace{-0.8 cm}+
\left(\frac{2\beta}{\sqrt{m}}\max_{i}\|\mathbf{x}_{i}\|_{\pc}\right)\left(2H^{\left[\frac{1}{\pc}-\frac{1}{q}\right]_{+}}\right)^{n-1} \nonumber
\end{align}
where $n$ is the number of layers in the network, $H$ is the number of neurons in the hidden layers, rejection region parameter is bounded as $\rho\leq\urho$. Also $\frac{1}{\pc}+\frac{1}{p}=1$ and $[a]_+=\max(0,a)$. $\operatorname{er}_{S}^{\ell}\left[f\right]$ is the empirical error and $\beta_{p,q}(W) = \prod_{k=1}^{n}\|W_{k}\|_{p,q}\leq \beta$.
\end{theorem}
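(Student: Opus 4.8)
The bound is obtained by the standard symmetrization argument, after which the Rademacher complexity of the RISAN hypothesis class is controlled by the norm‑based layer‑peeling technique of \cite{neyshabur2015norm}. Write $\mathcal{H}$ for the loss class $\{(\mathbf{x},y)\mapsto L_{ds}(yf_W(\mathbf{x}),\rho)\,:\,W=(W_1,\dots,W_n,\rho)\}$ induced by $\mathcal{F}$. Since $\sigma(\cdot)\in(0,1)$ and $2d+2(1-d)=2$, the loss is bounded, $L_{ds}\in(0,2)$, so replacing one training point changes the empirical risk by at most $2/m$. Combining McDiarmid's inequality with the symmetrization inequality $\mathbb{E}_S\sup_{f}\big(R_{ds}(f,\rho)-\Rhds{f}{\rho}\big)\le 2\,\mathcal{R}_m(\mathcal{H})$ and then passing from the population Rademacher average to its empirical counterpart (a second bounded‑difference step), one gets, with probability at least $1-\delta$ over $S\sim\mathcal{D}^m$,
\begin{equation*}
R_{ds}(f,\rho)\le \Rhds{f}{\rho}+2\,\mathcal{R}_m(\mathcal{H})+\sqrt{\tfrac{8\ln(4/\delta)}{m}}+\sqrt{\tfrac{2\ln(2/\delta)}{m}},
\end{equation*}
the two tail terms coming from the two concentration steps after splitting the failure probability between them and tracking the constant $2$ in the bounded‑difference coefficients.

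\textbf{Removing the loss and the rejection head.} As a function of the pair $(t,r)=(yf_W(\mathbf{x}),\rho)$, $L_{ds}(t,r)=2d\,\sigma(t-r)+2(1-d)\,\sigma(t+r)$ is Lipschitz: since $\sup|\sigma'|=\gamma/4$, its Lipschitz constant in each coordinate is at most $\gamma/2$. Applying Talagrand's contraction inequality (the vector‑valued form, or coordinatewise after subtracting a constant, which leaves a Rademacher average unchanged) reduces $\mathcal{R}_m(\mathcal{H})$ to a sum of two pieces: the Rademacher average of the network‑body class $\mathcal{G}=\{\mathbf{x}\mapsto W_n\sigma(\cdots\sigma(W_1\mathbf{x}))\,:\,\prod_{k=1}^{n}\|W_k\|_{p,q}\le\beta\}$ (the labels $y_i$ are absorbed into the Rademacher signs) and that of the one‑parameter rejection class $\{\mathbf{x}\mapsto\rho\,:\,0\le\rho\le\urho\}$. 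The latter is elementary:
\begin{equation*}
\mathcal{R}_m\big(\{\mathbf{x}\mapsto\rho\}\big)=\frac{1}{m}\,\mathbb{E}_{\epsilon}\,\sup_{0\le\rho\le\urho}\rho\Big|\sum_{i=1}^{m}\epsilon_i\Big|=\frac{\urho}{m}\,\mathbb{E}_{\epsilon}\Big|\sum_{i=1}^{m}\epsilon_i\Big|\le\frac{\urho}{\sqrt m},
\end{equation*}
by Jensen's inequality; this, together with the Lipschitz factor (which can be absorbed so that $\gamma/2\le 1$), yields the $\urho/\sqrt m$ term.

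\textbf{Bounding the network Rademacher complexity.} It remains to bound $\mathcal{R}_m(\mathcal{G})$, and here I would invoke the layer‑by‑layer peeling of \cite{neyshabur2015norm}. Using the positive homogeneity of the ReLU activation together with Hölder's inequality, the outermost weight matrix is pulled out at the cost of its $\|\cdot\|_{p,q}$ norm and a dimensional factor $H^{[1/\pc-1/q]_+}$ (with $1/\pc+1/p=1$), plus a factor $2$ from contracting the ReLU at that layer; iterating through all $n$ layers multiplies the weight norms into $\beta_{p,q}(W)=\prod_{k=1}^{n}\|W_k\|_{p,q}\le\beta$ and accumulates the factor $\big(2H^{[1/\pc-1/q]_+}\big)^{n-1}$ — the exponent is $n-1$ rather than $n$ because the output layer $W_n\in\mathbb{R}^{1\times H}$ contributes only its norm and no width factor. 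The recursion bottoms out at the Rademacher average of a single linear functional of the data, which is at most $\tfrac{1}{\sqrt m}\max_i\|\mathbf{x}_i\|_{\pc}$ by Hölder's inequality and standard bounds on the Rademacher average of the data vectors. Assembling the factors gives
\begin{equation*}
\mathcal{R}_m(\mathcal{G})\le\frac{2\beta}{\sqrt m}\Big(\max_i\|\mathbf{x}_i\|_{\pc}\Big)\big(2H^{[1/\pc-1/q]_+}\big)^{n-1},
\end{equation*}
and substituting this and the $\urho/\sqrt m$ bound into the display of the first paragraph yields the claimed inequality.

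\textbf{Main obstacle.} The crux is the induction for $\mathcal{R}_m(\mathcal{G})$: one must simultaneously use homogeneity of ReLU to push scalars across the activation, choose the correct form of Hölder/Jensen so that $\|W_k\|_{p,q}$ turns into the width factor $H^{[1/\pc-1/q]_+}$, and keep the per‑layer contraction constants from compounding worse than $2^{\,n-1}$ — which is precisely the content of the \cite{neyshabur2015norm} bound, re‑derived here for the double sigmoid loss. A minor but necessary check is that the extra rejection scalar $\rho$ does not disrupt the peeling; it does not, since $\rho$ enters $L_{ds}$ only additively alongside $yf_W(\mathbf{x})$ and is separated off by the joint contraction step before any network peeling begins.
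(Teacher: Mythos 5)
Your proposal is correct and follows essentially the same route as the paper: a Bartlett--Mendelson/McDiarmid step producing the two $\sqrt{\ln(\cdot)/m}$ tail terms, contraction via the Lipschitz constant ($\gamma/2\le 1$) of the double sigmoid loss, splitting off the bounded rejection parameter to obtain the $\urho/\sqrt m$ term, and the Neyshabur-style $\ell_{p,q}$ layer-peeling induction yielding the $\bigl(2H^{[1/\pc-1/q]_+}\bigr)^{n-1}$ factor with the base case $\tfrac{\beta}{\sqrt m}\max_i\|\mathbf{x}_i\|_{\pc}$. The only cosmetic difference is that you separate $\rho$ by a joint contraction step before peeling, while the paper carries $-\rho$ inside the function class and splits the supremum; both give the same terms.
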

The proof of the theorem is provided in \ref{Appendix:A3}.
The key observations from the bound in Theorem~3 are as follows. The bounds depend on the number of neurons in each layer, $H$ and the number of layers $n$. The bounds are also inversely proportional to the number of samples, $\sqrt{m}$. Thus, increasing $m$ decreases the generalization error bound. Also, when $\pc\geq q$, the dependence on the number of neurons in each layer vanishes. If we use overall $\ell_{1}$ or $\ell_{2}$ regularization, this dependence should disappear.  

\subsection{Example: Classifier Learnt Using RISAN}
We generated 1000 examples in the square $[-1.5,1.5]^2$ uniformly randomly. We used $x_2-x_1-2sin(x_1)=0$ as separation boundary. We ensured equal representation of each class. We then randomly flipped labels of the samples present within the $\pm0.75$ margin of the decision boundary. We also used RISAN with input independent $\rho$ (see Figure~\ref{fig:DNN-1}) and $d=0.25$. The resulting classification boundary and rejection region of the synthetic dataset are shown in Figure~\ref{fig:2D classifier} where the dark region signifies the rejection region.
\begin{figure}[h]
\centering
    \subfloat[ \label{fig:2D classifier}]{\includegraphics[width=0.23\textwidth]{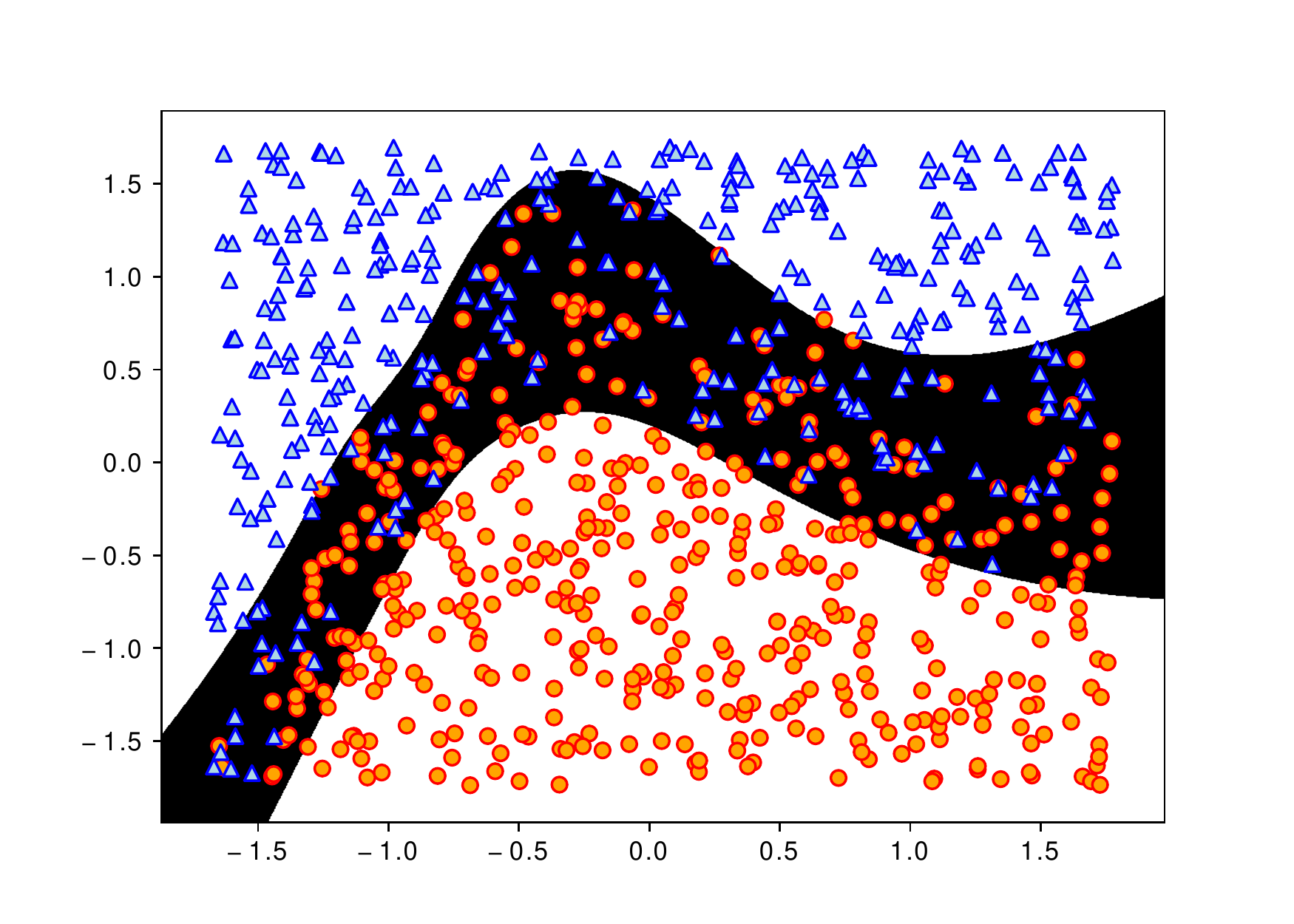}}
    \subfloat[\label{fig:Generalization Error}]{ 
    \includegraphics[width=0.23\textwidth]{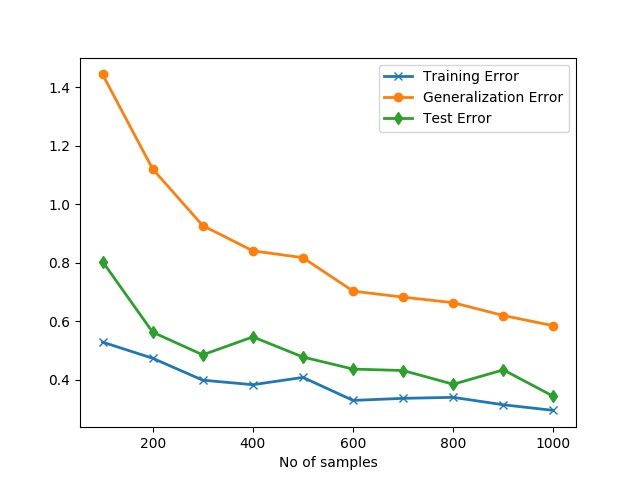}
    }
    \caption{ (a) Resulting classifier on a synthetically generated dataset with a nonlinear rejection region (black) (b) Generalization error upper bounds the Test error on the synthetic dataset}
\end{figure}
We also plot the generalization bounds for the input independent rejection on a 2D dataset (see Figure~\ref{fig:Generalization Error}). For $d=0.25$, we ran the experiments for $30$ epochs, increasing the no. of samples from $100$ to $1000$ with a step size of $100$. We observed that an increase in the number of samples leads to decreased training error, test error, and generalization error simultaneously. Also, we observed that the generalization error upper bounds the test error for each experiment. 

\section{Experiments}

This section describes the experimental details: datasets used, baseline algorithms used for comparison purposes, and our choice of architectures and hyper-parameters.

\subsection{Datasets Used}
Note that our proposed approach works for binary classification problems. Thus, to show the effectiveness of the proposed approach, we performed experiments on the following datasets. 
\begin{enumerate}
    \item Small Datasets: Ionosphere and ILPD \citep{Dua:2019}. 
    \item Phishing dataset \citep{Dua:2019}.
    \item Cats vs. Dogs \citep{elson2007asirra}: Each image re-scaled to 64x64 from original images of size 360x400.
    \item CIFAR-10 \citep{krizhevsky2009learning}: We selected classes \textit{automotive} and \textit{truck} from CIFAR-10 for our task. We have selected these classes as they have many similarities, contain overlapping features, and are tough to classify even for humans sometimes.
    \item MNIST \citep{lecun2010mnist}: We selected classes \textit{1} and \textit{7} from MNIST dataset for our task.
    \item CBIS-DDSM \citep{lee2017curated}: This is a medical image dataset with positive referring to the presence of some form of calcification or mass, and the absence refers to negative examples. The dataset has 14\% positives, and 86\% negative labeled pre-processed images with ROI extracted. We further sampled the images to create a subset dataset with a similar number (4500) of positives and negative examples each, all re-scaled to 64x64 from the original size of 299x299. 
\end{enumerate}
% \subsection{Baseline Methods}
We divide our experiments into two categories, namely, small dataset and large dataset experiments because some baseline methods are optimized for the smaller datasets and fail to converge for larger datasets and. Hence, we use different baseline methods for small and large datasets.
% majorly two different sets of experiments wh
% involving various datasets of varied sizes. 
% The array of datasets varies from 400 samples to 25000 samples. 
% The small-sized datasets have less than 1000 samples while large datasets have more than 10000 samples.
% Hence, we compare our results with different baseline methods for other datasets.

\subsection{Baselines}
\paragraph{Baselines for Small Datasets Experiments:}
We compare our network with two state of the art methods, (a) DH-SVM: reject option classifier introduced in \citet{grandvalet2009support} which minimizes the double hinge loss and (b) SDR-SVM: sparse reject option classifier proposed in \citet{shah2019sparse} which minimizes $\ell_{1}$ regularized risk under double ramp loss function. 

\paragraph{Baselines for Large Datasets Experiments:}
We compare the proposed approach with the following baselines for Cats vs. Dogs, CIFAR-10, CBIS-DDSM, MNIST, and Phishing website datasets. (a) SelectiveNet(SNN) \citep{geifman2019selectivenet}: a deep neural architecture with an integrated reject option that simultaneously optimizes a prediction and a selection function . 
We also compare results on a variant of SNN without the auxiliary loss, the SNN-NA.
(b) DAC: deep abstaining classifier, a deep neural network trained with a modified cross entropy loss function introduced in \citet{thulasidasan2018knows} to accommodate an abstain (reject) class.
% \begin{figure}[ht]
%     \centering
%     \includegraphics[width=0.3\textwidth]{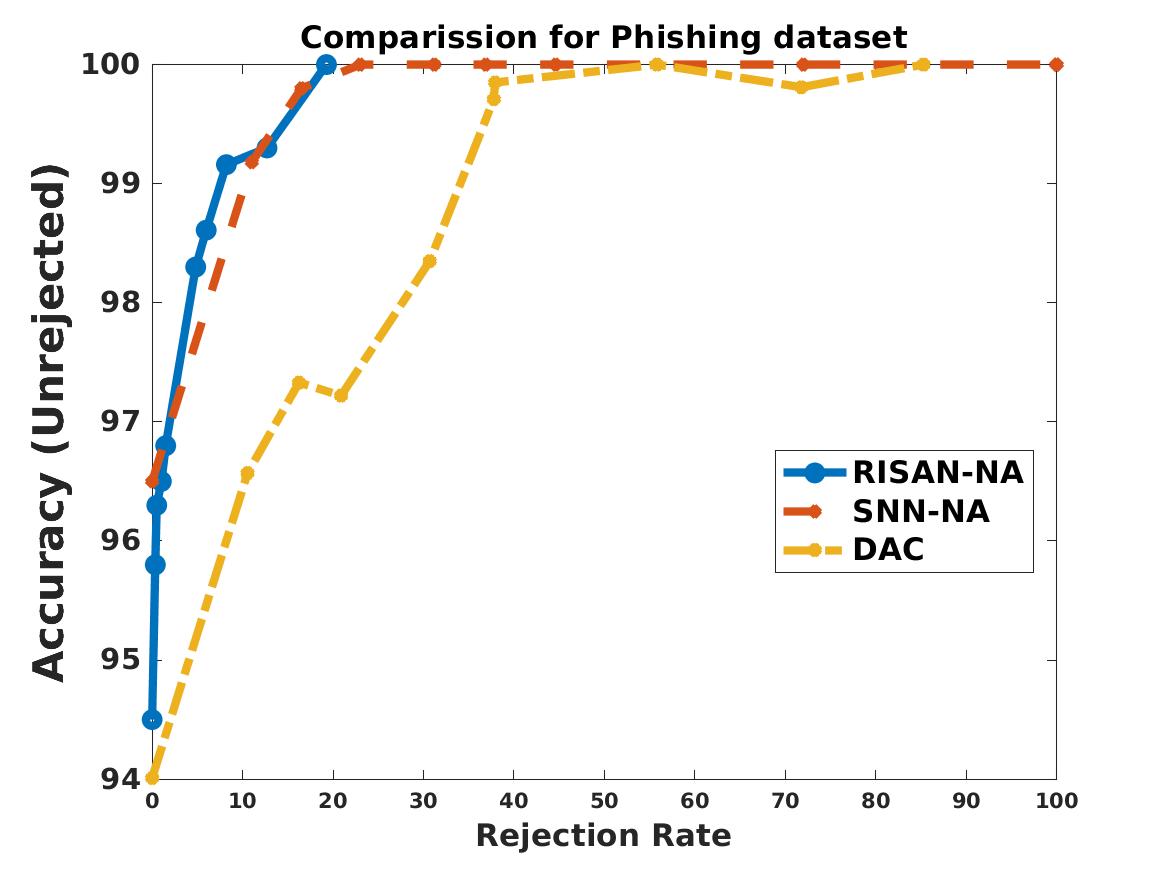}
%     \caption{Phishing Dataset Comparison}
%     \label{fig:phishing dataset}
% \end{figure}

\begin{figure}[h]
  \centering
    \subfloat[Ionosphere risk]{\includegraphics[width=0.22\textwidth]{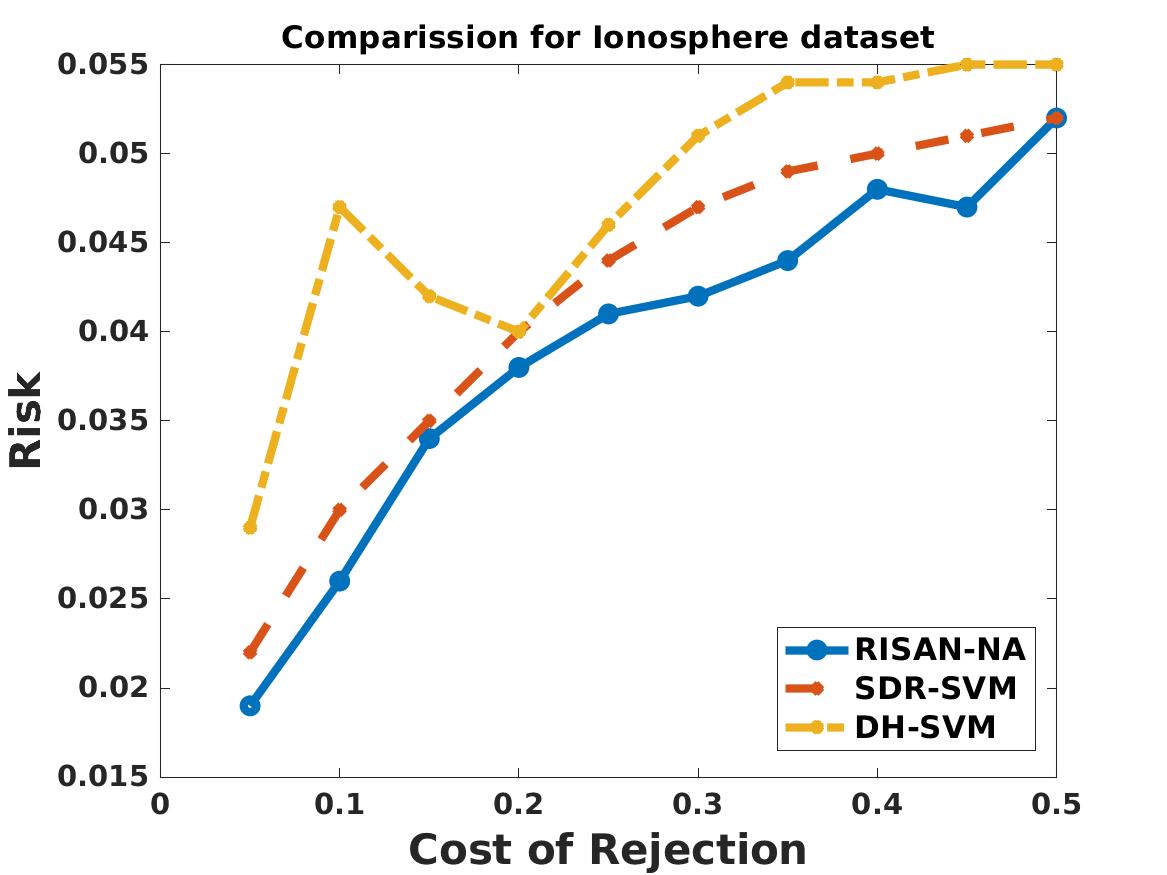}
  \label{fig:iono-risk}}
    \subfloat[ILPD risk]{\includegraphics[width=0.22\textwidth]{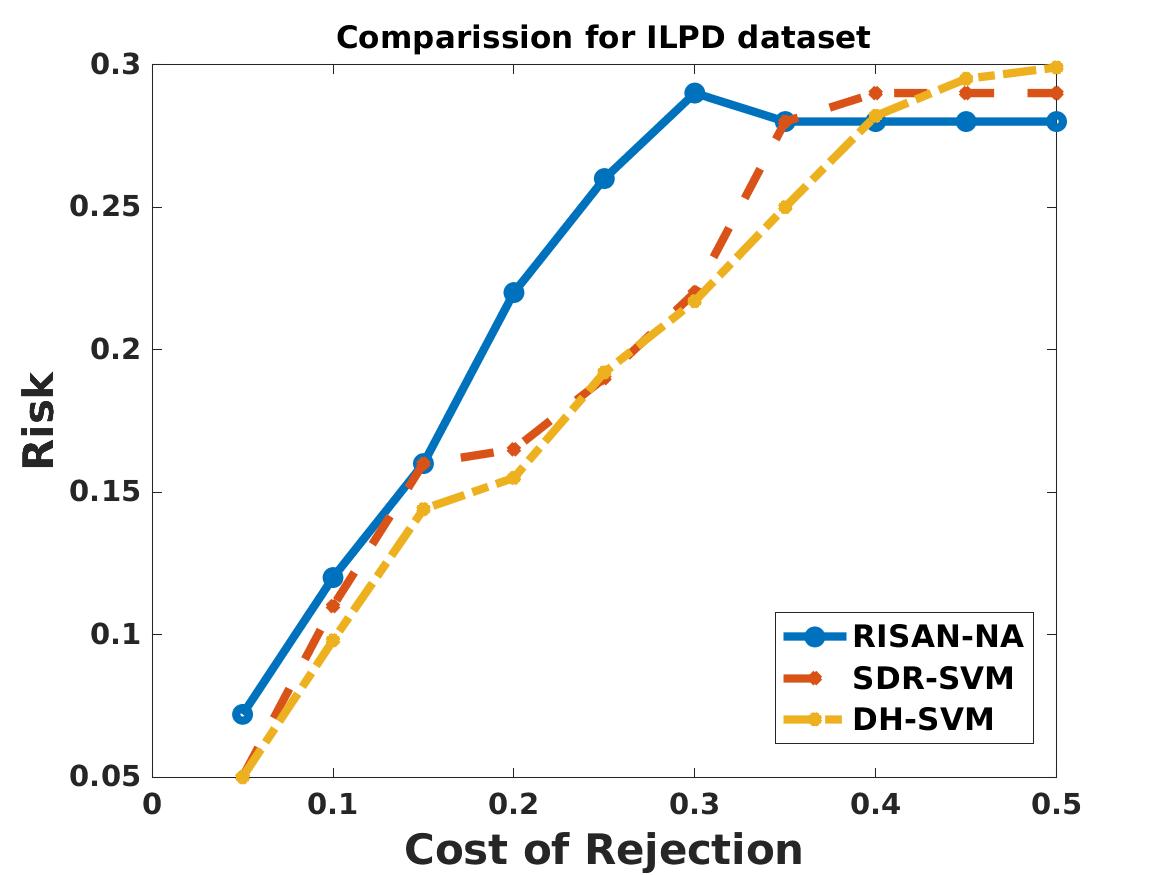}
  \label{fig:ilpd-risk}}
   \hfill
  \subfloat[Ionosphere accuracy]{\includegraphics[width=0.22\textwidth]{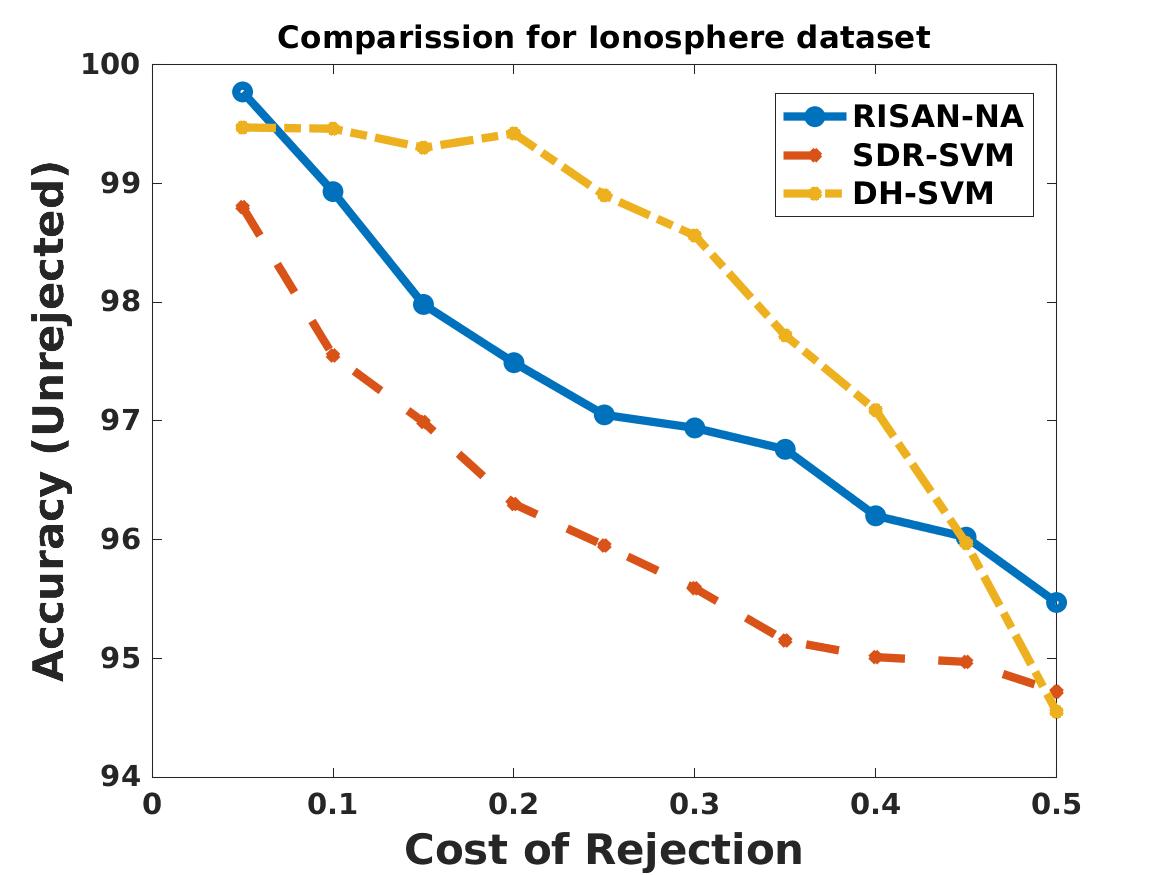}
  \label{fig:iono-accuracy}}
  \subfloat[ILPD accuracy]{\includegraphics[width=0.22\textwidth]{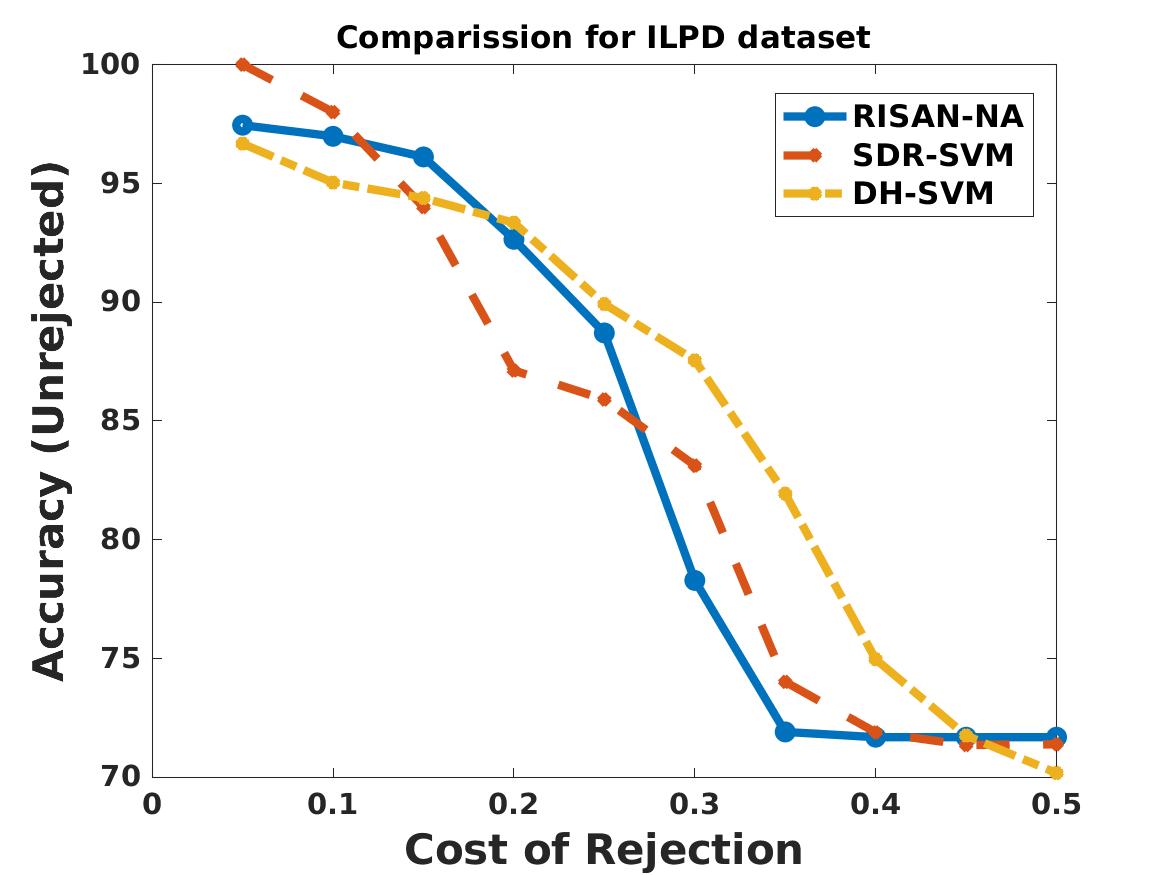}
  \label{fig:ilpd-accuracy}}
 
  \subfloat[Ionosphere rejection rate]{\includegraphics[width=0.22\textwidth]{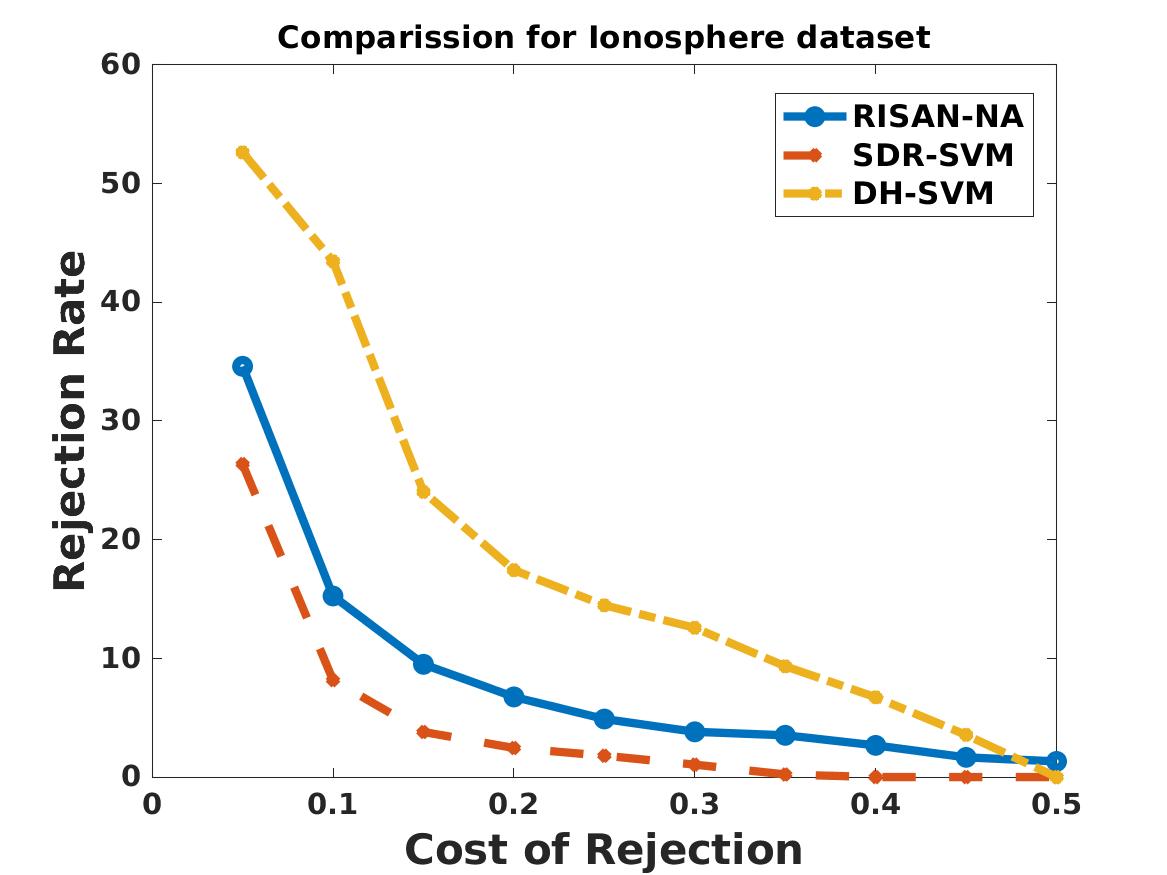}
  \label{fig:iono-rejection}}
%   \hfill
%   \caption{Ionosphere Results}
%   \label{fig:Iono Results}
% \end{figure*}
% \begin{figure*}[h]
%   \centering
  \subfloat[ILPD rejection rate]{\includegraphics[width=0.22\textwidth]{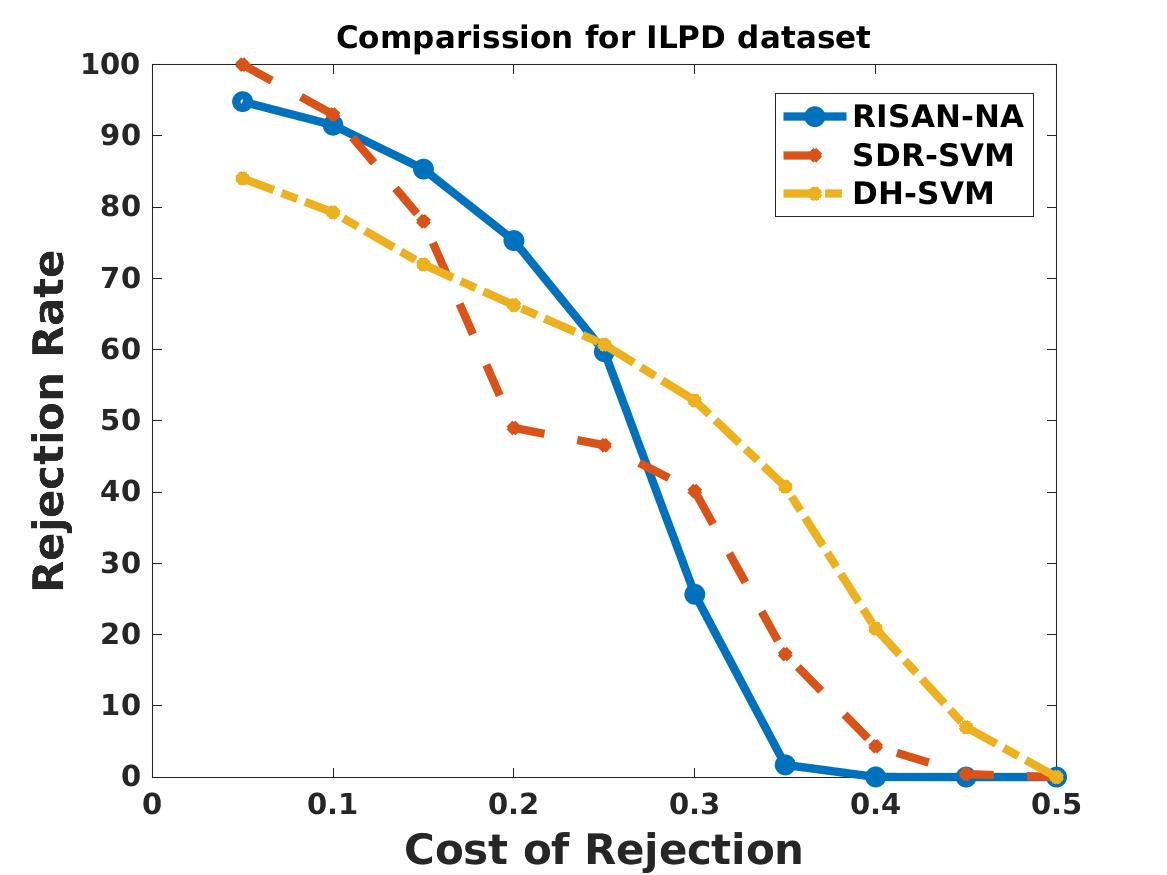}
  \label{fig:ilpd-rejection}}
  
  \hfill
 
%   \hfill

%   \hfill
  \caption{Small Dataset Results}
  \label{fig:Ilpd results}
\end{figure}

% \begin{figure*}[h]
%   \centering
%   \subfloat[Validation Loss vs  Epochs]{\includegraphics[width=0.33\textwidth]{Results/Faster Convergence/epvsval.jpg}
%   \label{fig:cat-dog}}
% %   \hfill
% % \quad
%   \subfloat[Accuracy vs  Epochs]{\includegraphics[width=0.33\textwidth]{Results/Faster Convergence/epvsacc.jpg}
%   \label{fig:iono-rejection}}
% %   \hfill
% % \quad
%   \subfloat[Rejection Rate vs  Epochs]{\includegraphics[width=0.33\textwidth]{Results/Faster Convergence/epvsrr.jpg}}
%   \caption{Faster convergence}
%   \label{fig:faster convergence}
% \end{figure*}

\begin{figure*}[h]
  \centering
  \subfloat[Cats vs Dogs Dataset ]{\includegraphics[width=0.2\textwidth]{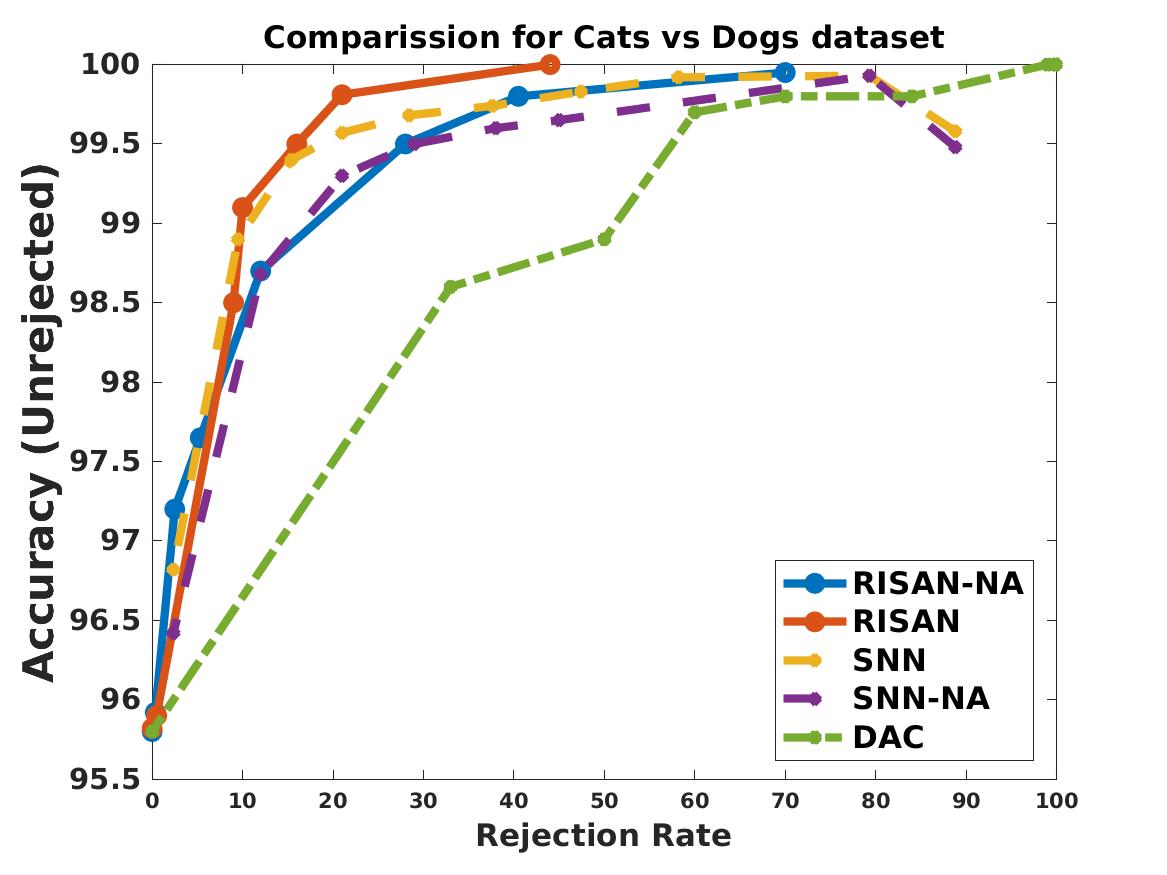}
  \label{fig:cat-dog}}
%   \hfill
% \quad
%   \hfill 
%   \quad
  \subfloat[CIFAR Dataset ]{\includegraphics[width=0.2\textwidth]{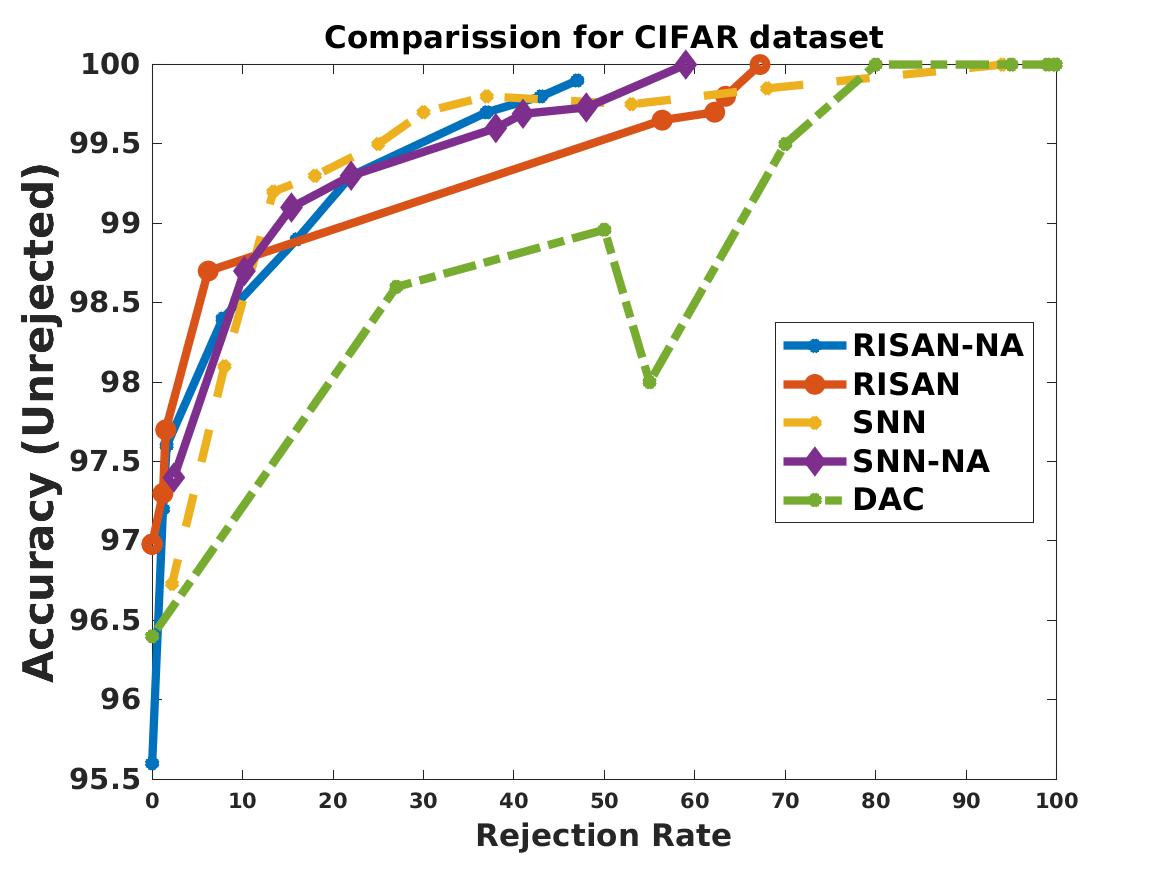}
  \label{fig:CIFAR-risk}}
%   \quad
% \hfill
  \subfloat[MNIST Dataset ]{\includegraphics[width=0.2\textwidth]{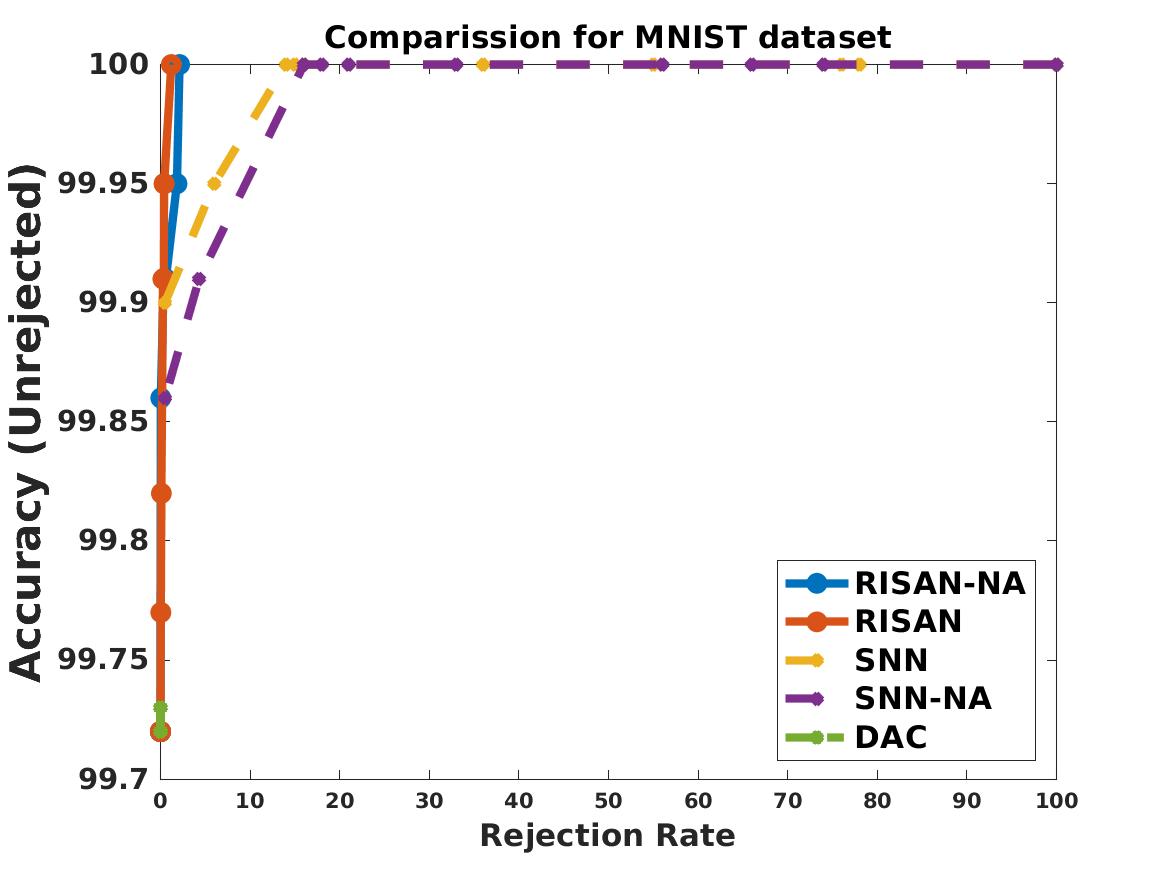}
  \label{fig:MNIST-risk}}
      \subfloat[CBIS-DDSM Dataset ]{\includegraphics[width=0.2\textwidth]{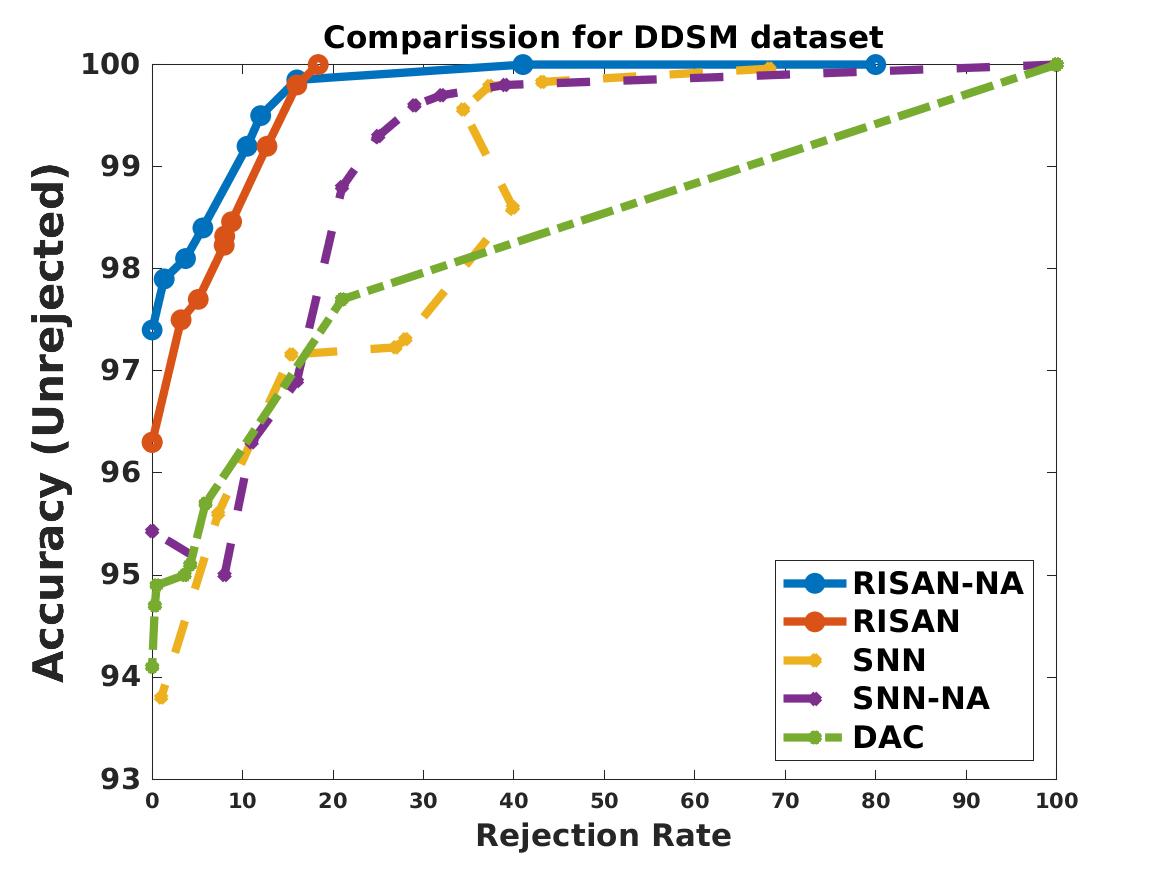}
  \label{fig:CBIS-risk}}
      \subfloat[Phishing Dataset ]{\includegraphics[width=0.2\textwidth]{Results/db-phishing.jpg}
  \label{fig:phish-risk}}
  \caption{Large Dataset Results}
  \label{fig:large-data}
\end{figure*}

\begin{figure*}[h]
  \centering
    \subfloat[Cats vs Dogs with 20\% \\ label noise ]{\includegraphics[width=0.22\textwidth]{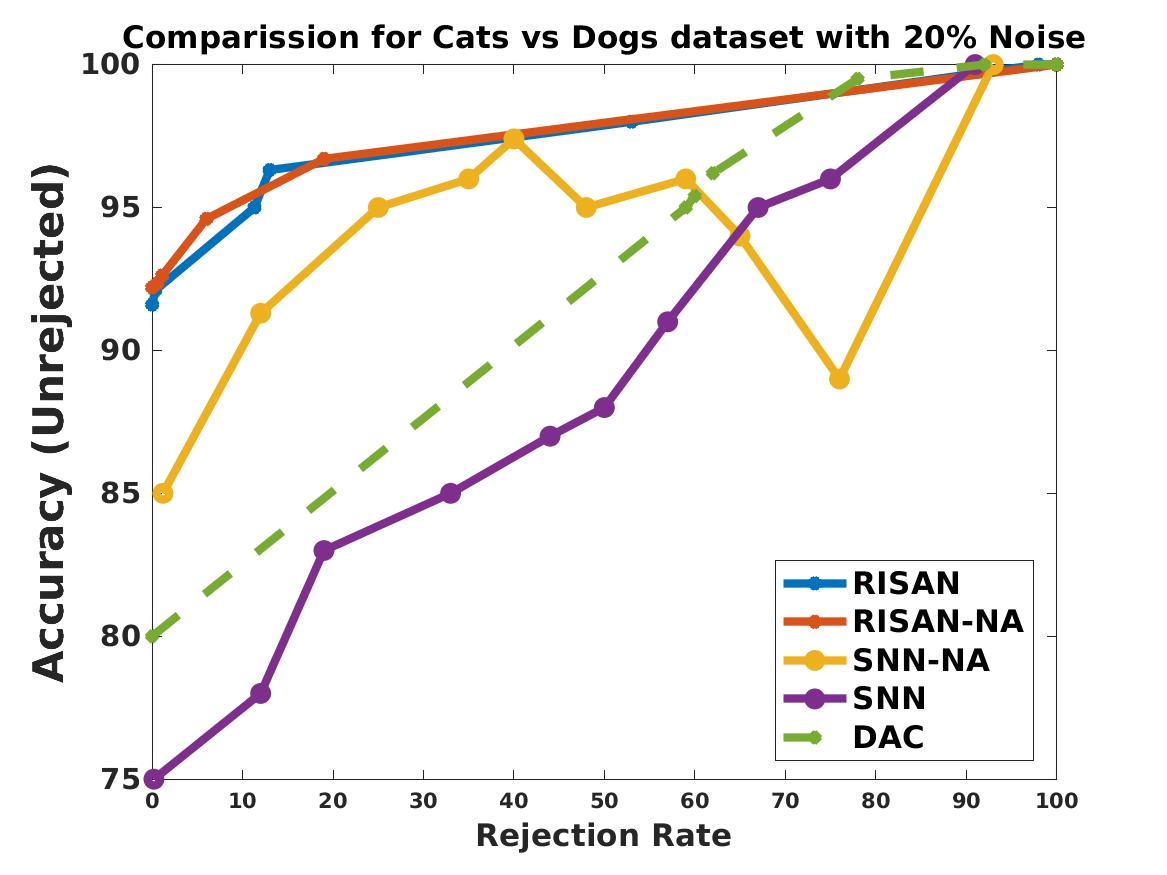}
  \label{fig:catdog-20}} \quad
  \subfloat[Cats vs Dogs with 40\% \\ label noise ]{\includegraphics[width=0.22\textwidth]{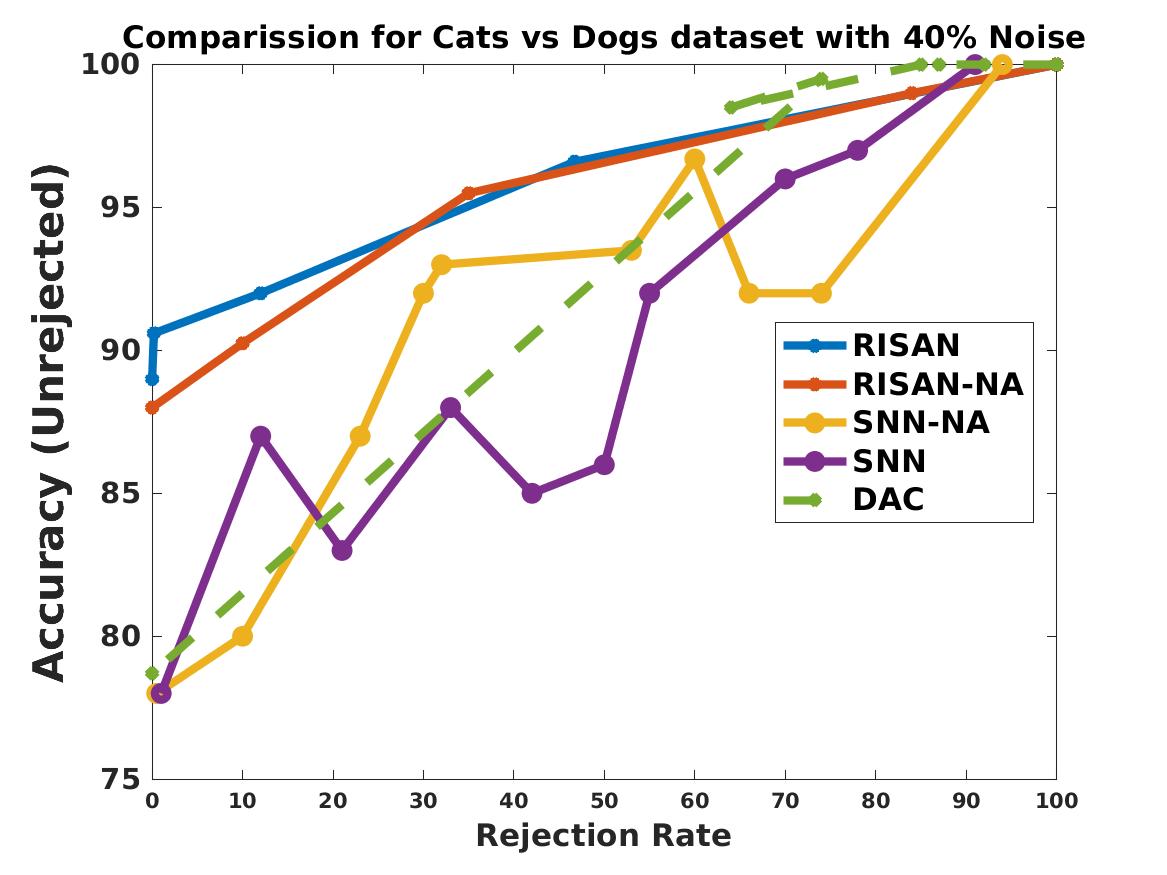}
  \label{fig:catdog-40}} \quad
%   \hfill 
  \subfloat[CIFAR with 20\% \\ label noise ]{\includegraphics[width=0.22\textwidth]{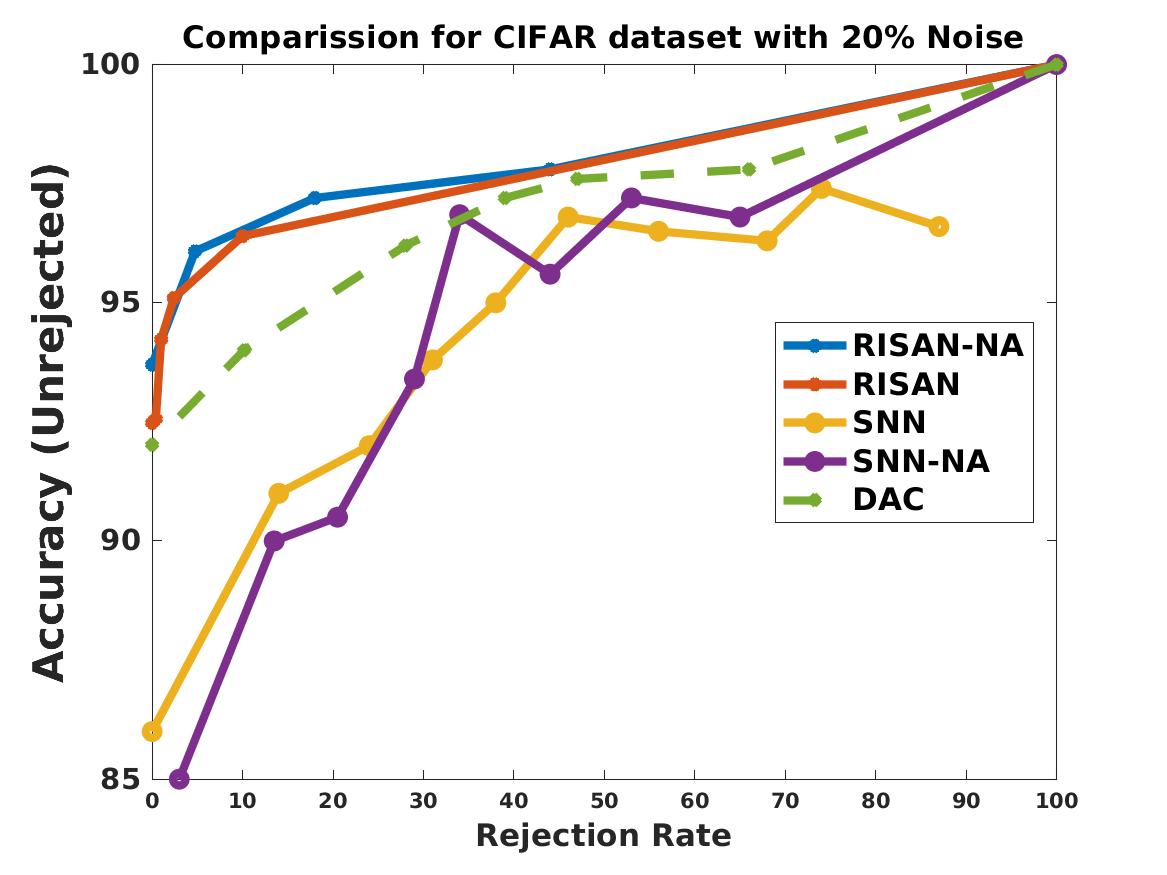}
  \label{fig:CIFAR-20}}
%   \hfill
%   \caption{Ionosphere Results}
\subfloat[CIFAR with 40\% \\ label noise ]{\includegraphics[width=0.22\textwidth]{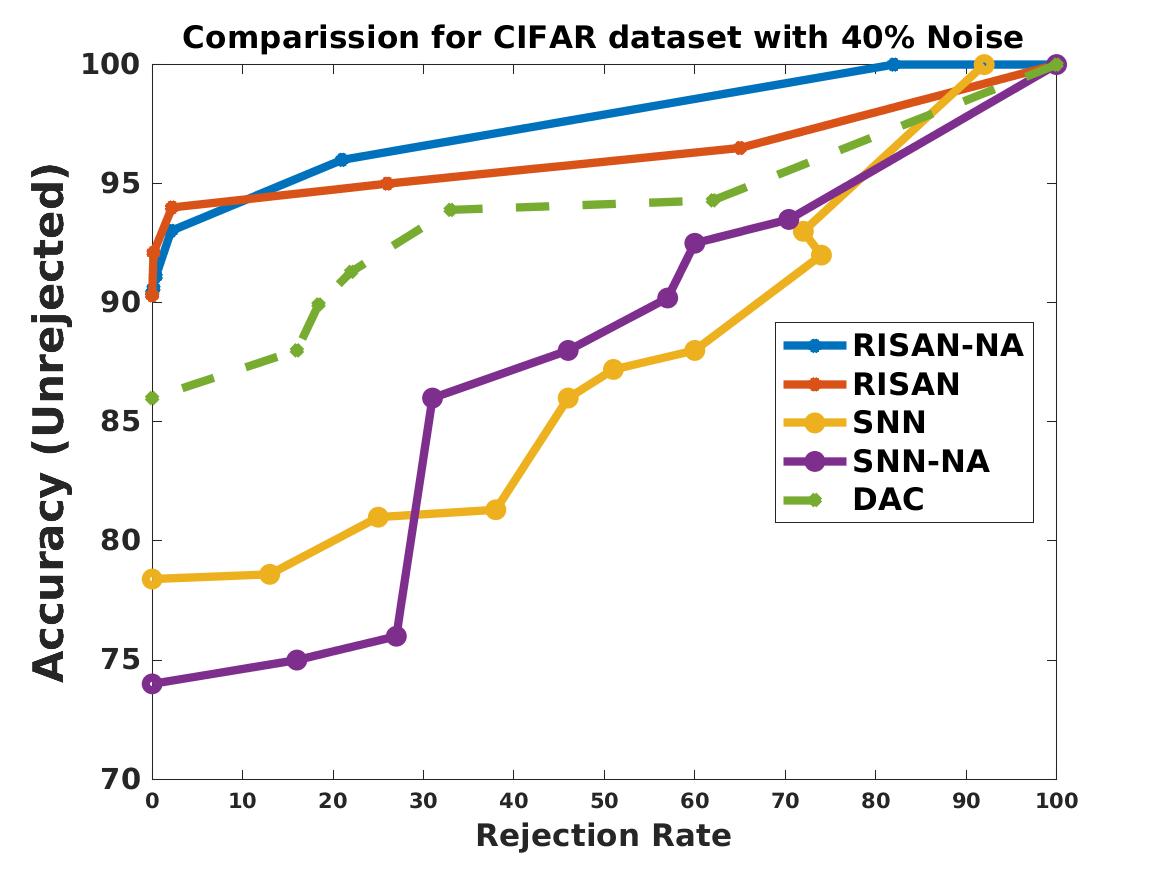}
  \label{fig:CIFAR-40}}
  
\caption{Comparison Results With Label Noise}
\label{fig:Robustness Results}
\end{figure*}

\subsection{Experimental Settings}
We execute experiments on ILPD and Ionosphere datasets in a 10-fold cross-validation fashion for 10 repetitions. We do these for the cost of rejection ($d$) varying from $[0.05,0.5]$ with a step size of 0.05. We monitor the accuracy (on unrejected samples), rejection rate, and the cross-validation risk $(0-d-1)$ for each value of $d$.
% The graphs in Fig. \ref{fig:Ilpd results} show that the double sigmoid loss performs comparably to other state-of-the-art techniques on all metrics for the datasets mentioned above.
The experiments on large datasets compare five algorithms where each one takes a different parameter to introduce rejection. While DAC takes an abstention rate as input parameter, Selective Net (SNN) and SNN-NA take as input a coverage parameter. Here, coverage denotes fraction of points without abstention as the output label by the final trained classifier. For our networks RISAN and RISAN-NA, we have a cost of rejection which depends on the dataset. To get wide range of rejection rate, we choose cost of rejection $d$ parameter for our RISAN and RISAN-NA methods from set $\{ 0.0001,0.005,0.001,0.05,0.01,0.05,0.1,0.15,0.2,0.25,\\0.5 \}$. 
%, supplemented with some larger values of d $([])$.
% similar to the smaller dataset experiments can vary from [0.05,0.5] with step size 0.05. We also supplement the results for $d=0.01$, to show that very high accuracy$(\sim100\%)$ is also achievable for our model. 
Both the abstention rate parameter for DAC and coverage parameter from SNN and SNN-NA are varied from [0.1,1.0] with a step size of 0.1. We plot the rejection rate vs accuracy plots to compare the five methods. The details of architectures and hyperparameters used in the experiments is given in Appendix \ref{sec:arc-details-hyp-selection}.
\subsection{Reproducibility}
	The code for the implementation would be available at \url{https://github.com/kalra20/RISAN-Robust-Instance-Specific-Abstain-Network} 
% where rejection rate for RISAN-NA, RISAN and DAC is the number of rejected samples amongst all samples. But since, SelectiveNet has the coverage parameter, the rejection rate is the amount of uncovered samples. 
% \begin{figure}
% \def\tabularxcolumn#1{m{#1}}
% \begin{tabularx}{\linewidth}{c}
% %
% \begin{tabular}{ccc}
% \subfloat[Epochs vs Validation Loss]{\includegraphics[width=2.5cm]{Results/Faster Convergence/epvsval.jpg}} &
%   \subfloat[Epochs vs Accuracy]{\includegraphics[width=2.5cm]{Results/Faster Convergence/epvsacc.jpg}} & \subfloat[C]{\includegraphics[width=2.5cm]{Results/Faster Convergence/epvsrr.jpg}}
% \end{tabular}
% \end{tabularx}
% \caption{Many figures}\label{foo}
% \end{figure}
% How is our network better
\subsection{Empirical Observations}
In Figure \ref{fig:Ilpd results}, we give results on smaller tabular datasets. We observed that proposed method achieves lower risk on the Ionosphere dataset (Figure~\ref{fig:iono-risk}) and performs comparably on the ILPD dataset except at a couple of points (Figure~\ref{fig:ilpd-risk}). Note that baseline methods on smaller datasets are optimized for small-sized datasets and fail to converge for large dataset. We perform better or comparable to such baseline methods. The proposed algorithm RISAN and RISAN-NA don't suffer from failing-to-converge issue on large datasets and perform comfortably to other neural network based algorithms (Figure~\ref{fig:large-data}).
We also make some interesting observations from results on the larger datasets. Both RISAN and RISAN-NA perform comparably on Cats vs. Dogs and CIFAR dataset with other datasets. However, RISAN performs slightly better than SNN, while RISAN-NA performs better than SNN-NA. This trend is consistent across all the datasets. We do acknowledge a consistent improvement of (1-2\%) in accuracy with the addition of an auxiliary loss.
% However, we made a few critical observations. While the other methods hyper-parameterized by the cost of rejection, such as double ramp loss and double hinge loss, fail to work due to computational restraints even on small dimensional larger-sized datasets such as phishing websites, we were able to generate the results comfortably. 
We also observed that DAC fails to reject any examples for the MNIST dataset where the accuracy is too high ($99.7\%$) for VGG architecture despite complete coverage.
However, the RISAN and RISAN-NA perform better than SNN and SNN-NA while all four maintain a non-zero rejection rate. The fact that RISAN and RISAN-NA opting not to reject more samples even for an extremely small value of $d$ verifies that a cost-based abstain classifier is a more natural choice to learn the classifier than a coverage-based classifier. Since it chooses not to reject samples when the data is well separated, i.e., high accuracy without any rejection. 
This observation prompted the inspection of results on datasets with label noise. 
% This is evident from the graphs in Fig \ref{fig:faster convergence} where we plot and compare the validation loss, the accuracy on unrejected samples, and the rejection rate.
%  These result plots were averaged over five experiments and showcase that RISAN consistently performs comparably to the other baseline methods.  
% \begin{itemize}
%     \item Works on large datasets
%     \item Abstain net fails on MNIST dataset
%     \item Empirically faster convergence
% \end{itemize}
\section{Robustness of RISAN Against Label Noise}
%Experimental Setup
In this section, we show the robustness results of RISAN against uniform label noise.
\paragraph{Experimental Setup:} We use Cats vs. Dogs and CIFAR 10 datasets for showing the robustness of RISAN against label noise. We introduce uniform label noise with a noise rate of $20\%$ and $40\%$. We ran the experiments with identical coverage values for SNN, SNN-NA, and DAC used in large dataset experiments. We used values of $d$ from set $\{0.05, 0.1, 0.15, \ldots, 0.4, 0.45, 0.5 \}$.
% Inference from results
\paragraph{Results:} Results with label noise are shown in Figure~\ref{fig:Robustness Results}. We observe that with 20\% and 40\% label noise rates, RISAN and RISAN-NA performances do not drop much. On the other hand, the other approaches' performances drop significantly with label noise on both datasets. For 20\% label noise and low rejection rate, RISAN and RISAN-NA achieve at least $6-7$\% higher accuracy than other methods on both datasets. For 40\% label noise and low rejection rate, RISAN and RISAN-NA achieve around 10\% higher accuracy on the Cats vs. Dogs dataset and around 5\% higher accuracy on the CIFAR-10 dataset. As \cite{thulasidasan2018knows} claim that their approach (DAC) is robust to noisy labels, proposed algorithm RISAN improves around 5-10\% accuracy on unrejected samples from previously proposed robust learning algorithms. For large rejection rates, models are expected to get good accuracy on unrejected samples because the model is allowed to abstain large fraction of the data.
%As we increase the noise rate to 40\%, we expected the model to put more examples in the rejection region since the probability of misclassification is higher.
%This would lead to an increase in the width of the rejection region.

% model in reject option classification confuses more for classifying the examples, therefore model tries to put more examples in rejection region for smaller values of d. Which leads to increase in width of rejection region. Thus, for smaller values of d, risk is dominated by rejection cost for proposed approach. But as we increase d, cost of re- jection also increases and model in label noise will force examples to classify to one of the label. With increasing noise rate, SDR-SVM remains robust for higher values of d.
\begin{figure}[h]
% \def\tabularxcolumn#1{m{#1}}
% \begin{tabularx}{\linewidth}{@{}cXX@{}}
% \begin{tabular}{cc}
  \centering
  \subfloat[Original  Image]{\includegraphics[width=0.2\textwidth]{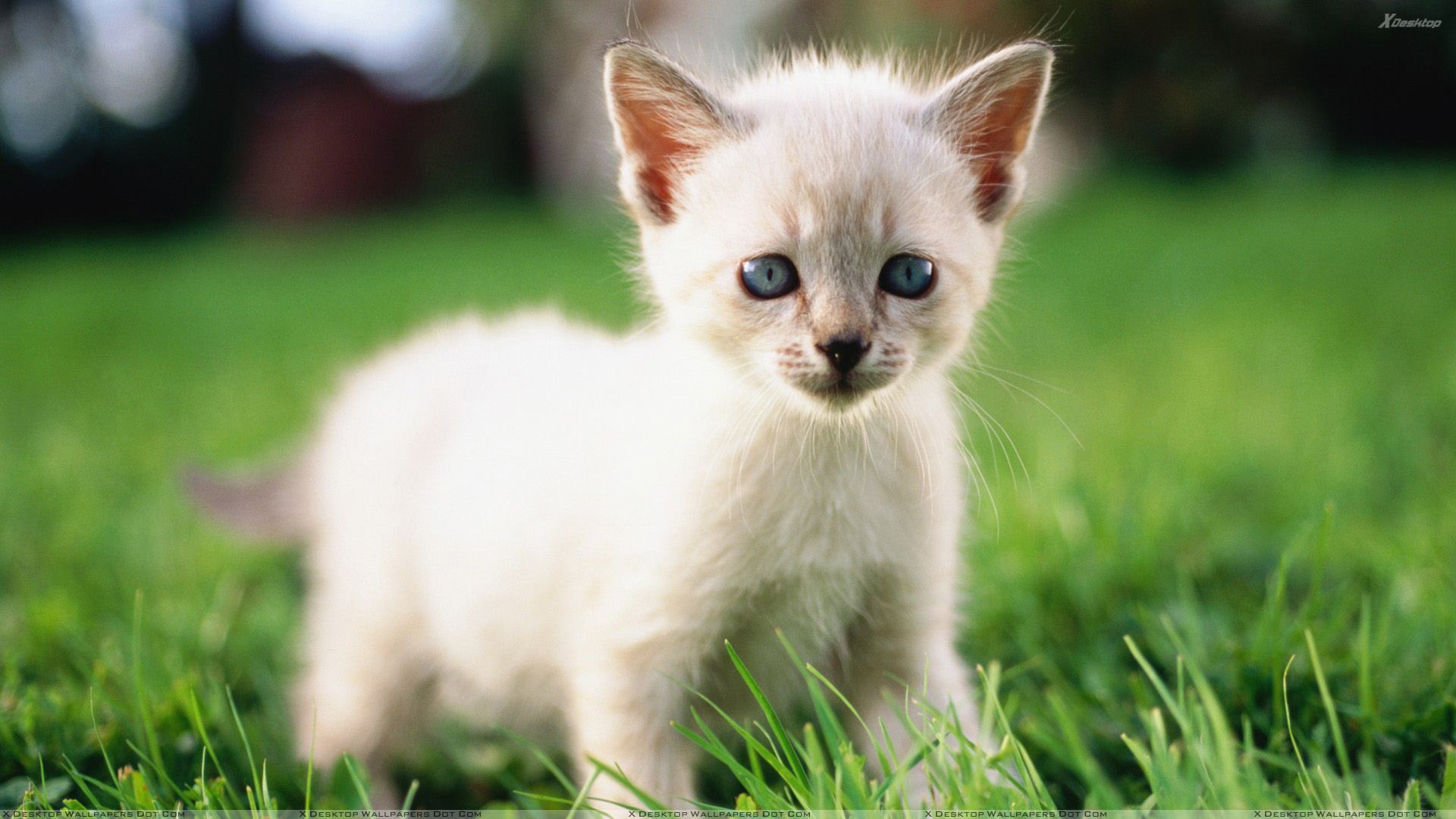}
  \label{fig:catasdog}} 
%   \hfill
% \quad
  \subfloat[Cat features \newline  highlighted]{\includegraphics[width=0.12\textwidth]{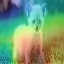}
  \label{fig:catasdog-cat}}  
%   \hfill
%   \quad
  \subfloat[Dog features \newline  highlighted]{\includegraphics[width=0.12\textwidth]{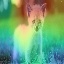}
  \label{fig:catasdog-dog}}
% \end{tabular}
% \caption{GradCAM on image of a cat rejected by RISAN}
% % % \end{tabularx}
% \end{figure}
% % \quad
% \begin{figure}[h]
\\
  \centering
  \subfloat[Original Image]{\includegraphics[width=0.18\textwidth]{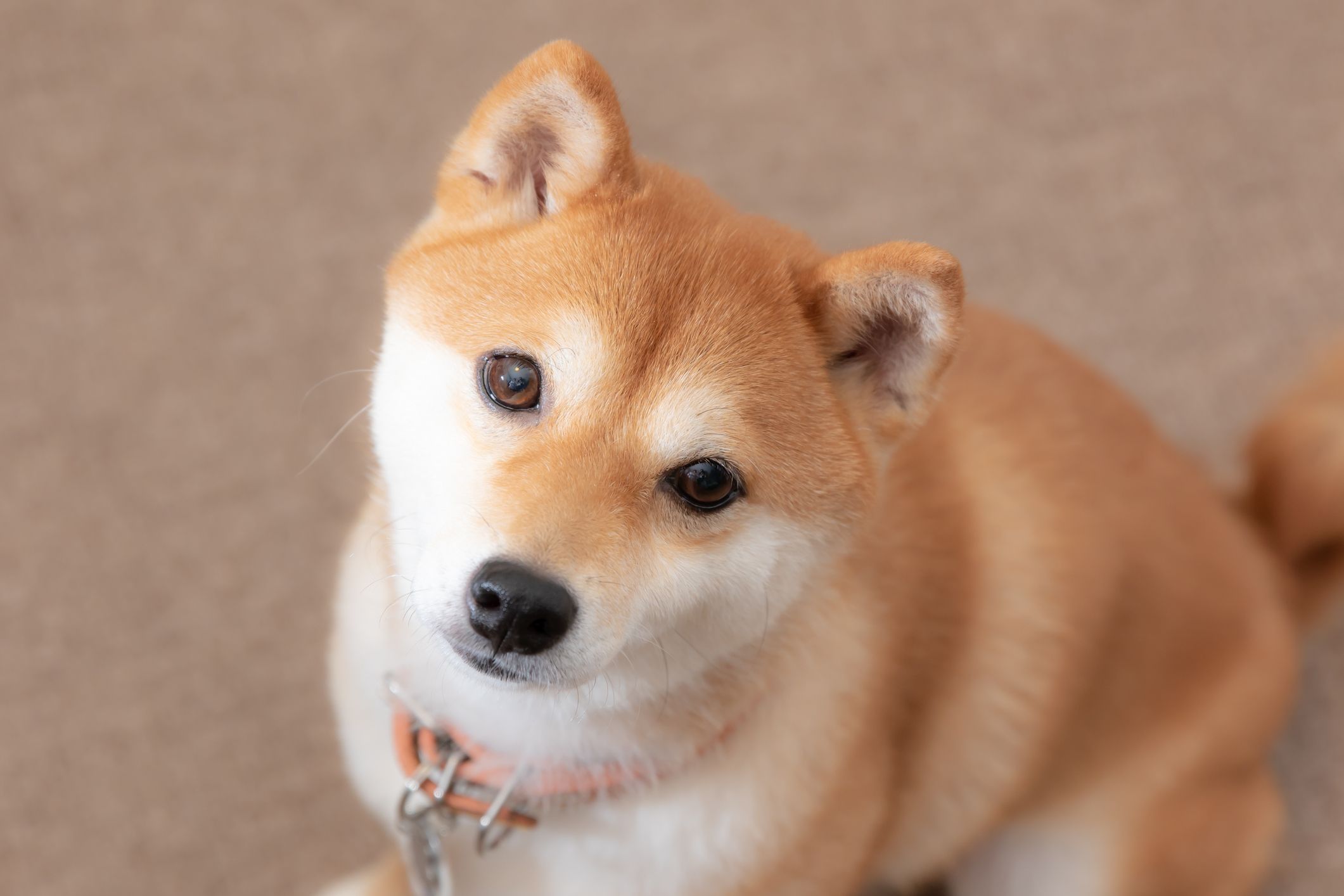}
  \label{fig:dogascat}}
%   \hfill
% \quad
  \subfloat[Cat features \newline highlighted]{\includegraphics[width=0.12\textwidth]{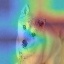}
  \label{fig:dogascat-cat}}
%   \hfill
%   \quad
  \subfloat[Dog features \newline  highlighted]{\includegraphics[width=0.12 \textwidth]{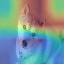}
  \label{fig:dogascat-dog}}
%   \caption{(a)GradCAM on image of a cat rejected by RISAN (b)GradCAM on image of a dog rejected by RISAN}
% \end{figure}
% \begin{figure}[h]
\\
  \centering
  \subfloat[Original Image]{\includegraphics[width=0.15\textwidth]{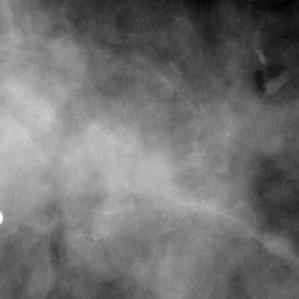}
  \label{fig:cancer-mass}}
%   \hfill
% \quad
  \subfloat[Negative \newline region highlighted]{\includegraphics[width=0.15\textwidth]{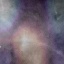}
  \label{fig:cancer-ben}}
%   \hfill
%   \quad
  \subfloat[Positive \newline region highlighted]{\includegraphics[width=0.15 \textwidth]{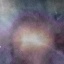}
  \label{fig:cancer-mal}}
  \caption{ GradCAM on image of a cat rejected by RISAN highlights what the network perceives as cat and dog regions in the image(a,b,c). GradCAM on image of a dog rejected by RISAN highlights what the network perceives as cat and dog regions in the image(d,e,f). GradCAM on image of a cancerous development rejected by RISAN highlights what the network perceives as negative and positive regions in the image(g,h,i)}
\end{figure}

\section{Explaining the Rejection Decisions}
In this section we introduced a visualization technique into the abstain network as a post processing step and examined the rejected examples.

\subsection{Representations learnt by RISAN}

In this section, we explored the following hypotheses  about the trained abstain network:
(i) Our network would reject images that contain pertinent features amongst both the classes
(ii) Prediction network will learn features that are more prominent and easily distinguishable for each class
(iii) The prediction network will give lesser precedence to features that are common to both classes. 
% To verify our beliefs,
The implementation details of GradCAM in RISAN have been shifted to the Appendix \ref{sec:GradCAM}.
The GradCAM \cite{selvaraju2017grad} technique was used on the sigmoid outputs of the auxiliary head associated with the prediction network. Thus visualizing features learned by the prediction network to produce highlighted regions corresponding to the image's different classes.
We executed GradCAM on some selected examples that were ambiguous and tough to classify. Our network, as expected, choose to reject these samples. The images used in this task were re-scaled to 64x64 for the network to process the image.
In Fig. \ref{fig:catasdog}, we examined a cat image that could be mistaken for a dog. We observed that the cat's body, especially the legs, were majorly highlighted with reference to the \textit{cat} class in Fig. \ref{fig:catasdog-cat}. It's contrasted by the head region of the subject being highlighted in Fig. \ref{fig:catasdog-dog} with respect to \textit{dog} class. The legs and body region are important features for \textit{cat} class, as will be established in our later conducted experiments. In comparison, the head region of a \textit{dog} is equally important.
% The Fig. \ref{fig:catasdog-cat} highlights important regions in the image with respect the \textit{cat} class, while the Fig. \ref{fig:catasdog-dog} highlights regions in the image with respect to \textit{dog} class.  
We examined another example, a dog in Fig. \ref{fig:dogascat} that can be mistaken for a cat.
% Similar to the cat example we display the highlighted regions with respect to both the classes in Fig. \ref{fig:dogascat-cat} and Fig. \ref{fig:dogascat-dog}. 
We observed that subject's ear and the body is being majorly highlighted with reference to the \textit{cat} class in Fig. \ref{fig:dogascat-cat}. It's contrasted by the head region of the subject being highlighted in Fig. \ref{fig:dogascat-dog} with respect to \textit{dog} class.

In another example, we considered mammography of a malignant mass that's tough to spot and classify in Fig. \ref{fig:cancer-mass}. The network was trained to classify the presence of any irregularities(calcification or mass) in the image as \textit{positive}. We observed that in Fig. \ref{fig:cancer-mal}, the mass (irregular lighter region running through the image diagonally) is being highlighted with respect to the \textit{positive} class. It's contrasted by the larger region highlighted in Fig. \ref{fig:cancer-ben}, containing more surrounding \textit{negative} region, with respect to \textit{negative} class. Though there appears to be an overlap of highlighted regions, \textit{negative} class region focuses more on the surroundings of the mass while \textit{positive} class focuses more on the mass itself. But since features from both classes are present in the image, it's a good candidate for rejection.
Hence, we observed our prediction network highlighted pertinent features corresponding to each class found in the images and chose to reject these examples.
% How is it useful to medical experts?
% A rejected example would contain features representing both the classes, so when a doctor would look at these rejected images, he can further investigate the diagnosis by conducting more tests.

To verify our second and third assumptions, we then chose an interesting example of an animal, where a dog's body and head with a cat's legs and tail were infused. To compare and analyze our network's learned features, we also trained a separate network (CCEN) with categorical cross-entropy loss. We executed GradCAM on both the networks to compare the resulting highlighted regions in the images. We observed that while our network rejected the image, CCEN predicted the \textit{dog} class. When we examined the highlighted regions corresponding to different classes, we saw that in Fig. \ref{fig:catdog-1-cat} and Fig. \ref{fig:cat-dog-1-aux-cat} both networks chose to highlight the cat's legs fairly well in reference to \textit{cat} class. This is also coherent with our previous analysis of features highlighted for the \textit{cat} class.
However, when we analyzed Fig. \ref{fig:catdog-1-dog}, our network gave attention to the dog's body and head and less attention to the animal's legs. Whereas, as seen in Fig. \ref{fig:cat-dog-1-aux-dog}, CCEN pays attention to the animal's body and legs for \textit{dog} class. This holds with our belief that when rejecting, features corresponding to non-similar regions would get more attention and help make decisions only when the classifier is extremely certain.

\begin{figure}[h]
  \centering
  \subfloat[Original Image]{\includegraphics[width=0.18\textwidth]{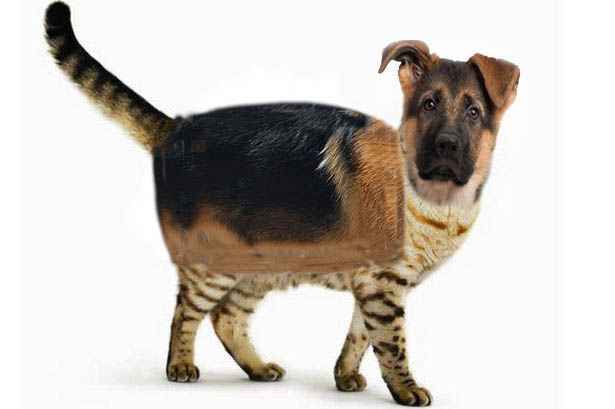}
  \label{fig:catdog-1}}
%   \hfill
% \quad
  \subfloat[Cat features \newline highlighted]{\includegraphics[width=0.12\textwidth]{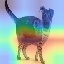}
  \label{fig:catdog-1-cat}} 
%   \hfill
\quad
  \subfloat[Dog features \newline highlighted]{\includegraphics[width=0.12\textwidth]{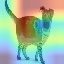}
  \label{fig:catdog-1-dog}}
%   \caption{Grad-cam on image of infused cat and dog for RISAN}
% \end{figure}
% \begin{figure}[h]
%   \centering
\quad
  \subfloat[Original Image]{\includegraphics[width=0.18\textwidth]{Rejected Images/dogh-catb5.jpg}
  \label{fig:cat-dog-1-aux}}
%   \hfill
% \quad
  \subfloat[Cat features \newline highlighted]{\includegraphics[width=0.12\textwidth]{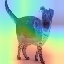}
  \label{fig:cat-dog-1-aux-cat}}
%   \hfill
\quad
  \subfloat[Dog features \newline highlighted ]{\includegraphics[width=0.12\textwidth]{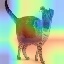}
  \label{fig:cat-dog-1-aux-dog}}
  \caption{The GradCAM on image of a cat's lower body and tail infused with a dog's upper body and head showcase the contrasting features learnt by RISAN (a,b,c) and CCEN (d,e,f)}
\end{figure}
\section{Conclusion and Future Work}
We introduced a novel implementation of double sigmoid loss in a deep neural network setting, RISAN for binary classification. We established the statistical properties of double sigmoid loss function such as classification calibration and excess risk bound. We also derived the generalization error bounds for input independent RISAN. We then demonstrated the various architectures and how each can be utilized for varied sized datasets. We also show that RISAN performs competitively to other state of the art shallow and deep neural network methods, SelectiveNet and Deep Abstaining Classifier in absence of noise but gains significant advantage when the data becomes noisy. We were also able to visualize the highlighted regions for corresponding classes in images and make inferences about rejected images. The results motivates the use of RISAN in applications where cost of misclassification is extremely high.

We leave a number of issues for future research such as extending the proposed method from binary classification to multiclass classification. Also, the study of representations learnt by other abstain neural networks and how they compare to RISAN is also an open future direction.

% \begin{contributions} % will be removed in pdf for initial submission,
%                       % so you can already fill it to test with the
%                       % ‘accepted’ class option
%     Briefly list author contributions.
%     This is a nice way of making clear who did what and to give proper credit.

%     H.~Q.~Bovik conceived the idea and wrote the paper.
%     Coauthor One created the code.
%     Coauthor Two created the figures.
% \end{contributions}

% \begin{acknowledgements} % will be removed in pdf for initial submission,
%                          % so you can already fill it to test with the
%                          % ‘accepted’ class option
%     Briefly acknowledge people and organizations here.

%     \emph{All} acknowledgements go in this section.
% \end{acknowledgements}

\bibliography{kalra_590}

% \appendix
% % NOTE: necessary when ptmx or no mathfont class option is given
% \providecommand{\upGamma}{\Gamma}
% \providecommand{\uppi}{\pi}
% \section{Math font exposition}
% How math looks in equations is important:
% \begin{equation*}
%   F_{\alpha,\beta}^\eta(z) = \upGamma(\tfrac{3}{2}) \prod_{\ell=1}^\infty\eta \frac{z^\ell}{\ell} + \frac{1}{2\uppi}\int_{-\infty}^z\alpha \sum_{k=1}^\infty x^{\beta k}\mathrm{d}x.
% \end{equation*}
% However, one should not ignore how well math mixes with text:
% The frobble function \(f\) transforms zabbies \(z\) into yannies \(y\).
% It is a polynomial \(f(z)=\alpha z + \beta z^2\), where \(-n<\alpha<\beta/n\leq\gamma\), with \(\gamma\) a positive real number.
\clearpage

% \newpage 
\appendix
\section{Appendix}
\setcounter{theorem}{0}
% \ref{fig:2D classifier}
\subsection{Proof of Theorem ~\ref{THM:CLASSCALIB}}
\label{Appendix:A1}
% The double sigmoid loss function is given by,
% \begin{equation}
%     L_{ds}(f, \rho) = 2d\sigma(z - \rho) + 2(1 - d)\sigma(z + \rho) \nonumber   
% \end{equation}
% \newtheorem{theorem}{Theorem}
\begin{theorem}
For a fixed cost of rejection d, the risk under double sigmoid loss is minimized by the generalized Bayes classifier  $f_{d}^{∗}(.)$
\end{theorem}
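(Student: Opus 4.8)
The plan is to minimize $R_{ds}$ pointwise. Writing $\eta=\eta(\mathbf{x})=P(y=1\mid\mathbf{x})$ and using the law of iterated expectations,
\[
R_{ds}(f,\rho)=\mathbb{E}_{\mathbf{x}}\Big[\eta\,L_{ds}(f(\mathbf{x}),\rho(\mathbf{x}))+(1-\eta)\,L_{ds}(-f(\mathbf{x}),\rho(\mathbf{x}))\Big],
\]
and since the bracketed conditional risk $C_\eta(f,\rho):=\eta\,L_{ds}(f,\rho)+(1-\eta)\,L_{ds}(-f,\rho)$ depends on $(f,\rho)$ only through their values at $\mathbf{x}$, it suffices to minimize $C_\eta$ over $f\in\mathbb{R}$, $\rho\ge 0$ for each fixed $\eta\in[0,1]$ and then verify that the minimizing configuration induces the decision $g(f,\rho)=f_d^*(\mathbf{x})$ of Eq.~(\ref{eq:generalized-bayes-classifier}).

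The one genuinely computational step is to expand $L_{ds}$ and use the identity $\sigma(a)+\sigma(-a)=1$ to collapse $C_\eta$ to the compact form
\[
C_\eta(f,\rho)=2\big(\eta-(1-d)\big)\,\sigma(f-\rho)+2(\eta-d)\,\sigma(f+\rho)+2(1-\eta).
\]
The coefficients $\eta-(1-d)$ and $\eta-d$ are exactly what make the Bayes thresholds $1-d$ and $d$ surface.

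Next I would use that $\sigma$ is strictly decreasing with range $(0,1)$ and that, as $(f,\rho)$ runs over $\mathbb{R}\times\mathbb{R}_+$, the pair $(f-\rho,\,f+\rho)$ runs over $\{(u,v):v\ge u\}$; hence minimizing $C_\eta$ just means pushing $\sigma(f-\rho)$ and $\sigma(f+\rho)$ toward $0$ or $1$ according to the signs of their coefficients, subject to $f+\rho\ge f-\rho$. Using $d<\tfrac12<1-d$, split into three cases. If $\eta>1-d$ both coefficients are positive, so the infimum of $C_\eta$ is $2(1-\eta)$, approached with $\rho\to 0$, $f\to+\infty$, giving $f>\rho$ and decision $+1$. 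If $\eta<d$ both are negative, the infimum is $2\eta$, approached with $\rho\to 0$, $f\to-\infty$, giving $f<-\rho$ and decision $-1$. If $d\le\eta\le 1-d$ the first coefficient is $\le 0$ and the second $\ge 0$, so the infimum is $2d$, approached with $f$ fixed and $\rho\to+\infty$ (consistent with $\rho\ge 0$), giving $|f|\le\rho$ and the reject decision; a short comparison of the per-region infima (which work out to $1+d-\eta$ on the ``$+1$'' region and $\eta+d$ on the ``$-1$'' region, both $\ge 2d$) confirms that no other decision does better. In every case the minimizing configuration of $(f,\rho)$ induces exactly $f_d^*(\mathbf{x})$, and since this holds for almost every $\mathbf{x}$, the decision rule minimizing $R_{ds}$ is $f_d^*$.

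The point to treat carefully in the write-up is that, as for any smooth margin surrogate, the infimum of $C_\eta$ is \emph{not} attained at finite $(f,\rho)$; so ``$R_{ds}$ is minimized by $f_d^*$'' must be read as: $\inf_{f,\rho}R_{ds}(f,\rho)=\mathbb{E}_{\mathbf{x}}[\min\{2(1-\eta(\mathbf{x})),\,2\eta(\mathbf{x}),\,2d\}]$ and every minimizing sequence $(f_k,\rho_k)$ eventually induces the generalized Bayes decision. The boundary cases $\eta\in\{d,1-d\}$ produce ties, exactly mirroring the indifference already built into the definition of $f_d^*$. This is the main (mild) obstacle; the remainder is the bookkeeping across the three cases.
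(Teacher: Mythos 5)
Your proposal is correct, and it reaches the conclusion by a genuinely different and more elementary route than the paper. The paper fixes $\rho$, writes the conditional risk $r_\eta(z)$ in terms of $K=\tanh(z/2)$ and $\zeta=\tanh(\rho/2)$, differentiates, classifies the stationary points (including the finite interior minimizers $K_{1},K_{2}$), and then compares the candidate minima via several algebraic inequalities; this yields an explicit closed form for the optimal conditional risk $H(\eta)$ at each fixed $\rho$, which is then reused verbatim in the proof of the excess-risk bound (Theorem~2). You instead reparametrize $(f,\rho)\mapsto(u,v)=(f-\rho,f+\rho)$ with $v\ge u$, collapse the conditional risk to $2(\eta-(1-d))\sigma(u)+2(\eta-d)\sigma(v)+2(1-\eta)$, and read off the infimum directly from the signs of the two coefficients — the thresholds $d$ and $1-d$ appear with no calculus at all, and your cross-region comparison ($1+d-\eta$ and $\eta+d$ versus $2d$) is a clean substitute for the paper's inequality \eqref{eq:cc-1}. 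What you give up is exactly what you flag: your infimum over $(f,\rho)$ jointly is attained only along sequences with $|f|\to\infty$ or $\rho\to\infty$, so you must interpret ``minimized by $f_d^*$'' in terms of the decisions induced by minimizing sequences, whereas the paper's fixed-$\rho$ analysis produces a finite minimizer $z^*\in[-\rho,\rho]$ and the quantitative $H(\eta)$ needed downstream. Both readings are legitimate for the theorem as stated, and your handling of the non-attainment and of the ties at $\eta\in\{d,1-d\}$ is careful; just note that if you adopted your argument in the paper you would still need the paper's explicit $H(\eta)$ computation to prove Theorem~2.
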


\begin{proof}
The generalized bayes discriminant for reject option classifier (0-d-1 loss) is defined as \begin{equation}
    f^{*}_{d}(x) = 1\cdot\mathbb{I}_{\eta(x)>1-d}+0\cdot\mathbb{I}_{d\leq\eta(x)\leq1-d}-1\cdot\mathbb{I}_{\eta(x)<d}
\end{equation}
and the risk for double sigmoid loss is defined as, 
\begin{equation}
    R_{ds}(f,\rho) = 
    % \mathbb{E}_{\mathbf{x}}
    \mathbb{E}\left[L_{ds}(y f(\mathbf{x}), \rho)\right] \nonumber
\end{equation}
If $r_{\eta}(z)=\mathbb{E}_{y \mid \mathbf{x}}\left[L_{d r}(y f(\mathbf{x}), \rho)\right]$ and $z=f(\mathbf{x})$. Then
\begin{equation}
    r_{\eta}(z) = \eta L_{ds}(z, \rho) + (1-\eta)L_{ds}(-z,\rho) \nonumber
\end{equation}
or
\begin{equation}
    r_{\eta}(z) = 2(1-\eta) + (2\eta - 2d)\sigma(z+\rho) + 2(\eta + d - 1)\sigma(z-\rho) \nonumber
\end{equation}
This can also be written as,
\begin{align}
\label{eq:r-eta}
    r_{\eta}(z) = 2(1-\eta) + (\eta - d)\bigg(1-tanh\bigg(\frac{z+\rho}{2}\bigg)\bigg) \nonumber\\
    + (\eta + d - 1)\bigg(1-tanh\bigg(\frac{z-\rho}{2}\bigg)\bigg) 
\end{align}
where $\rho$ is the rejection parameter, $d$ is the cost of rejection and $\eta= P(Y=1|X)$. 
We observe that the function $r_{\eta}(z)$ can take different values corresponding to the value of parameter $\eta$. The parameter $\eta$ can be majorly broken into 3 intervals for the reject option classifier, $\eta\in[0,d]$, $\eta\in[d,1-d]$ and $\eta\in[1-d,1]$. 
Thus we study, $r_{\eta}(z)$ in these 3 intervals.
% In different cases as in eq. (\ref{eq:r-eta}) and thus doesn't have one unique minimum and correspondingly a unique value of $z^{*}$.
% While the parameter $z$ can obtain values which are divided into 3 intervals,  
%%-----edit-----
% 
To find the minima of equation \ref{eq:r-eta}, we take it's derivative w.r.t. $z$.
First we expand \ref{eq:r-eta} using,
\begin{equation*}
    tanh(A\pm B) = \frac{tanh(A) \pm tanh(B)}{1 \pm tanh(A)tanh(B)}
\end{equation*}
with $A=\frac{\text{z}}{2}$ and $B=\frac{\rho}{2}$.
Further on differentiating w.r.t z we get,
\begin{equation}
    (K^{2}-1)(1-\zeta^{2})\frac{(2\eta-1)K
    ^{2}\zeta^{2}+(4d-2)K\zeta+2\eta-1}{\left(1-K^{2}\zeta^{2}\right)^{2}} 
    \label{eq:derivative}
\end{equation}
where $K=tanh(\frac{z}{2})$, $\zeta=tanh(\frac{\rho}{2})$.
We equate the derivative in equation \ref{eq:derivative} to $0$ and find solutions for K as $\pm1$ from $K^2-1=0$ and $\frac{(1-2d)\pm\sqrt{(1-2d)^{2}-(2\eta-1)^{2}}}{(2\eta-1)\zeta}$ from
\begin{equation*}
    (2\eta-1)K^{2}\zeta^{2}+(4d-2)K\zeta+2\eta-1
\end{equation*}

The numerator of eqn \ref{eq:derivative} contains two quadratic equations. We check if minima exists at $K=\pm1$ by taking the second derivative of eqn \ref{eq:derivative} and evaluating the sign at $K=\pm1$. We observe that the second derivative is positive hence minima exists for both the values.

However, for the roots $K = \frac{(1-2d)\pm\sqrt{(1-2d)^{2}-(2\eta-1)^{2}}}{(2\eta-1)\zeta}$ we look at the curve of the quadratic which yields the two solutions. We observe that the curve is opening upwards curve when $2\eta-1<0$ and opening downwards when $2\eta-1>0$ because $K^{2}-1\leq0$. 

These curves suggest $K_{1}=\frac{(1-2d)+\sqrt{(1-2d)^{2}-(2\eta-1)^{2}}}{(2\eta-1)\zeta}$ is the minima when $2\eta-1<0$ since the slope for $r_{\eta}(z)$ changes from negative to positive at $K_{1}$. Similarly, $K_{2}=\frac{(1-2d)-\sqrt{(1-2d)^{2}-(2\eta-1)^{2}}}{(2\eta-1)\zeta}$ is the minima when $2\eta-1>0$.

% Further, we observe that $K=\pm1$ and $K=\frac{(1-2d)-\sqrt{(1-2d)^{2}-(2\eta-1)^{2}}}{(2\eta-1)\zeta}$ are the minimas for $r_{\eta}(z)$. This can be affirmed by checking the sign of the second derivative of eq. \ref{eq:r-eta} for $K\pm1$. However, for $K = \frac{(1-2d)-\sqrt{(1-2d)^{2}-(2\eta-1)^{2}}}{(2\eta-1)\zeta}$, we check the signs of the derivative eq. \ref{eq:derivative} in the regions $K-\epsilon$ and $K+\epsilon$ where $\epsilon$ is a very small value. The sign changes from negative to positive suggesting a minima.

Thus $r_{\eta}^{*}(z)$ would be, 
\begin{align}
    r_{\eta}^{*}(z) = \min\begin{cases}
    2\eta \\
    1-(\eta-d)\left(\frac{T_{1}+\zeta^{2}(2\eta-1)}{(2\eta-1)\zeta+T_{1}\zeta}\right) \\ \quad -(\eta+d-1)\left(\frac{T_{1}-\zeta^{2}(2\eta-1)}{(2\eta-1)\zeta-T_{1}\zeta}\right)
    \\
    1-(\eta-d)\left(\frac{T_{2}+\zeta^{2}(2\eta-1)}{(2\eta-1)\zeta+T_{2}\zeta}\right) \\ \quad -(\eta+d-1)\left(\frac{T_{2}-\zeta^{2}(2\eta-1)}{(2\eta-1)\zeta-T_{2}\zeta}\right) \label{eqn:complex} \\
    2(1-\eta) 
    \end{cases} \nonumber
\end{align}
%------edit----
% \begin{numcases}
% {r_{\eta}(z) =} = 1 & 2 \\
% 2 & 3
% \end{numcases}
% ---edit----
where $T_{1}=(1-2d)+\sqrt{(1-2d)^{2}-(2\eta-1)^{2}}$ and $T_{2}=(1-2d)-\sqrt{(1-2d)^{2}-(2\eta-1)^{2}}$.

 Moreover the complex roots $K_{1}$ and $K_{2}$ are real only when 
\begin{equation}
    (1-2d)^{2}-(2\eta-1)^{2}\geq0 \nonumber
\end{equation}
Therefore, the solutions $K_{1}$ and $K_{2}$ are real only when $d\leq\eta\leq1-d$.

Thus, when $\eta<d$, we have two candidates for minimum value. And we realise $2\eta$ is the minimum value since $2\eta \leq 2(1-\eta)$ and $\eta < d \leq 0.5$.
Similarly when $\eta>1-d$, we have two candidates for minimum value. And we observe, $2(1-\eta)$ would be the minimum value since $2(1-\eta)\leq2\eta$.

However, when $\eta\in[d,0.5]$ we have 3 candidates for minima
$2\eta$, $2(1-\eta)$ and \\ $1-(\eta-d)\left(\frac{T_{1}+\zeta^{2}(2\eta-1)}{(2\eta-1)\zeta+T_{1}\zeta}\right)-(\eta+d-1)\left(\frac{T_{1}-\zeta^{2}(2\eta-1)}{(2\eta-1)\zeta-T_{1}\zeta}\right)$. 
\textbf{Note}: Even though $1-(\eta-d)\left(\frac{T_{2}+\zeta^{2}(2\eta-1)}{(2\eta-1)\zeta+T_{2}\zeta}\right)-(\eta+d-1)\left(\frac{T_{2}-\zeta^{2}(2\eta-1)}{(2\eta-1)\zeta-T_{2}\zeta}\right)$ is a minima, it's only a minima when $2\eta-1>0$.

So we first show that, 
\begin{align}
    2\eta \geq 1-&(\eta-d)\left(\frac{T_{1}+\zeta^{2}(2\eta-1)}{(2\eta-1)\zeta+T_{1}\zeta}\right) \nonumber\\   &-(\eta+d-1)\left(\frac{T_{1}-\zeta^{2}(2\eta-1)}{(2\eta-1)\zeta-T_{1}\zeta}\right) \nonumber
\end{align}
Which can be rewritten  and compared as, 
% \begin{align}
%     &2(1-\eta) + (\eta-d)(2) + (\eta+d-1)(2) 
%     \geq 2(1-\eta) + \nonumber \\  & \hspace{2 cm} (\eta-d)\left(1-\frac{T+\epsilon^{2}(2\eta-1)}{(2\eta-1)\epsilon+T\epsilon}\right) +(\eta+d-1)\left(1-\frac{T-\epsilon^{2}(2\eta-1)}{(2\eta-1)\epsilon-T\epsilon}\right) \nonumber\\
%     &0\geq (\eta-d)\left(-1-\frac{T+\epsilon^{2}(2\eta-1)}{(2\eta-1)\epsilon+T\epsilon}\right) + (\eta+d-1)\left(-1-\frac{T-\epsilon^{2}(2\eta-1)}{(2\eta-1)\epsilon-T\epsilon}\right) \nonumber\\
%     &=(\eta-d)\left(\frac{-T(1+\epsilon)-(2\eta-1)\epsilon(\epsilon+1)}{(2\eta-1)\epsilon+T\epsilon}\right)+\nonumber \\&\hspace{1cm} (\eta+d-1)\left(\frac{T(\epsilon-1)+(2\eta-1)\epsilon(\epsilon-1)}{(2\eta-1)\epsilon-T\epsilon}\right) \nonumber\\
%     &=-(\eta-d)\left(\frac{(T+(2\eta-1)\epsilon)(1+\epsilon)}{((2\eta-1)+T)\epsilon}\right)-(\eta+d-1)\left(\frac{(T+(2\eta-1)\epsilon)(1-\epsilon)}{((2\eta-1)-T)\epsilon}\right) \label{eq:cc-1} 
% \end{align}
 
\begin{align*}
    2(1-\eta) +& (\eta-d)(2) + (\eta+d-1)(2) 
    \geq \\  &2(1-\eta) + (\eta-d)\left(1-\frac{T_{1}+\zeta^{2}(2\eta-1)}{(2\eta-1)\zeta+T_{1}\zeta}\right)\\ +&(\eta+d-1)\left(1-\frac{T_{1}-\zeta^{2}(2\eta-1)}{(2\eta-1)\zeta-T_{1}\zeta}\right)
\end{align*}
\begin{align}
    &0\geq (\eta-d)\left(-1-\frac{T_{1}+\zeta^{2}(2\eta-1)}{(2\eta-1)\zeta+T_{1}\zeta}\right)\nonumber \\&\hspace{1 cm}+ (\eta+d-1)\left(-1-\frac{T_{1}-\zeta^{2}(2\eta-1)}{(2\eta-1)\zeta-T_{1}\zeta}\right)\nonumber\\
    &=(\eta-d)\left(\frac{-T_{1}(1+\zeta)-(2\eta-1)\zeta(\zeta+1)}{(2\eta-1)\zeta+T_{1}\zeta}\right)+\nonumber \\&\hspace{1cm} (\eta+d-1)\left(\frac{T_{1}(\zeta-1)+(2\eta-1)\zeta(\zeta-1)}{(2\eta-1)\zeta-T_{1}\zeta}\right) \nonumber\\
    &=\left(\frac{T_{1}\left((1-2d)T_{1}-(2\eta-1)^{2}\right)}{(2\eta-1)^{2}\zeta-T_{1}^{2}\zeta}\right)\nonumber
    \\&\hspace{1 cm}+\left(\frac{(2\eta-1)^{2}(\zeta)^{2}(T_{1}+(2d-1))}{(2\eta-1)^{2}\zeta-T_{1}^{2}\zeta}\right)+\nonumber
    \\&\hspace{2cm}\left(\frac{(2\eta-1)\zeta((2\eta-1)^{2}-T_{1}^{2})}{(2\eta-1)^{2}\zeta-T_{1}^{2}\zeta}\right) 
    \label{eq:cc-1} 
\end{align}
We also find further relationship between $2\eta-1$, $T_{1}$ and $T_{2}$ based on the value of $\eta$. We observe that $(2\eta-1)^{2}\leq T_{1}^{2}$ when $d<\eta\leq0.5$ and $(2\eta-1)^{2}\geq T_{2}^{2}$ when $0.5<\eta\leq 1-d$.
Using these facts we can see that equation \ref{eq:cc-1} holds true, even at maximum value of $\zeta=1$, for $\eta<0.5$.

% Similarly, we compare, $2(1-\eta)$ and $1-(\eta-d)\left(\frac{T+\zeta^{2}(2\eta-1)}{(2\eta-1)\zeta+T\zeta}\right)  -(\eta+d-1)\left(\frac{T-\zeta^{2}(2\eta-1)}{(2\eta-1)\zeta-T\zeta}\right)$,
Since $\eta<0.5$, $2\eta<2(1-\eta)$ and hence
$1-(\eta-d)\left(\frac{T_{2}+\zeta^{2}(2\eta-1)}{(2\eta-1)\zeta+T_{2}\zeta}\right)  -(\eta+d-1)\left(\frac{T_{2}-\zeta^{2}(2\eta-1)}{(2\eta-1)\zeta-T_{2}\zeta}\right)$ 
 is the minimum value of $r_{\eta}(z)$ when $d\leq\eta<0.5$.

Due to the symmetry of $r_{\eta}(z)$ we can similarly show that $1-(\eta-d)\left(\frac{T_{2}+\zeta^{2}(2\eta-1)}{(2\eta-1)\zeta+T_{2}\zeta}\right)  -(\eta+d-1)\left(\frac{T_{2}-\zeta^{2}(2\eta-1)}{(2\eta-1)\zeta-T_{2}\zeta}\right)$ is the minimum when $0.5<\eta\leq1-d$ by comparing it with $2(1-\eta)$. Since $2\eta>2(1-\eta)$ for $0.5<\eta\leq1-d$, $1-(\eta-d)\left(\frac{T_{2}+\zeta^{2}(2\eta-1)}{(2\eta-1)\zeta+T_{2}\zeta}\right)  -(\eta+d-1)\left(\frac{T_{2}-\zeta^{2}(2\eta-1)}{(2\eta-1)\zeta-T_{2}\zeta}\right)$ is the minimum value of $r_{\eta}(z)$ when $0.5<\eta\leq1-d$.
% \begin{equation}
%     2(1-\eta) \geq 1-(\eta-d)\left(\frac{T+\zeta^{2}(2\eta-1)}{(2\eta-1)\zeta+T\zeta}\right)  -(\eta+d-1)\left(\frac{T-\zeta^{2}(2\eta-1)}{(2\eta-1)\zeta-T\zeta}\right) \nonumber
% \end{equation}
% Which can be rewritten  and compared as, 
% \begin{align}
%     &2(1-\eta) + (\eta-d)(0) + (\eta+d-1)(0) 
%     \geq 2(1-\eta)+ \nonumber \\
%     &\hspace{2 cm}(\eta-d)\left(1-\frac{T+\zeta^{2}(2\eta-1)}{(2\eta-1)\zeta+T\zeta}\right) +(\eta+d-1)\left(1-\frac{T-\zeta^{2}(2\eta-1)}{(2\eta-1)\zeta-T\zeta}\right) \nonumber\\
%     &0\geq (\eta-d)\left(1-\frac{T+\zeta^{2}(2\eta-1)}{(2\eta-1)\zeta+T\zeta}\right) + (\eta+d-1)\left(1-\frac{T-\zeta^{2}(2\eta-1)}{(2\eta-1)\zeta-T\zeta}\right) \nonumber\\
%     &=(\eta-d)\left(\frac{-T(1-\zeta)+(2\eta-1)\zeta(1-\zeta)}{(2\eta-1)\zeta+T\zeta}\right)+\nonumber \\
%     &\hspace{4 cm}(\eta+d-1)\left(\frac{-T(1+\zeta)+(2\eta-1)\zeta(1+\zeta)}{(2\eta-1)\zeta-T\zeta}\right) \nonumber\\
%     &=(\eta-d)\left(\frac{((2\eta-1)\zeta-T)(1-\zeta)}{((2\eta-1)+T)\zeta}\right)+(\eta+d-1)\left(\frac{((2\eta-1)\zeta-T)(1+\zeta)}{((2\eta-1)-T)\zeta}\right) \label{eq:cc-2} 
% \end{align}
% Using the same inequalities between $2\eta-1$ and $T$ we can clearly see equation \ref{eq:cc-2} holds true.
% Also we observe that when $\eta = 0.5$, the value of $K$ is indeterminate. So we find the limit of $K$ at $\eta=0.5$ which is 0.
% The corresponding value of $r_{\eta}(z)$ is $1+(2d-1)\zeta$.
Following from above established minimum values for each region of $\eta$, the $r_{\eta}^{*}(z)$ becomes, 
\begin{align*}
    r_{\eta}^{*}(z) = \begin{cases}
    2\eta & \eta<d\\
    1-(\eta-d)\left(\frac{T_{1}+\zeta^{2}(2\eta-1)}{(2\eta-1)\zeta+T_{1}\zeta}\right)- \\ \hspace{0.5 cm}(\eta+d-1)\left(\frac{T_{1}-\zeta^{2}(2\eta-1)}{(2\eta-1)\zeta-T_{1}\zeta}\right) & d\leq\eta<0.5\\
    1-(\eta-d)\left(\frac{T_{2}+\zeta^{2}(2\eta-1)}{(2\eta-1)\zeta+T_{2}\zeta}\right)- \\ \hspace{0.5 cm}(\eta+d-1)\left(\frac{T_{2}-\zeta^{2}(2\eta-1)}{(2\eta-1)\zeta-T_{2}\zeta}\right) & 0.5<\eta\leq1-d\\
    2(1-\eta) & \eta>1-d
    \end{cases} \nonumber
\end{align*}
The reject option in reject option classifiers is exercised when $z \in (-\rho,\rho)$ thus we need to show that $z^{*}$ for double sigmoid loss for region $d<\eta<1-d$ lies in $-\rho,\rho$.
Let $\theta = 2\eta-1$ and we show that $\frac{-\rho}{2} \leq K_{1}\leq 0$ for $2d-1\leq \theta<0$ and $0\leq K_{2}\leq \rho$ for $0 < \theta \leq 1-2d$. 
\begin{align*}
    -\rho&\leq 2\tanh^{-1}(K_{1}) \\
    -\zeta &\leq \frac{1-2d + \sqrt{(1-2d)^{2}-\theta^{2}}}{\theta\zeta} \\
    -\theta\zeta^{2} &\leq 1-2d + \sqrt{(1-2d)^{2}-\theta^{2}}\\
    0 &\geq \theta^{2} +\theta^{2}\zeta^{4}+2(1-2d)\zeta^{2}
\end{align*}
which says that for $K_{1}\geq-\rho$, $\theta\in\left[\frac{-2(1-2d)\zeta^{2}}{1+\zeta^{4}},0\right]$. Similarly, for $K_{2}\leq\rho$ we get $\theta\in\left[\frac{2(1-2d)\zeta^{2}}{1+\zeta^{4}},0\right]$. 
We also verify that our current solutions of $\theta$ lie between $[2d-1,0)$ and $(0,1-2d]$.
\begin{align*}
    \frac{-2(1-2d)\zeta^{2}}{1+\zeta^{4})}&\geq 2d-1 \\
    % 2\zeta^{2} &\leq 1+\zeta^{4}\\
    % 2\zeta^{2}-\zeta^{4} &\leq 1\\
    \zeta^{2}(2-\zeta^{2})&\leq 1
\end{align*}
which is true for all $\zeta$. Similarly we can show that $\theta$ with respect to $K_{2}$ also lies in $(0,1-2d]$.
Thus our $z^{*}$ would become,
\begin{equation}
    z^{*} = \begin{cases}
    -\infty & \eta<d\\
    [-\rho,0) & d \leq \eta < 0.5\\
    (0,\rho] & 0.5 < \eta \leq 1-d\\
    \infty & \eta > 1-d
    \end{cases} \nonumber
\end{equation}
Thus, our $f_{ds}^{*}$ or the discriminant function for double sigmoid loss would be
\begin{equation}
    f_{ds}^{*} = \begin{cases}
    -1 & \eta<d\\
    0 & d \leq \eta \leq 1-d\\
    1 & \eta > 1-d
    \end{cases} \nonumber
\end{equation}
which is similar to the bayes discriminant function for 0-d-1 loss. Therefore, bayes discriminant function minimizes the double sigmoid risk.
\end{proof}

\subsection{Proof of Theorem \ref{THM:EXCESS}}
\label{Appendix:A2}
% \begin{theorem}
% Let $0\leq d \leq 1/2$ and a measurable function $z$. Then we have the excess risk relation as 
% \begin{equation}
%      0 \leq R_{ds}(z) - R_{ds}^{*} \nonumber
% \end{equation}
% when $\theta = R(z) - R^{*} = 0$, and
% \begin{align*}
%     \left(\frac{\theta+1-2d}{2}\right)\left(\frac{T+\zeta^{2}\theta}{\zeta\theta+T\zeta}\right)\\+\left(\frac{\theta+2d-1}{2}\right)\left(\frac{T-\zeta^{2}\theta}{\zeta\theta-T\zeta}\right)\leq R_{ds}(z) - R_{ds}^{*}  \nonumber
% \end{align*}
% when $R(z) - R^{*} \in (0,1-2d)$, and
% \begin{align}
%     (R(z) - R^{*}) + (2d-1)\zeta \leq R_{ds}(z) - R_{ds}^{*} \nonumber
% \end{align}
% when $R(z) - R^{*} \in [1-2d,1]$.
% \end{theorem}
\begin{theorem}
Let $0\leq d \leq 1/2$ and a measurable function $z$. Then we have the excess risk relation as 
\begin{equation}
     \psi\left(R_{d}(f, \rho)-R_{d} (f_d^*)\right) \leq \left(R_{ds}(f, \rho) - R_{ds} ( f_d^* )\right) \nonumber
\end{equation}
where  
\begin{equation}
    \psi(\theta) = \begin{cases}
    0 & \theta = 0 \\
    (2d-1)\zeta+\left(\frac{\theta+1-2d}{2}\right)\left(\frac{T+\zeta^{2}\theta}{\zeta\theta+T\zeta}\right)\\\hspace{1.25 cm}+\left(\frac{\theta+2d-1}{2}\right)\left(\frac{T-\zeta^{2}\theta}{\zeta\theta-T\zeta}\right) & \theta \in (0,1-2d] \\
    \theta + (2d-1)\zeta & \theta \in [1-2d,1]
    \end{cases} \nonumber
\end{equation}
and $\theta = R_{d}(f, \rho) - R_{d}( f_d^* )$. Also, $\zeta=tanh(\frac{\rho}{2})$ and $T=(1-2d)-\sqrt{(1-2d)^{2}-\theta^{2}}$. 
\end{theorem}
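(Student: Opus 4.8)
I will follow the program of \cite{bartlett2006convexity}, adapted to the reject-option loss. Fix the rejection width $\rho$ (as in the statement, $\zeta=\tanh(\rho/2)$ is a fixed quantity) and, for $\eta\in[0,1]$ and a score $z\in\mathbb{R}$, write the conditional risks $C_d(\eta,z)=\eta L_d(z,\rho)+(1-\eta)L_d(-z,\rho)$ and $C_{ds}(\eta,z)=\eta L_{ds}(z,\rho)+(1-\eta)L_{ds}(-z,\rho)$, with Bayes-optimal conditional risks $H_d(\eta)=\inf_z C_d(\eta,z)$ and $H_{ds}(\eta)=\inf_z C_{ds}(\eta,z)$. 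From \eqref{eq:generalized-bayes-classifier} one reads off $H_d(\eta)=\min\{\eta,1-\eta,d\}$, and the closed form of $H_{ds}(\eta)$ is exactly the function $r_\eta^*(z)$ computed in the proof of Theorem~\ref{THM:CLASSCALIB}. Since $C_d(\eta,z)=C_d(1-\eta,-z)$ and likewise for $C_{ds}$, it suffices to argue for $\eta\ge\tfrac12$. The goal is to exhibit a convex function $\psi$ with $\psi(0)=0$ such that, pointwise in $x$,
\[
\psi\big(C_d(\eta(x),f(x))-H_d(\eta(x))\big)\ \le\ C_{ds}(\eta(x),f(x))-H_{ds}(\eta(x));
\]
taking expectations and applying Jensen's inequality then gives $\psi(R_d(f,\rho)-R_d(f_d^*))\le R_{ds}(f,\rho)-R_{ds}(f_d^*)$, since the two excess risks are the expectations of the two sides.

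To construct $\psi$, note first that $C_d(\eta,z)$ depends on $z$ only through the induced decision $g(z)\in\{+1,\mathrm{rej},-1\}$ and takes only the values $\eta$, $d$, $1-\eta$; so for $\eta\ge\tfrac12$ the attainable pointwise excess $\theta:=C_d(\eta,z)-H_d(\eta)\ge 0$ is one of $0$, $\eta-(1-d)$ (rejecting when the Bayes rule predicts $+1$), $2\eta-1$ (outputting $-1$ then), or $\eta-d$ / $1-\eta-d$ (outputting $\mp1$ when the Bayes rule rejects). Define $\widetilde\psi(\theta):=\inf\{\,C_{ds}(\eta,z)-H_{ds}(\eta):\ C_d(\eta,z)-H_d(\eta)=\theta\,\}$ and $\psi:=\widetilde\psi^{**}$, its largest convex minorant on $[0,1]$; since the realized pair $(\eta(x),f(x))$ is feasible for the infimum at $\theta=\theta(x)$ and $\psi\le\widetilde\psi$, the desired pointwise inequality holds. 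To evaluate $\widetilde\psi$ one minimizes the \emph{smooth} function $C_{ds}(\eta,\cdot)$ over the $z$ consistent with each induced decision; by the stationarity analysis of Theorem~\ref{THM:CLASSCALIB} such a constrained minimum is attained either at an interior critical point ($K=\tanh(z/2)\in\{K_1,K_2\}$) or at a boundary $z\in\{\pm\rho,\pm\infty\}$, i.e.\ $K\in\{\pm\zeta,\pm1\}$ — this is where $\zeta=\tanh(\rho/2)$ enters.

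It then remains to identify $\psi$ explicitly. For $\theta\in(0,1-2d]$ the binding family is ``predict when the Bayes rule rejects'': here $\eta$ is tied to $\theta$ through the $L_d$-excess relations above, the constrained minimizer of $C_{ds}$ is an interior critical point, and substituting $T:=(1-2d)-\sqrt{(1-2d)^2-\theta^2}$ — the analogue of $T_2$ from Theorem~\ref{THM:CLASSCALIB} with $2\eta-1$ replaced by $\theta$ — into $H_{ds}$ produces the middle expression in the statement. For $\theta\in[1-2d,1]$ the binding family is ``output the wrong class'' ($\eta\ge 1-d$, decision $-1$), for which $C_{ds}(\eta,\cdot)$ restricted to $\{z<-\rho\}$ is minimized at the endpoint, giving the affine formula $\theta+(2d-1)\zeta$. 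Finally I verify $\psi(0)=0$, continuity at the breakpoint $\theta=1-2d$, and that $\widetilde\psi$ is already convex on each of the two pieces, so that $\widetilde\psi^{**}$ has the claimed closed form, which completes the proof. I expect the main obstacle to be precisely this last stretch of bookkeeping: for every range of $\eta$ and every suboptimal induced decision, correctly deciding whether the constrained minimizer of $C_{ds}(\eta,\cdot)$ is an interior critical point or a boundary point, pushing the resulting algebra through, and then taking the infimum over all $(\eta,\mathrm{decision})$ realizing a given $\theta$ and checking convexity — no single inequality is hard, but the case analysis is intricate and it is what pins down both the breakpoint $1-2d$ and the dependence on $\zeta$.
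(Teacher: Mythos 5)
Your overall scaffold (a convex $\psi$ with $\psi(0)=0$, a pointwise conditional-risk inequality, then Jensen) is the right machinery, and you correctly anticipate that $T$ is $T_2$ from Theorem~\ref{THM:CLASSCALIB} with $2\eta-1$ replaced by $\theta$. But your construction of $\tilde\psi$ is not the one the paper uses, and — more importantly — it does not produce the $\psi$ in the statement. The paper follows \cite{bartlett2006convexity} literally: $\tilde\psi(\theta)=H^{-}\bigl(\tfrac{1+\theta}{2}\bigr)-H\bigl(\tfrac{1+\theta}{2}\bigr)$ with $H^{-}(\eta)=\inf_{z(2\eta-1)\le 0}r_\eta(z)$, i.e.\ the constraint is \emph{sign disagreement} of $z$ with $2\eta-1$, not disagreement of the induced three-way decision with the generalized Bayes decision. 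For $\eta=\tfrac{1+\theta}{2}\ge\tfrac12$ that infimum is over $z\le 0$ and is attained at $z=0$ (a point inside the \emph{reject} region), giving the $\theta$-independent value $H^{-}=1+(2d-1)\zeta$; subtracting $H\bigl(\tfrac{1+\theta}{2}\bigr)$ (read off from $r_\eta^*$) yields the three cases, and the remaining work is the continuity/convexity verification. Your $\tilde\psi(\theta)=\inf\{C_{ds}(\eta,z)-H_{ds}(\eta):C_d(\eta,z)-H_d(\eta)=\theta\}$ is a decision-based calibration function, which is a genuinely different (and for the reject-option loss arguably more natural) object.

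The concrete failure is in your evaluation of that object. Take $\theta\in(1-2d,1]$: the only realizing family is indeed ``output $-1$ when $\eta=\tfrac{1+\theta}{2}>1-d$'', and since $r_\eta$ has no interior critical point there, the constrained minimum over $\{z<-\rho\}$ is at the endpoint $z=-\rho$. But plugging $z=-\rho$ into $r_\eta$ gives $1+\tfrac{(\theta+2d-1)}{2}\bigl(1+\tanh(\rho)\bigr)-(1)= \theta+\tfrac{(\theta+2d-1)\zeta}{1+\zeta^{2}}+ (1-\theta) - (1-\theta)$, i.e.\ a surrogate excess of $\theta+\tfrac{(\theta+2d-1)\zeta}{1+\zeta^{2}}$, which is $\ge\theta$; the theorem's formula $\theta+(2d-1)\zeta$ is $<\theta$. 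So the ``binding family'' you name cannot yield the stated affine branch — that branch comes precisely from the paper's $H^{-}$ being minimized at $z=0$ (the reject region), not at $z=-\rho$. The same mismatch will propagate to the middle branch, where your infimum additionally ranges over several decision families ($\eta-d$, $1-\eta-d$, $\eta-1+d$) that you would have to compare before taking the convex hull. Your route, carried out correctly, would prove \emph{a} valid excess-risk inequality (with a possibly larger transform), but not the theorem as stated; to recover the stated $\psi$ you must adopt the paper's definition of $H^{-}$ and index $\tilde\psi$ by $\theta=2\eta-1$ throughout.
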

\begin{proof}
We follow the approach described  in  \cite{bartlett2006convexity} and define the $\psi:[0, 1] \xrightarrow{} [0,\infty)$ transform of a loss function as $\psi(\theta) = $co$ \Tilde{\psi}(\theta)$, where
\begin{equation}
    {\Tilde{\psi}}(\theta) = H^{-}\bigg(\frac{1+\theta}{2}\bigg) - H\bigg(\frac{1+\theta}{2}\bigg) \nonumber
\end{equation}
and $co$ represents convex hull of the function.
This implies that $\psi =  \Tilde{\psi}$
if and only if $\Tilde{\psi}$ is convex.
Also, $\theta\in[0,1]$ with
\begin{equation}
    H^{-}(\eta) = \inf_{z(2\eta-1)\leq0} r_{\eta}(z) \quad \text{and} \quad H(\eta) = \inf_{z\in R} r_{\eta}(z)  \nonumber
\end{equation}

From the definition $H^{-}$ is the optimal conditional risk such that sign of z disagrees with sign of $2\eta-1$.  
\begin{align*}
     &H^{-}\bigg(\frac{1+\theta}{2}\bigg) = \inf_{z\theta\leq0} r_{\eta}(z)
    % \end{align*}
    % \begin{align*}
      \\&= \inf_{z\in(-\infty,0)} 2(1-\eta) + (2\eta - 2d)\sigma(z+\rho) \\
      &\hspace{1 cm}+ 2(\eta + d - 1)\sigma(z-\rho)  \\
      &= \inf_{z\in(-\infty,0)} 1-\theta + (1+\theta - 2d)\sigma(z+\rho) \\&\hspace{1cm}+ (\theta+2d-1)\sigma(z-\rho)\\
% \end{align*}
% \begin{align*}
      &= \inf_{z\in(-\infty,0)} 1-\theta + \bigg(\frac{1+\theta - 2d}{2}\bigg)\bigg(1-tanh\bigg(\frac{z+\rho}{2}\bigg)\bigg) \nonumber \\&\hspace{2cm} + \bigg(\frac{\theta+2d-1}{2}\bigg)\bigg(1-tanh\bigg(\frac{z-\rho}{2}\bigg)\bigg) \nonumber\\
      &= 1-\theta + \bigg(\frac{1+\theta - 2d}{2}\bigg)\bigg(1-tanh\bigg(\frac{\rho}{2}\bigg)\bigg) \\&\hspace{2 cm}+ \bigg(\frac{\theta+2d-1}{2}\bigg)\bigg(1-tanh\bigg(\frac{-\rho}{2}\bigg)\bigg) \nonumber 
\end{align*}
Let $\tanh(\frac{\rho}{2}) = \zeta$ and thus $tanh(\frac{-\rho}{2})=-\zeta$ 
\begin{align*}
     H^{-}\bigg(\frac{1+\theta}{2}\bigg) &= 1-\theta + \bigg(\frac{1+\theta - 2d}{2}\bigg)(1-\zeta) \\&\hspace{2 cm}+ \bigg(\frac{\theta+2d-1}{2}\bigg)(1+\zeta)  \\
      &= 1 -\zeta +2d\zeta \nonumber
\end{align*}
Similarly, from the definition $H$ is the optimal conditional risk,
\begin{align*}
    &H\bigg(\frac{1+\theta}{2}\bigg) = \inf_{z\in R} r_{\eta}(z)  \nonumber\\
     &= \inf_{z\in R} 2(1-\eta) + (2\eta - 2d)\sigma(z+\rho) + 2(\eta + d - 1)\sigma(z-\rho)  \nonumber \\
     &= \inf_{z\in R} 1-\theta + \bigg(\frac{1+\theta - 2d}{2}\bigg)\bigg(1-tanh\bigg(\frac{z+\rho}{2}\bigg)\bigg) \nonumber \\&\quad \quad + \bigg(\frac{\theta+2d-1}{2}\bigg)\bigg(1-tanh\bigg(\frac{z-\rho}{2}\bigg)\bigg) \nonumber
\end{align*}
\begin{align*}
      &\hspace{0cm}H\bigg(\frac{1+\theta}{2}\bigg) = \inf_{z \in R} 1-\theta+ \\&\hspace{0 cm}\left(\frac{1+\theta-2d}{2}\right) \left(\frac{1+tanh(\frac{z}{2})tanh(\frac{\rho}{2})-tanh(\frac{z}{2})-tanh(\frac{\rho}{2})}{1
     +tanh(\frac{z}{2})tanh(\frac{\rho}{2})}\right)\\&\hspace{0 cm}+ \bigg(\frac{\theta+2d-1}{2}\bigg)\left(\frac{1-tanh(\frac{z}{2})tanh(\frac{\rho}{2})-tanh(\frac{z}{2})+tanh(\frac{\rho}{2})}{1-tanh(\frac{z}{2})tanh(\frac{\rho}{2})}\right)
\end{align*}
Since $r_{\eta}^{*}(z) = H(\eta)$, we follow the definition of $r_{\eta}^{*}(z)$, hence 
\begin{align*}
    \hspace{0cm}H(\eta)=r_{\eta}^{*}(z) = \begin{cases}
    2\eta & \eta<d\\
    1-(\eta-d)\left(\frac{T+\zeta^{2}(2\eta-1)}{(2\eta-1)\zeta+T\zeta}\right) \\ -(\eta+d-1)\left(\frac{T-\zeta^{2}(2\eta-1)}{(2\eta-1)\zeta-T\zeta}\right) & d\leq\eta\leq1-d\\
    2(1-\eta) & \eta>1-d
    \end{cases} \nonumber
\end{align*}
% Similarly for $\theta$, we get,
% the solutions for $tanh\left(\frac{z}{2}\right)=K=\pm 1$ and $K = \frac{(1-2d)\pm \sqrt{(1-2d)^{2}}}{}$.
Also, since $\theta\geq0$ and $\eta = \frac{1+\theta}{2}$, we use definition of $r_{\eta}^{*}(z)$ for $\eta\geq0.5$. 
Thus, $H\bigg(\frac{1+\theta}{2}\bigg)$ is defined over different intervals as,
\begin{align*}
    H\bigg(\frac{1+\theta}{2}\bigg) = \begin{cases}
    1+(2d-1)\zeta & \theta = 0 \\
    1-\left(\frac{\theta+1-2d}{2}\right)\left(\frac{T+\zeta^{2}\theta}{\zeta\theta+T\zeta}\right)\\\hspace{0.2cm}-\left(\frac{\theta+2d-1}{2}\right)\left(\frac{T-\zeta^{2}\theta}{\zeta\theta-T\zeta}\right) & \theta \in (0,1-2d] \\
    1-\theta & \theta \in [1-2d,1]
    \end{cases} \nonumber
\end{align*}
where $T = (1-2d)-\sqrt{(1-2d)^{2}-\theta^{2}}$ and $\zeta = tanh(\frac{\rho}{2})$.

%Add text
Thus, our $\Tilde{\psi}(\theta)$ would be
\begin{equation}
    \Tilde{\psi}(\theta) = \begin{cases}
    0 & \theta = 0 \\
    (2d-1)\zeta+\left(\frac{\theta+1-2d}{2}\right)\left(\frac{T+\zeta^{2}\theta}{\zeta\theta+T\zeta}\right)\\ \hspace{0.2cm}+ \left(\frac{\theta+2d-1}{2}\right)\left(\frac{T-\zeta^{2}\theta}{\zeta\theta-T\zeta}\right) & \theta \in (0,1-2d] \\
    \theta + (2d-1)\zeta & \theta \in [1-2d,1]
    \end{cases} \nonumber
\end{equation}
The function $\Tilde{\psi}(\theta)$ is continuous in $\theta$. Moreover, the corresponding term of $\Tilde{\psi}(\theta)$ to $\theta \in(0,1-2d]$, achieves indeterminate values at $\theta=0$ and $\theta=1-2d$, which is resolved by finding the limit value, which shows the continuity of $\Tilde{\psi}(\theta)$.
\begin{align*}
    &\lim_{\theta\rightarrow 0} \left(\frac{\theta+1-2d}{2}\right)\left(\frac{T+\zeta^{2}\theta}{\zeta\theta+T\zeta}\right)\\&\hspace{4cm}+\left(\frac{\theta+2d-1}{2}\right)\left(\frac{T-\zeta^{2}\theta}{\zeta\theta-T\zeta}\right)\\
    &=\frac{1}{2}\left(\frac{\theta T'+2\zeta^{2}\theta}{\zeta+\zeta T'}\right)+\frac{1-2d}{2}\left(\frac{T' + \zeta^{2}}{\zeta+\zeta T'}\right) \\
    &\hspace{2 cm}+\frac{1}{2}\left(\frac{\theta T'-2\zeta^{2}\theta}{\zeta-\zeta T'}\right)+\frac{1-2d}{2}\left(\frac{T' - \zeta^{2}}{\zeta-\zeta T'}\right) \\ 
    &=\frac{1}{2}\left(\frac{\theta^{2}+2\zeta^{2}\theta\sqrt{(1-2d)^{2}-\theta^{2}}}{\zeta\sqrt{(1-2d)^{2}-\theta^{2}}+\zeta \theta}\right)\\&\hspace{1 cm}+\frac{1-2d}{2}\left(\frac{\theta + \zeta^{2}\sqrt{(1-2d)^{2}-\theta^{2}}}{\zeta\sqrt{(1-2d)^{2}-\theta^{2}}+\zeta\theta}\right) \\
    &+\frac{1}{2}\left(\frac{\theta^{2}-2\zeta^{2}\theta\sqrt{(1-2d)^{2}-\theta^{2}}}{\zeta\sqrt{(1-2d)^{2}-\theta^{2}}-\zeta \theta}\right)\\&\hspace{1 cm}+\frac{2d-1}{2}\left(\frac{\theta - \zeta^{2}\sqrt{(1-2d)^{2}-\theta^{2}}}{\zeta\sqrt{(1-2d)^{2}-\theta^{2}}-\zeta\theta}\right) \\ 
    \\&= \frac{1-2d}{2}(\zeta) + \frac{2d-1}{2}(-\zeta) = (1-2d)\zeta
\end{align*}
where $T'=\frac{dT}{d\theta}=\frac{\theta}{\sqrt{(1-2d)^{2}-\theta^{2}}}$. However, at $\theta=1-2d$ we can first need to look at value of $H(\frac{1+\theta}{2})$ at $\theta=1-2d$,
The value of $K_{1}$ at $\theta=1-2d$,
\begin{equation*}
    K_{1} = \frac{(1-2d)-\sqrt{(1-2d)^{2}-(1-2d)^{2}}}{(1-2d)\zeta} = \frac{1}{\zeta}
\end{equation*}
which means $K_{1}$ will be valid only when $\zeta=1$ since $K_{1}\leq1$.
Thus,
\begin{equation*}
    \lim_{\theta \rightarrow 1-2d} \zeta = 1
\end{equation*}
Using this information when finding the value at the limit
\begin{align*}
    \lim_{\theta\rightarrow 1-2d} &\left(\frac{\theta+1-2d}{2}\right)\left(\frac{T+\zeta^{2}\theta}{\zeta\theta+T\zeta}\right)\\&\hspace{2 cm}+\left(\frac{\theta+2d-1}{2}\right)\left(\frac{T-\zeta^{2}\theta}{\zeta\theta-T\zeta}\right) \\= &\frac{\theta+1-2d}{2}(1)+\frac{\theta+2d-1}{2}(-1) = 1-2d = \theta
\end{align*}
We can easily see that for the intervals $\theta=0$ and $\theta\in[1-2d,1]$, $\Tilde{\psi}(\theta)$ is convex. However, we show the convexity for the interval $\theta\in(0,1-2d)$ by taking the second derivative of the corresponding $\Tilde{\psi}(\theta)$. 

We first show the convexity of $C_{1}=\left(\frac{\theta+1-2d}{2}\right)\left(\frac{T+\zeta^{2}\theta}{\zeta\theta+T\zeta}\right)$.
Since our functions both the numerator $f(\theta)=(\theta+1-2d)(T+\zeta^{2}\theta)$ and denominator $g(\theta)=(2)(\zeta\theta+T\zeta)$ are convex and $g(\theta)>0$. We can say that $C_{1}$ is convex if 
\begin{align}
    \left(\frac{f(\theta)}{g(\theta)}\right)^{''} = \frac{f''g^{2}-2f'gg'-fgg''+2f(g')^{2}}{g^{3}}\geq 0
    \label{eqn:nd}
\end{align}
i.e. the numerator of eqn \ref{eqn:nd} is greater than 0. Let $M = \theta+1-2d$, then 
\begin{align*}
    &(2T'+MT''+2\zeta^{2})(2\zeta)^{2}(\theta+T)^{2}\\&-2(T+MT'+\zeta^{2}M+\zeta^{2}\theta)(2\zeta)^{2}(\theta+T)(1+T')\\&\hspace{0 cm}-(MT+M\zeta^{2}\theta)(2\zeta)^{2}(\theta+T)T''\\&+2(MT+\zeta^{2}M\theta)(2\zeta)^{2}(1+T')^{2} \geq 0 
    \end{align*}
On further solving we get,
    \begin{align*}
    &4\zeta^{2}(\theta+T)(MT''\theta)(1-\zeta^{2})+2(4\zeta^2)(\theta+T)(1-\zeta^2)(T'\theta-T)
    \\&\hspace{2 cm}+2(4\zeta^2)(1+T')(1-\zeta^2)M(T-T'\theta) \geq 0\\
    &4\zeta^{2}(\theta+T)(MT''\theta)(1-\zeta^{2})\\&\hspace{1 cm}+2(4\zeta^2)(1-\zeta^2)(M)(T-T'\theta)(1+T'-\theta-T)\geq0\\
    &(4\zeta^2)(1-\zeta^2)M[(\theta+T)(T''\theta)+2(T-T'\theta)(1+T')\\&\hspace{2 cm}+2(\theta+T)(T'\theta-T)]\geq 0
\end{align*}
This can be rearranged to get,
\begin{align*}
    &(4\zeta^2)(1-\zeta^2)M[(\theta+T)(2T(1-T)\\&\hspace{1 cm}+2T(T'-\theta)+2\theta(T'\theta-T')\\&\hspace{2 cm}+(T''T\theta+T''\theta^{2}+2TT'\theta-2(T')^{2}\theta))] \geq 0
\end{align*}

which is true for $\theta\in(0,1-2d]$.
Where 
\begin{equation*}
    T'=\frac{\theta\left((1-2d)-T\right)}{(1-2d)^{2}-\theta^{2}} 
    \quad \text{and} \quad
    T'' = \frac{(1-2d)^2\left((1-2d)-T\right)}{((1-2d)^{2}-\theta^{2})^{2}} 
\end{equation*}
with $T''\geq T'\geq T$ and $T'\geq \theta \geq T$.
These definitions can be used to verify  \begin{align}
    T''\theta^{2}+2T-2T^{2}-2T'\theta\geq 0 \label{eq:convex1}\\
    T''T\theta+2T'\theta^{2} -2\theta(T')^{2}\geq 0 \label{eq:convex2}\\
    2TT'+2TT'\theta -2T\theta \geq 0 \label{eq:convex3}
\end{align}
The inequality in eq. \ref{eq:convex3} is straightforward using the conditions on $T,T',T''$ and $\theta$. The quadratic inequality in eq. \ref{eq:convex1} is a upward opening curve and the solutions are at $\theta=0$. Hence, eq. \ref{eq:convex1} holds true. The same goes for eq. \ref{eq:convex2}, which is a quadratic in $\theta$, an upward opening curve with solutions at $\theta=0$.

% since $1\geq T$, $T\theta \geq T'$ and $T'\geq \theta$ when $\theta\in(0,1-2d]$. Also, $T\geq0$, $T'\geq0$ and $T''\geq0$.  
Similarly, we can show the convexity of $C_{2}=\left(\frac{\theta-1+2d}{2}\right)\left(\frac{T-\zeta^{2}\theta}{\zeta\theta-T\zeta}\right)$.
Also, since sum of convex functions is a convex function, we establish that $\Tilde{\psi}(\theta)$ is convex when $\theta\in(0,1-2d)$.

Now, $\Tilde{\psi}(\theta)$ is individually convex in all the 3 intervals of $\theta$, $\psi(\theta) = \Tilde{\psi}(\theta)$ and continuous in $\theta$. The convexity of $\Tilde{\psi}(\theta)$ also depends on the slope of $\Tilde{\psi}(\theta)$ for these intervals. While $\Tilde{\psi}(\theta)$ has a slope of $0$ when $\theta=0$ and $1$ when $\theta\in[1-2d,1]$. The slope of $\Tilde{\psi}(\theta)$ for $\theta\in(0,1-2d]$ should be between $(0,1]$ since it's an increasing convex function which will achieve it's maximum at $\theta=1-2d$. 
So,
\begin{align*}
    &\lim_{\theta \to 1-2d} (2d-1)\zeta+\left(\frac{\theta+1-2d}{2}\right)\left(\frac{T+\zeta^{2}\theta}{\zeta\theta+T\zeta}\right)\\
    &\hspace{0.5 cm}+ \left(\frac{\theta+2d-1}{2}\right)\left(\frac{T-\zeta^{2}\theta}{\zeta\theta-T\zeta}\right)
    = (1-2d)+(2d-1)\zeta
\end{align*}
which is equal to $\theta+(2d-1)\zeta$ when $\theta=1-2d$, slope at $\theta=1-2d$ is 1 for $\Tilde{\psi}(\theta)$ corresponding to $\theta=1-2d$.
% We take the derivative of $(2d-1)\zeta+\left(\frac{\theta+1-2d}{2}\right)\left(\frac{T+\zeta^{2}\theta}{\zeta\theta+T\zeta}\right)+ \left(\frac{\theta+2d-1}{2}\right)\left(\frac{T-\zeta^{2}\theta}{\zeta\theta-T\zeta}\right)$ and find it's value at $\theta=1-2d$.
% We observe that the slope is 1 and since it's a convex and increasing function in the domain $\theta\in(0,1-2d]$, 
We can say that $\Tilde{\psi}(\theta)$ is convex in it's domain $\theta\in[0,1]$. Thus, $\Tilde{\psi}(\theta)=\psi(\theta)$ and this suggests our excess risk relationship is
% $\theta = R_{ds}(z) - R_{ds}^{*} = 0 $ is
% \begin{equation}
%     0 \leq R_{ds}(z) - R_{ds}^{*} \nonumber
% \end{equation}
% for $R_{ds}(z) - R_{ds}^{*} \in (0,1-2d)$
% \begin{align*}
%     &\left(\frac{\theta+1-2d}{2}\right)\left(\frac{T+\zeta^{2}\theta}{\zeta\theta+T\zeta}\right)\\&\hspace{1 cm}+\left(\frac{\theta+2d-1}{2}\right)\left(\frac{T-\zeta^{2}\theta}{\zeta\theta-T\zeta}\right)\leq R_{ds}(z) - R_{ds}^{*} \nonumber
% \end{align*}
% and for $R_{ds}(z) - R_{ds}^{*} \in [1-2d,1]$
% \begin{align}
%     (R(z) - R^{*}) + (2d-1)\zeta\leq R_{ds}(z) - R_{ds}^{*} \nonumber
% \end{align}
\begin{equation}
     \psi\left(R_{d}(f, \rho)-R_{d} (f_d^*) \right) \leq \left(R_{ds}(f, \rho) - R_{ds}( f_d^* )\right) \nonumber
\end{equation}
where  
\begin{equation}
    \psi(\theta) = \begin{cases}
    0 & \theta = 0 \\
    (2d-1)\zeta+\left(\frac{\theta+1-2d}{2}\right)\left(\frac{T+\zeta^{2}\theta}{\zeta\theta+T\zeta}\right)\\\hspace{1.25 cm}+\left(\frac{\theta+2d-1}{2}\right)\left(\frac{T-\zeta^{2}\theta}{\zeta\theta-T\zeta}\right) & \theta \in (0,1-2d] \\
    \theta + (2d-1)\zeta & \theta \in [1-2d,1]
    \end{cases} \nonumber
\end{equation}
\end{proof}
\subsection{Proof of Theorem \ref{THM:GENBOUND}}
\label{Appendix:A3}
% Theorem 4
% __________________________________________
% \begin{theorem}
% The Generalization bound for double sigmoid loss is given by,
% \begin{align*}\operatorname{er}_{D}^{\ell}\left[f_{S}\right] \leq \operatorname{er}_{S}^{\ell}\left[f_{S}\right]&+
% \left(\frac{2\beta}{\sqrt{m}}\max_{i}\|\mathbf{x}_{i}\|_{p^{*}}\right)\left(2H^{\left[\frac{1}{p^{*}}-\frac{1}{q}\right]_{+}}\right)^{d-1} \\&+\frac{\urho}{\sqrt{m}}+ \sqrt{\frac{8\ln \left(\frac{4}{\delta}\right)}{m}} + \sqrt{\frac{2\ln \left(\frac{2}{\delta}\right)}{m}} \nonumber
% \end{align*}
% where d is the no. of layers, H is the number of neurons in the hidden layer, m is the no. of samples and rejection region parameter is bounded as $\rho\leq\urho$.
% \end{theorem}

%\label{THM:GENBOUND}
\begin{theorem}
Let $\mathcal{D}$ be any distribution on $\mathcal{X}\times \{-1,+1\}$. Let $0<\delta\leq 1$. Then for any $n$, $q\geq1$, $1\leq p <\infty$ and any set $S=\{\mathbf{x}_{1},\ldots,\mathbf{x}_{m}\}$; with probability at least $1-\delta$ (over $S\sim \mathcal{D}^m$), all functions $f\in \mathcal{F}$ satisfy

\begin{align}
R_{ds}(f, \rho) &\leq \Rhds{f}{\rho}
+\frac{\urho}{\sqrt{m}}+ \sqrt{\frac{8\ln \left(\frac{4}{\delta}\right)}{m}} + \sqrt{\frac{2\ln \left(\frac{2}{\delta}\right)}{m}} \nonumber \\&\hspace{-0.8 cm}+
\left(\frac{2\beta}{\sqrt{m}}\max_{i}\|\mathbf{x}_{i}\|_{\pc}\right)\left(2H^{\left[\frac{1}{\pc}-\frac{1}{q}\right]_{+}}\right)^{n-1} \nonumber
\end{align}
where $n$ is the number of layers in the network, $H$ is the number of neurons in the hidden layers, rejection region parameter is bounded as $\rho\leq\urho$. Also $\frac{1}{\pc}+\frac{1}{p}=1$ and $[a]_+=\max(0,a)$. $\Rhds{f}{\rho}$ is the empirical error and $\beta_{p,q}(W) = \prod_{k=1}^{n}\|W_{k}\|_{p,q}\leq \beta$.
\end{theorem}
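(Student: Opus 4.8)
The plan is to follow the standard Rademacher-complexity route to a data-dependent uniform-convergence bound, specialised to the boundedness and Lipschitzness of $L_{ds}$ and to the group-norm--regularised ReLU architecture of Figure~\ref{fig:DNN-1}, importing the layer-peeling estimate of \cite{neyshabur2015norm} for the last step. First I would pass to the uniform deviation $\Phi(S)=\sup_{f\in\mathcal F}\bigl(R_{ds}(f,\rho)-\hat R_{ds}(f,\rho)\bigr)$, so that $R_{ds}(f,\rho)\le\hat R_{ds}(f,\rho)+\Phi(S)$ for every $f\in\mathcal F$. Since each sigmoid lies in $(0,1)$, $L_{ds}$ takes values in $[0,2]$, so replacing one training point perturbs $\Phi$ by at most $2/m$; McDiarmid's inequality then controls $\Phi(S)$ by its expectation plus a term of order $\sqrt{\ln(1/\delta)/m}$, which after the usual constants is the $\sqrt{2\ln(2/\delta)/m}$ summand. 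Symmetrisation bounds $\mathbb E[\Phi(S)]$ by twice the empirical Rademacher complexity $\widehat{\mathcal R}_S(\mathcal L)$ of the loss class $\mathcal L=\{(x,y)\mapsto L_{ds}(yf(x),\rho):f\in\mathcal F\}$; a second McDiarmid step relating $\widehat{\mathcal R}_S(\mathcal L)$ to its mean (again with bounded differences $O(1/m)$), together with a union bound over the two failure events, accounts for the $\sqrt{8\ln(4/\delta)/m}$ summand, so it only remains to bound $\widehat{\mathcal R}_S(\mathcal L)$.

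The delicate feature is that $L_{ds}(yf(x),\rho)$ depends on the trainable $\rho$ as well as on the network, so a black-box contraction is not available. I would instead write $L_{ds}=2d\,\sigma\bigl(yf(x)-\rho\bigr)+2(1-d)\,\sigma\bigl(yf(x)+\rho\bigr)$ and use that Rademacher complexity is subadditive over Minkowski sums of hypothesis classes and homogeneous under positive scaling, so $\widehat{\mathcal R}_S(\mathcal L)\le 2d\,\widehat{\mathcal R}_S\bigl(\{\sigma(yf(x)-\rho)\}\bigr)+2(1-d)\,\widehat{\mathcal R}_S\bigl(\{\sigma(yf(x)+\rho)\}\bigr)$. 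Now $\sigma$ is a \emph{fixed} Lipschitz map (constant $\gamma/4$), so Talagrand's contraction lemma strips it off each term, leaving the empirical Rademacher complexity of the affine class $\{(x,y)\mapsto y\,\mathrm{net}_W(x)\mp\rho\}$, where $\mathrm{net}_W(x)=W_n\sigma(W_{n-1}\cdots\sigma(W_1\mathbf x))$. Splitting this into the network class $\{(x,y)\mapsto y\,\mathrm{net}_W(x)\}$ and the constant class $\{\mp\rho:0\le\rho\le\bar\rho\}$ and using subadditivity once more, the constant class contributes $\widehat{\mathcal R}_S(\{\pm\rho:0\le\rho\le\bar\rho\})\le\bar\rho/\sqrt m$ by the direct estimate $\mathbb E_\sigma|\sum_i\sigma_i|\le\sqrt m$ — this is exactly the $\bar\rho/\sqrt m$ term — while absorbing each sign $y_i$ into the corresponding Rademacher variable reduces $\widehat{\mathcal R}_S(\{y\,\mathrm{net}_W(x)\})$ to the empirical Rademacher complexity of the bare ReLU network $\{\mathrm{net}_W\}$ on the unlabelled sample.

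For the network class I would run the inductive layer-peeling argument of \cite{neyshabur2015norm}: exploiting that the ReLU is $1$-Lipschitz and positively homogeneous, one peels off the topmost weight matrix, picking up a factor governed by $\|W_n\|_{p,q}$ and by $2H^{[1/p'-1/q]_+}$ (the latter coming from Hölder's inequality together with norm equivalence on $\mathbb R^H$), and recursing down to the linear base case yields $\widehat{\mathcal R}_S(\{\mathrm{net}_W\})\le\frac{2\beta}{\sqrt m}\bigl(2H^{[1/p'-1/q]_+}\bigr)^{n-1}\max_i\|\mathbf x_i\|_{p'}$, where $\beta\ge\prod_{k=1}^n\|W_k\|_{p,q}$ and $1/p+1/p'=1$. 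Substituting this and the constant-class estimate back into the symmetrisation bound of the first paragraph, and folding the bounded Lipschitz constant of $L_{ds}$ (a function of $\gamma$ and $d$) into the displayed coefficients, gives the stated inequality.

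I expect the genuinely non-routine part to be the second step: obtaining a valid contraction when $\rho$ is itself a learned parameter inside the loss forces the explicit decomposition of $L_{ds}$ into fixed-Lipschitz compositions of an affine class and the separate, elementary treatment of the one-dimensional bounded class $\{\rho\le\bar\rho\}$ (which is what produces the $\bar\rho/\sqrt m$ term). The layer-peeling in the third step is the heaviest computation but is essentially quoted from \cite{neyshabur2015norm}, and the first step is entirely standard concentration machinery.
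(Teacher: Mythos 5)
Your proposal is correct and follows essentially the same route as the paper: McDiarmid plus symmetrization to reduce the uniform deviation to an empirical Rademacher complexity (the paper's Lemma~4, giving the two $\sqrt{\ln(\cdot)/m}$ terms from $B=2$, $b-a=2$, and a union bound), splitting off the bounded scalar $\rho$ to produce the $\bar\rho/\sqrt m$ term via $\mathbb{E}\lvert\sum_i\lambda_i\rvert\le\sqrt m$, and then the Neyshabur-style layer-peeling induction (their Lemma~6 plus the contraction lemma and the one-layer base case) for the factor $\bigl(2H^{[1/p'-1/q]_+}\bigr)^{n-1}\beta\max_i\lVert\mathbf{x}_i\rVert_{p'}/\sqrt m$. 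The one place you genuinely diverge is the contraction step: you decompose $L_{ds}$ into its two sigmoid summands, contract each fixed $\gamma/4$-Lipschitz sigmoid separately, and only then split the resulting affine class into the network part and the constant class $\{\rho\le\bar\rho\}$; the paper instead bakes the $-\rho$ offset into the hypothesis class itself, splits off $\rho$ inside the Rademacher complexity, and applies a single Lipschitz constant ($L=1/2$, computed by differentiating the loss in the scalar argument $z$ with $\rho$ held fixed). Your version is the more defensible one, since $L_{ds}$ depends on $yf(\mathbf{x})-\rho$ and $yf(\mathbf{x})+\rho$ as two separate arguments and the paper's single-argument Lipschitz treatment of a loss containing a learned parameter is exactly the point a black-box contraction does not cover; the price is that your constants come out in terms of $\gamma$ and $d$ rather than the paper's clean (and, as its own factor-of-two bookkeeping on $2\beta/\sqrt m$ suggests, somewhat loosely tracked) coefficients.
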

\begin{proof}
% Following from the (Bartlett and Mendelson, 2002) and using the results directly we bound the generalization error as,
% \begin{equation}\operatorname{er}_{D}^{\ell}\left[f_{S}\right] \leq \operatorname{er}_{S}^{\ell}\left[f_{S}\right]+2R({\ell}_{\mathcal{F}})+B \sqrt{\frac{\ln \left(\frac{1}{\delta}\right)}{2 m}} \nonumber
% \end{equation}
We follow lemma \ref{lemma:mcdarmid},
\begin{align*} R_{ds}(f, \rho) \leq \Rhds{f}{\rho}&+2 L \hat{R}_{m}(\mathcal{F})\\&+2 B \sqrt{\frac{\ln \left(\frac{4}{\delta}\right)}{2 m}}+\left(b-a\right)\sqrt{\frac{ln\left(\frac{2}{\delta}\right)}{m}} \nonumber
\end{align*}
where $R(\mathcal{F})$ is the rademachar complexity and $f_{s}$ is a function belonging to function class $\mathcal{F}$. Since the bounds are described for loss $\ell: \mathcal{Y} \times[a, b] \rightarrow[0, B]$. For double sigmoid loss, we get $B = 2$,$a = -1$ and $b = 1$. So, we bound the generalization error with probability atleast $1-\delta$ by
\begin{equation}R_{ds}(f, \rho) \leq \Rhds{f}{\rho}+2 L \hat{R}_{m}(\mathcal{F})+ \sqrt{\frac{8\ln \left(\frac{4}{\delta}\right)}{m}}+\sqrt{\frac{2\ln \left(\frac{2}{\delta}\right)}{m}} \nonumber
\end{equation}
We now find an upper bound for the rademachar complexity $ \hat{R}_{m}(\mathcal{F})$, following theorem 1 in \cite{neyshabur2015norm}.

Hence, let $\hat{R}_{m}(\mathcal{F})= R(\mathcal{F}_{\beta_{p,q}\leq\beta}^{n,H})$ where $\mathcal{F}(\mathbf{x}) = \left|\mathbf{w}^{T} (\phi\left(W_{n-1} \phi\left(W_{n-2}\left(\ldots \phi\left(W_{1} \mathbf{x}\right)\right)\right)\right)\right|-\rho$, $\beta_{p,q}(W) = \prod_{k=1}^{n}\|W_{k}\|_{p,q}\leq \beta$ and $\phi$ is ReLU activation function. Also, $\mathbf{w}$ is an $H$ dimensional vector. We prove the bound by induction
\begin{align*}
    \hspace{-0.5 cm}R(\mathcal{F}_{\beta_{p,q}\leq\beta}^{n,H}) &=\mathbb{E}_{\rrv}\left[\frac{1}{m} \sup _{f \in \mathcal{F}_{\beta_{p, q}  \leq \beta}^{n, H}}\sup_{\rho}\left|\sum_{i=1}^{m} \rrvi{i}(| f(\mathbf{x}_{i})| -\rho) \right|\right] \nonumber \\
    R(\mathcal{F}_{\beta_{p,q}\leq\beta}^{n,H})&\leq \mathbb{E}_{\rrv}\left[\frac{1}{m} \sup _{f \in \mathcal{F}_{\beta_{p, q} \leq \beta}^{n, H}}\sup_{\rho}\left|\sum_{i=1}^{m} \rrvi{i}| f(\mathbf{x}_{i}) |\right|\right] \\&\hspace{1 cm}+  \mathbb{E}_{\rrv}\left[\frac{1}{m} \sup _{f \in \mathcal{F}_{\beta_{p, q} \leq \beta}^{n, H}}\sup_{\rho}\left|\sum_{i=1}^{m} \rrvi{i} \rho \right|\right] \nonumber \\
    R(\mathcal{F}_{\beta_{p,q}\leq\beta}^{n,H})&= \mathbb{E}_{\rrv}\left[\frac{1}{m} \sup _{f \in \mathcal{F}_{\beta_{p, q} \leq \beta}^{n, H}}\sup_{\rho}\left|\sum_{i=1}^{m} \rrvi{i}| f(\mathbf{x}_{i}) |\right|\right] \\&\hspace{2 cm }+  \frac{1}{m}\urho\mathbb{E}_{\rrv}\left[\left|\sum_{i=1}^{m} \rrvi{i}\right|\right] \nonumber \\
    &\leq \mathbb{E}_{\rrv}\left[\frac{1}{m} \sup _{f \in \mathcal{F}_{\beta_{p, q} \leq \beta}^{n, H}}\sup_{\rho}\left|\sum_{i=1}^{m} \rrvi{i}| f(\mathbf{x}_{i}) |\right|\right] \\&\hspace{2cm}+  \frac{1}{m}\urho\sqrt{m}(1) \nonumber \\
    &=\mathbb{E}_{\rrv}\left[\frac{1}{m} \sup _{f \in \mathcal{F}_{\beta_{p, q} \leq \beta}^{n, H}}\left|\sum_{i=1}^{m} \rrvi{i}| f(\mathbf{x}_{i}) |\right|\right]+  \frac{\urho}{\sqrt{m}}
\end{align*}
Let, $\mathcal{R}_{rec}$ be defined as,
\begin{equation}
\mathcal{R}_{rec} = \mathbb{E}_{\rrv}\left[\frac{1}{m} \sup _{f \in \mathcal{N}^{n, H}} \frac{\beta}{\beta_{p, q}(f)}\left|\sum_{i=1}^{m} \rrvi{i} |f(\mathbf{x}_{i})|\right|\right] \nonumber     
\end{equation}
Also, $\mathcal{R}_{rec}= R(\mathcal{N}_{\beta_{p,q}\leq\beta}^{n,H})$ where $\mathcal{N}(\mathbf{x}) = \left|\mathbf{w}^{T} (\phi\left(W_{n-1} \phi\left(W_{n-2}\left(\ldots \phi\left(W_{1} \mathbf{x}\right)\right)\right)\right)\right|$ and $\phi$ is ReLU activation function.
    \begin{align*}
  &\mathcal{R}_{rec}=\mathbb{E}_{\rrv}\left[\frac{1}{m} \sup _{f \in \mathcal{N}^{n, H}} \frac{\beta}{\beta_{p, q}(f)}\left|\sum_{i=1}^{m} \rrvi{i} |f(\mathbf{x}_{i})|\right|\right] \nonumber\\
    &=\mathbb{E}_{\rrv}\Bigg[\frac{1}{m} \sup _{g \in \mathcal{N}^{n-1, H, H}} \\&\hspace{2cm}\sup _{\mathbf{w}} \frac{\beta}{\beta_{p, q}(g)\|\mathbf{w}\|_{p}}\left|\sum_{i=1}^{m} \rrvi{i} |\mathbf{w}^{T}[g(\mathbf{x}_{i})]_{+}|\right|\Bigg] \nonumber \\
    &=\mathbb{E}_{\rrv}\Bigg[\frac{1}{m} \sup _{g \in \mathcal{N}^{n-1, H, H}} \\ &\hspace{2 cm}\sup _{\mathbf{w}} \frac{\beta}{\beta_{p, q}(g)\|\mathbf{w}\|_{p}}\left|\sum_{i=1}^{m} \rrvi{i} \|\mathbf{w}\|_{p}\|[g(\mathbf{x}_{i})]_{+}\|_{\pc}\right|\Bigg]\nonumber \\
    &=\mathbb{E}_{\rrv}\left[\frac{1}{m} \sup _{g \in \mathcal{N}^{n-1, H, H}} \frac{\beta}{\beta_{p, q}(g)}\left|\sum_{i=1}^{m} \rrvi{i}\|\left[g(\mathbf{x}_{i})\right]_{+}\|_{\pc}\right|\right]\nonumber \end{align*}
    \begin{align*}
    &=\mathbb{E}_{\rrv}\Bigg[\frac{1}{m} \sup _{h \in \mathcal{N}^{n-2, H, H}} \frac{\beta}{\beta_{p, q}(h)} \\ &\hspace{2 cm}\sup _{W} \frac{1}{\|W\|_{p, q}}\left|\sum_{i=1}^{m} \rrvi{i}\left\|[W\left[h(\mathbf{x}_{i})\right]_{+}\right]_{+}\|_{\pc}\right|\Bigg] 
    \nonumber \\
    &= \mathbb{E}_{\rrv}\Bigg[\frac{1}{m} \sup _{h \in \mathcal{N}^{n-2, H, H}} \frac{\beta}{\beta_{p, q}(h)} \\ &\hspace{2 cm}\sup _{W} \frac{1}{\|W\|_{p, q}}\left\|\sum_{i=1}^{m} \rrvi{i}\left\|[W\left[h(\mathbf{x}_{i})]_{+}\right]_{+}\right\|_{\pc}\right\|_{\pc}\Bigg] 
    \nonumber 
    \end{align*}
We use Lemma \ref{lemma:Wtow} to obtain the following result,
    \begin{align*}
    &R(\mathcal{N}_{\beta_{p,q}\leq\beta}^{n,H})= H^{\left[\frac{1}{\pc}-\frac{1}{q}\right]_{+}}\mathbb{E}_{\rrv}\Bigg[\frac{1}{m} \sup _{h \in \mathcal{N}^{n-2, H, H}} \\ &\hspace{1.5 cm}\frac{\beta}{\beta_{p, q}(h)}\sup _{\mathbf{w}} \frac{1}{\|\mathbf{w}\|_{p}}\left|\sum_{i=1}^{m} \rrvi{i}\left\|\left[\mathbf{w}^{\top}\left[h\left(\mathbf{x}_{i}\right)\right]_{+}\right]_{+}\right\|_{\pc}\right| \Bigg]
    \nonumber\\
    &\hspace{-0.2 cm}= H^{\left[\frac{1}{\pc}-\frac{1}{q}\right]_{+}}\mathbb{E}_{\rrv}\Bigg[\frac{1}{m} \sup _{h \in \mathcal{N}^{n-1, H, H}} \\ &\hspace{4 cm}\frac{\beta}{\beta_{p, q}(g)}\left|\sum_{i=1}^{m} \rrvi{i}\left\|\left[g(\mathbf{x}_{i})\right]_{+}\right\|_{\pc}\right| \Bigg] \nonumber \\
    &\hspace{-0.2 cm}\leq H^{\left[\frac{1}{\pc}-\frac{1}{q}\right]_{+}}\mathbb{E}_{\rrv}\left[\frac{1}{m} \sup _{h \in \mathcal{N}^{n-1, H, H}} \frac{\beta}{\beta_{p, q}(g)}\left|\sum_{i=1}^{m} \rrvi{i}\left|\left[g(\mathbf{x}_{i})\right]_{+}\right|\right| \right]  \nonumber \\
    &\hspace{-0.2 cm}\leq H^{\left[\frac{1}{\pc}-\frac{1}{q}\right]_{+}}\mathbb{E}_{\rrv}\left[\frac{1}{m} \sup _{h \in \mathcal{N}^{n-1, H, H}} \frac{\beta}{\beta_{p, q}(g)}\left|\sum_{i=1}^{m} \rrvi{i}\left[\left|g(\mathbf{x}_{i})\right|\right]_{+}\right| \right] \nonumber 
\end{align*}
We use Lemma 16 (Contraction Lemma) \cite{neyshabur2015norm} result directly to obtain the following result,
\begin{align*}
    R(\mathcal{N}_{\beta_{p,q}\leq\beta}^{n,H})&\leq 2(1)H^{\left[\frac{1}{\pc}-\frac{1}{q}\right]_{+}}\mathbb{E}_{\rrv}\Bigg[\frac{1}{m} \\ &\hspace{2 cm}\sup _{h \in \mathcal{N}^{n-1, H, H}} \frac{\beta}{\beta_{p, q}(g)}\left|\sum_{i=1}^{m} \rrvi{i}\left|g(\mathbf{x}_{i})\right|\right| \Bigg] \nonumber\\
    R(\mathcal{N}_{\beta_{p,q}\leq\beta}^{n,H})&\leq 2H^{\left[\frac{1}{\pc}-\frac{1}{q}\right]_{+}}R\left(\mathcal{N}_{\beta_{p, q} \leq \beta}^{n-1, H}\right)\nonumber
    \end{align*}
We now use the recurrence relationship and Rademachar complexity obtained from Theorem \ref{thm-one_layer} to get,
    \begin{align}
    R(\mathcal{F}_{\beta_{p,q}\leq\beta}^{n,H})&\leq \left(\frac{\beta}{\sqrt{m}}\max_{i}\|\mathbf{x}_{i}\|_{\pc}\right)\left(2H^{\left[\frac{1}{\pc}-\frac{1}{q}\right]_{+}}\right)^{n-1} +\frac{\urho}{\sqrt{m}}
    \nonumber
\end{align}
The Lipschitz constant for double sigmoid loss with cost of rejection $d_{r}$ can be computed as 
\begin{align}
    L = \sup &|2d_{r}\sigma(z-\rho)\left(1-\sigma(z-\rho)\right)\nonumber\\&+2(1-d_{r})\sigma(z+\rho)\left(1-\sigma(z+\rho)\right)|
    \label{eq:lipshitz}
\end{align}
The maximum value of the product $\sigma(z-\rho)\left(1-\sigma(z-\rho)\right)$ and $\sigma(z+\rho)\left(1-\sigma(z+\rho)\right)$ is at $z =\rho$ and $z=-\rho$ respectively. However, the maximum value of the equation \ref{eq:lipshitz} is achieved at $z = 0$ and $\rho = 0$.
Thus,
\begin{align*}
  &L = 2d_{r}\sigma(-\rho)(1-\sigma(-\rho)) + 2(1-d_{r})\sigma(\rho)(1-\sigma(\rho))\\
  &L = 2d_{r}\sigma(-\rho)\sigma(\rho) + 2(1-d_{r})\sigma(\rho)\sigma(-\rho) \\
  &L = 2\sigma(\rho)\sigma(-\rho)\\
  &L = 2\sigma(\rho)\sigma(-\rho)
\end{align*}  when $\rho = 0$, we get $L=0.5$. 
We now use the above result to get the generalization bound where the lipschitz constant $L$ for double sigmoid loss would be
$L = 0.5$.

% \textcolor{red}{$\max\{2c,2(1-c)\}$} where $c$ is the cost of rejection. $\max\{2c,2(1-2c)\}$
\begin{align*}R_{ds}(f, \rho) &\leq \Rhds{f}{\rho}\\&+\left(\frac{2\beta}{\sqrt{m}}\max_{i}\|\mathbf{x}_{i}\|_{\pc}\right)\left(2H^{\left[\frac{1}{\pc}-\frac{1}{q}\right]_{+}}\right)^{n-1} \\ &\hspace{0.0cm}+\frac{\urho}{\sqrt{m}}+ \sqrt{\frac{8\ln \left(\frac{4}{\delta}\right)}{m}} + \sqrt{\frac{2\ln \left(\frac{2}{\delta}\right)}{m}} \nonumber
\end{align*}
\end{proof}
\begin{lemma}
Let $\mathcal{Y} \subseteq \mathbb{R},$ and $\operatorname{let} \mathcal{F} \subseteq[a, b]^{\mathcal{X}}$ for some $a \leq b .$ Let $\ell: \mathcal{Y} \times[a, b] \rightarrow[0, B]$ be such that $\ell(y, \hat{y})$
is $L$ -Lipschitz in its second argument for some $L>0 .$ Let $D$ be any probability distribution on $\mathcal{X} \times \mathcal{Y},$ with marginal $\mu$ on $\mathcal{X}$. If $f_{S}$ is selected from $\mathcal{F},$ then for any $0<\delta \leq 1,$ with probability at least $1-\delta$ (over $\left.S \sim D^{m}\right)$
\begin{align*}
        R_{ds}(f, \rho) \leq& \Rhds{f}{\rho}+2 L R(\mathcal{F})+\\&2(b-a)\sqrt{\frac{\ln
        \left(\frac{4}{\delta}\right)}{2 m}}+\left(b-a\right)\sqrt{\frac{ln\left(\frac{2}{\delta}\right)}{m}} \nonumber
\end{align*}
\label{lemma:mcdarmid}
\end{lemma}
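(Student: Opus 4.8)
The plan is to derive this as a textbook Rademacher-complexity generalization bound, specialized to the double sigmoid loss, by combining three classical ingredients: a bounded-differences (McDiarmid) concentration step, the symmetrization argument, and Talagrand's contraction principle for Lipschitz losses.

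First I would introduce the one-sided uniform-deviation functional $\Phi(S) = \sup_{f \in \mathcal{F}}\big(R_{ds}(f,\rho) - \Rhds{f}{\rho}\big)$. Since $L_{ds}$ takes values in an interval of length at most $B$, swapping a single training pair $(\mathbf{x}_i,y_i)$ changes $\Phi(S)$ by at most $B/m$, so McDiarmid's inequality gives, on an event of probability at least $1-\delta/2$, that $\Phi(S) \le \mathbb{E}_S[\Phi(S)] + O\!\big(B\sqrt{\ln(4/\delta)/m}\big)$. The standard symmetrization step — a ghost sample together with Rademacher signs $\rrvi{i}$, followed by Jensen — then bounds $\mathbb{E}_S[\Phi(S)]$ by $2\,\mathbb{E}_{S,\rrv}\!\big[\sup_{f\in\mathcal{F}}\tfrac1m\sum_{i=1}^m \rrvi{i} L_{ds}(y_i f(\mathbf{x}_i),\rho(\mathbf{x}_i))\big]$, i.e.\ twice the Rademacher complexity of the loss-composed class.

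Next I would peel the loss off the function class. The double sigmoid loss $L_{ds}(y,\cdot)$ is $L$-Lipschitz in its margin argument, uniformly in $y$ and in the bounded rejection value $\rho\le\urho$ (for the double sigmoid one gets $L=1/2$, as obtained later by differentiating the sum of the two sigmoids and checking the derivative is largest in absolute value at $z=\rho=0$), so the Ledoux--Talagrand contraction inequality (Lemma 16 of \cite{neyshabur2015norm}) yields $\mathbb{E}_{S,\rrv}[\hat{R}_m(L_{ds}\circ\mathcal{F})] \le L\,\mathbb{E}_S[\hat R_m(\mathcal{F})]$. Finally, because every $f\in\mathcal{F}$ is valued in $[a,b]$, the empirical Rademacher complexity $\hat R_m(\mathcal{F})$ itself has bounded differences with constant $(b-a)/m$; a second application of McDiarmid, on an event of probability at least $1-\delta/2$, replaces $\mathbb{E}_S[\hat R_m(\mathcal{F})]$ by $R(\mathcal{F}) + O\!\big((b-a)\sqrt{\ln(2/\delta)/m}\big)$. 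A union bound over the two failure events of total probability $\delta$, and chaining the inequalities (the $L$-dependence of the second concentration term collapsing since $2L=1$ here), yields the stated result; the exact constants and the allocation of $\delta$ between the two McDiarmid applications are routine bookkeeping.

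I expect the only nontrivial point to be the contraction step: it needs the Lipschitz property of $L_{ds}$ to hold \emph{uniformly} over all admissible $(y,\rho)$, which is precisely why boundedness of $\rho$ is required, and it must be applied to the absolute-value-composed network class used by the RISAN architecture. Everything else — the two McDiarmid inequalities and the symmetrization — is standard and needs no new idea beyond carefully distinguishing the loss range $B$ from the output range $b-a$ in the corresponding concentration terms.
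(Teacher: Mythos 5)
Your proposal is correct and follows essentially the same route as the paper: the paper simply cites the Bartlett--Mendelson result of \cite{bartlett2002rademacher} for the combination of McDiarmid on the sup-deviation, symmetrization, and Lipschitz contraction that you spell out explicitly, and then, exactly as you do, applies a second McDiarmid step with bounded-difference constant $(b-a)/m$ to pass between the expected and empirical Rademacher complexities before taking a union bound over the two $\delta/2$ events. The only caveat is notational: the quantity you should be concentrating in that second step is the expected complexity $R_m(\mathcal{F})=\mathbb{E}_S[\hat R_m(\mathcal{F})]$ around its empirical counterpart $\hat R_m(\mathcal{F})$ (the paper itself is inconsistent about which of the two appears in the final bound), but this does not affect the validity of the argument.
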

\begin{proof}
First we define 
\begin{equation}
    \hat{R}_{m}(\mathcal{F}) = \mathbb{E}_{\{\rrv\in{\pm1}\}}\left[\sup_{f\in\mathcal{F}}\frac{1}{m}\sum_{i=1}^{m}\rrvi{i}f(\mathbf{x}_{i})\right] \nonumber
\end{equation}
where $\rrv$ is the Rademachar variable and $R_{m}(\mathcal{F})$ is defined as expectation over data samples of size m obtained in an i.i.d fashion from probability distribution $\mu$ i.e.
\begin{equation}
    R_{m}(\mathcal{F}) = \mathbb{E}_{\mathbf{x}^{m}\sim\mu^{m}}\left[\hat{R}_{m}(\mathcal{F})\right] \nonumber
\end{equation}
We directly use the result from the \cite{bartlett2002rademacher} and using the results directly with probability atleast $1-\delta$, we bound the generalization error as,
\begin{equation}R_{ds}(f, \rho) \leq \Rhds{f}{\rho}+2LR_{m}({\mathcal{F}})+2(b-a) \sqrt{\frac{\ln \left(\frac{2}{\delta}\right)}{2 m}}  \label{eq:bm}
\end{equation}
Now for any set $\mathcal{S}=\{\mathbf{x}_{1},\mathbf{x}_{2}....,\mathbf{x}_{m}\}$, and a function $\phi: \mathcal{X}^{m}\rightarrow\mathbb{R}$ such that $\phi(\mathbf{x}_{1},\mathbf{x}_{2},...\mathbf{x}_{m}) = \hat{R}_{m}(\mathcal{F})$. Hence, $R_{m}(\mathcal{F}) = \mathbf{E}_{\mathbf{x}^{m} \sim \mu^{m}}\left[\phi\left(\mathbf{x}_{1}, \ldots, \mathbf{x}_{m}\right)\right]$

Then, for any $j\in[m]$, and any $\mathbf{x}_{1}, \ldots, \mathbf{x}_{m}, \mathbf{x}_{j}^{\prime} \in \mathcal{X}$
\begin{align*}
    &\left|\phi\left(\mathbf{x}_{1}, \ldots, \mathbf{x}_{j}, \ldots, \mathbf{x}_{m}\right)-\phi\left(\mathbf{x}_{1}, \ldots, \mathbf{x}_{j}^{\prime}, \ldots, \mathbf{x}_{m}\right)\right| \nonumber \\
    &=\hat{R}_{m}(\mathcal{F})-R_{\left(\mathbf{x}_{1}, \ldots, \mathbf{x}_{j}^{\prime}, \ldots, \mathbf{x}_{m}\right)}(\mathcal{F}) \nonumber \\
    &=\mathbf{E}_{\rrv \in\{\pm 1\}^{m}}\Bigg[\sup _{f \in \mathcal{F}} \frac{1}{m} \sum_{i=1}^{m} \rrvi{i} f\left(\mathbf{x}_{i}\right)\\&\hspace{2 cm }-\sup _{f \in \mathcal{F}}\left(\frac{1}{m} \sum_{i \neq j} \rrvi{i} f\left(\mathbf{x}_{i}\right)+\frac{1}{m} \rrvi{j} f\left(\mathbf{x}_{j}^{\prime}\right)\right)\Bigg]\\
    &\leq \frac{b-a}{m} \nonumber
\end{align*}
Thus by McDarmid's inequality we get,
\begin{equation}
  \mathbf{P} \left[\hat{R}_{m}(\mathcal{F})-R_{m}(\mathcal{F}) \geq \epsilon\right] \leq e^{-2 m \epsilon^{2} /(b-a)^{2}} 
\end{equation}
Now with probability atleast $1-\frac{\delta}{2}$ we get,
\begin{equation}
    \hat{R}_{m}(\mathcal{F})-R_{m}(\mathcal{F}) \leq 2(b-a)\sqrt{\frac{\ln{\frac{2}{\delta}}}{2m}}
    \label{eq:rel-rad}
\end{equation}
We also know that, a relationship exits between 
Using the combination of eqn. (\ref{eq:rel-rad}) and eqn. (\ref{eq:bm}) each holding with a probability of atleast $1-\frac{\delta}{2}$, we get with a probability atleast $1-\delta$,
\begin{align*}
    R_{ds}(f, \rho) &\leq \Rhds{f}{\rho}+2 L \hat{R}_{m}(\mathcal{F})\\&+2(b-a)\sqrt{\frac{\ln \left(\frac{4}{\delta}\right)}{2 m}}+\left(b-a\right)\sqrt{\frac{\ln\left(\frac{2}{\delta}\right)}{m}} \nonumber
\end{align*}
\end{proof}
\begin{theorem}
The rademachar complexity for RISAN with a single layer $(n=1)$, is bounded as
\begin{equation}
    R(\mathcal{F}_{\|\mathbf{w}\|_{p}\leq\beta}^{1}) \leq \frac{\beta}{\sqrt{m}}\max_{i}\|\mathbf{x}_{i}\|_{\pc} \nonumber
\end{equation}
     \label{thm-one_layer}
\end{theorem}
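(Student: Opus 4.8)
The plan is to unfold the definition of the empirical Rademacher complexity of the single‑layer class, whose members have the form $\mathbf{x}\mapsto|\mathbf{w}^{\top}\mathbf{x}|$ with $\|\mathbf{w}\|_{p}\le\beta$, strip off the absolute value by a Lipschitz‑contraction step, and then evaluate the remaining linear‑class complexity by Hölder duality together with a concentration estimate for the Rademacher sum $\sum_{i}\lambda_{i}\mathbf{x}_{i}$. Concretely, I would start from
\begin{equation*}
R(\mathcal{F}^{1}_{\|\mathbf{w}\|_{p}\le\beta})=\mathbb{E}_{\lambda}\!\left[\frac{1}{m}\sup_{\|\mathbf{w}\|_{p}\le\beta}\left|\sum_{i=1}^{m}\lambda_{i}\,|\mathbf{w}^{\top}\mathbf{x}_{i}|\right|\right].
\end{equation*}

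Since $t\mapsto|t|$ is $1$‑Lipschitz and vanishes at the origin, I would invoke the contraction principle (the same Contraction Lemma already used in the recurrence of Theorem~\ref{THM:GENBOUND}) to replace $|\mathbf{w}^{\top}\mathbf{x}_{i}|$ by $\mathbf{w}^{\top}\mathbf{x}_{i}$, reducing the problem to the complexity of the linear‑functional class $\{\mathbf{x}\mapsto\mathbf{w}^{\top}\mathbf{x}:\|\mathbf{w}\|_{p}\le\beta\}$; here I would be careful to track that, because this class is symmetric in $\mathbf{w}$, the constant produced by the contraction step is exactly $1$. For a fixed sign pattern $\lambda$, Hölder's inequality together with its tightness gives $\sup_{\|\mathbf{w}\|_{p}\le\beta}\big\langle\mathbf{w},\sum_{i}\lambda_{i}\mathbf{x}_{i}\big\rangle=\beta\,\big\|\sum_{i}\lambda_{i}\mathbf{x}_{i}\big\|_{p'}$, so that $R(\mathcal{F}^{1}_{\|\mathbf{w}\|_{p}\le\beta})\le\frac{\beta}{m}\,\mathbb{E}_{\lambda}\big\|\sum_{i}\lambda_{i}\mathbf{x}_{i}\big\|_{p'}$.

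It then remains to bound the expected norm of the Rademacher sum. By Jensen's inequality, $\mathbb{E}_{\lambda}\big\|\sum_{i}\lambda_{i}\mathbf{x}_{i}\big\|_{p'}\le\big(\mathbb{E}_{\lambda}\big\|\sum_{i}\lambda_{i}\mathbf{x}_{i}\big\|_{p'}^{2}\big)^{1/2}$, and, using that the $\lambda_{i}$ are independent and mean zero (so that the cross terms cancel — exactly for $p'=2$, and via a Khintchine/Kahane‑type estimate otherwise), $\mathbb{E}_{\lambda}\big\|\sum_{i}\lambda_{i}\mathbf{x}_{i}\big\|_{p'}^{2}\le\sum_{i}\|\mathbf{x}_{i}\|_{p'}^{2}\le m\,\max_{i}\|\mathbf{x}_{i}\|_{p'}^{2}$. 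Substituting back yields $R(\mathcal{F}^{1}_{\|\mathbf{w}\|_{p}\le\beta})\le\frac{\beta}{m}\sqrt{m}\,\max_{i}\|\mathbf{x}_{i}\|_{p'}=\frac{\beta}{\sqrt{m}}\max_{i}\|\mathbf{x}_{i}\|_{p'}$, which is the claim.

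The step I expect to be the main obstacle is the last one: controlling $\mathbb{E}_{\lambda}\big\|\sum_{i}\lambda_{i}\mathbf{x}_{i}\big\|_{p'}$ for a general exponent $p'$. The ``cross terms vanish'' computation is exact only for the Euclidean norm, so for other $\ell_{p'}$ norms one needs a vector‑valued Khintchine inequality and must check that the resulting constant is genuinely $1$ rather than something like $\sqrt{p'-1}$ in order to match the stated bound. In the regime that actually matters for the generalization result — overall $\ell_{2}$ regularization of the weights, i.e.\ $p'=2$ — this subtlety disappears and the argument above is completely clean.
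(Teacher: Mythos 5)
Your route is genuinely different from the paper's, and the obstacle you flag at the end is precisely where the two diverge. The paper never forms the vector-valued Rademacher sum $\sum_i\lambda_i\mathbf{x}_i$: it applies H\"older \emph{termwise}, bounding $|\mathbf{w}^{\top}\mathbf{x}_i|\leq\|\mathbf{w}\|_{p}\|\mathbf{x}_i\|_{p'}$ inside the sum, so that after taking $\sup_{\|\mathbf{w}\|_{p}\leq\beta}$ what remains is the \emph{scalar} Rademacher average $\frac{\beta}{m}\,\mathbb{E}_{\lambda}\bigl|\sum_{i=1}^{m}\lambda_i\|\mathbf{x}_i\|_{p'}\bigr|$. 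For a scalar sum the $L^1\leq L^2$ step is exact with constant $1$ for every $p'$, since $\mathbb{E}_{\lambda}\bigl(\sum_i\lambda_i a_i\bigr)^2=\sum_i a_i^2$; this is what lets the paper reach $\frac{\beta}{\sqrt{m}}\max_i\|\mathbf{x}_i\|_{p'}$ uniformly in $p$. In your version the norm sits outside the sum, and controlling $\mathbb{E}_{\lambda}\bigl\|\sum_i\lambda_i\mathbf{x}_i\bigr\|_{p'}$ by $\bigl(\sum_i\|\mathbf{x}_i\|_{p'}^2\bigr)^{1/2}$ with constant $1$ is a type-2 statement about $\ell_{p'}$ that does not hold with constant $1$ in general; as you correctly note, your argument closes cleanly only for $p'=2$. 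The fix is simply to reorder the steps as the paper does (H\"older before pushing the expectation through), which also makes your contraction step unnecessary --- a step that itself needs care, since the class $\{\mathbf{x}\mapsto|\mathbf{w}^{\top}\mathbf{x}|\}$ is not closed under negation and the two-sided contraction inequality (with the outer absolute value present in the definition of $R$) ordinarily costs a factor of $2$. To be fair, the paper's own termwise replacement of $|\mathbf{w}^{\top}\mathbf{x}_i|$ by $\|\mathbf{w}\|_{p}\|\mathbf{x}_i\|_{p'}$ under the outer absolute value is also stated loosely (enlarging a summand whose sign $\lambda_i$ is negative need not increase $\bigl|\sum_i\lambda_i a_i\bigr|$), but its ordering of H\"older before the Khintchine step is the one that delivers the stated constant for all $p$.
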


\begin{proof}
For a network with a single layer, it is important to notice that $\beta_{p,q}(\mathbf{w})=\|\mathbf{w}\|_{p}$.
\begin{align}
    R(\mathcal{F}_{\|\mathbf{w}\|_{p}\leq\beta}^{1}) &\leq \mathbb{E}_{\rrv}\left[\frac{1}{m} \sup _{\|\mathbf{w}\|_{p} \leq \beta}\sup_{\rho}\left|\sum_{i=1}^{m} \rrvi{i}| \mathbf{w}^{T}\mathbf{x}_{i}| \right|\right] \nonumber \\
    R(\mathcal{F}_{\|\mathbf{w}\|_{p}\leq\beta}^{1}) &\leq \mathbb{E}_{\rrv}\left[\frac{1}{m} \sup _{\|\mathbf{w}\|_{p} \leq \beta}\sup_{\rho}\left|\sum_{i=1}^{m} \rrvi{i}| \mathbf{w}^{T}\mathbf{x}_{i}| \right|\right] \nonumber \\
    R(\mathcal{F}_{\|\mathbf{w}\|_{p}\leq\beta}^{1}) &\leq \mathbb{E}_{\rrv}\left[\frac{1}{m} \sup _{\|\mathbf{w}\|_{p} \leq \beta}\sup_{\rho}\left|\sum_{i=1}^{m} \rrvi{i}| \mathbf{w}^{T}\mathbf{x}_{i}| \right|\right] \nonumber \\
    R(\mathcal{F}_{\|\mathbf{w}\|_{p}\leq\beta}^{1}) &\leq \mathbb{E}_{\rrv}\left[\frac{1}{m} \sup _{\|\mathbf{w}\|_{p} \leq \beta}\sup_{\rho}\left|\sum_{i=1}^{m} \rrvi{i}\| \mathbf{w}\|_{p}\|\mathbf{x}_{i}\|_{\pc} \right|\right] \nonumber \\
    R(\mathcal{F}_{\|\mathbf{w}\|_{p}\leq\beta}^{1}) &\leq \beta\mathbb{E}_{\rrv}\left[\frac{1}{m}\left|\sum_{i=1}^{m} \rrvi{i}\|\mathbf{x}_{i}\|_{\pc} \right|\right]  \nonumber \\
    R(\mathcal{F}_{\|\mathbf{w}\|_{p}\leq\beta}^{1}) &\leq \frac{\beta}{m}\left(\sum_{i=1}^{m}\|\mathbf{x}_{i}\|_{\pc}^2 \right)^{\frac{1}{2}} \nonumber \\
    R(\mathcal{F}_{\|\mathbf{w}\|_{p}\leq\beta}^{1}) &\leq \frac{\beta}{m}\left(m \max_{i}\|\mathbf{x}_{i}\|_{\pc}^2 \right)^{\frac{1}{2}} \nonumber \\
     &= \frac{\beta}{\sqrt{m}} \max_{i}\|\mathbf{x}_{i}\|_{\pc}  \nonumber
\end{align}
\end{proof}

\begin{lemma}
For any $p$, $q \geq 1$, $n \geq 2$, $\lambda \in\{\pm 1\}^{m}$  and  $f \in \mathcal{N}^{n, H, H}$
\begin{align*}
&\sup _{W} \frac{1}{\|W\|_{p, q}}\left\|\sum_{i=1}^{m} \rrvi{i}\left\|\left[W\left[f\left(\mathbf{x}_{i}\right)\right]_{+}\right]_{+}\right\|_{\pc}\right\|_{\pc}=\\ &\hspace{0.5cm}H^{\left[\frac{1}{\pc}-\frac{1}{q}\right]+} \sup _{\mathbf{w}} \frac{1}{\|\mathbf{w}\|_{p}}\left|\sum_{i=1}^{m} \rrvi{i}\left\|\left[\mathbf{w}^{\top}\left[f\left(\mathbf{x}_{i}\right)\right]_{+}\right]_{+}\right\|_{\pc}\right| \nonumber
\end{align*}
where $n$ is the depth of the network, $H$ is the height of the layer and $H$ is the no. of outputs.
\label{lemma:Wtow}
\end{lemma}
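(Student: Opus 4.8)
The plan is to establish the two inequalities ``$\le$'' and ``$\ge$'' separately, reading the left-hand side as the Rademacher-weighted sum $\sum_i \lambda_i [W[f(\mathbf x_i)]_+]_+$ lying inside a single $\ell_{p'}$-norm over the $H$ output units. Since each side is invariant under positive rescaling of $W$ (respectively $\mathbf w$), I may assume $\|W\|_{p,q}=1$ and $\|\mathbf w\|_p=1$ throughout. Write $\mathbf w_1,\dots,\mathbf w_H$ for the rows of $W$, so that $\|W\|_{p,q}=\big\|(\|\mathbf w_1\|_p,\dots,\|\mathbf w_H\|_p)\big\|_q$, and abbreviate $S:=\sup_{\mathbf u}\tfrac1{\|\mathbf u\|_p}\big|\sum_i\lambda_i[\mathbf u^\top[f(\mathbf x_i)]_+]_+\big|$ for the scalar quantity on the right. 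Note that the $[f(\mathbf x_i)]_+$ enter only as fixed vectors in $\mathbb R^H$, so the depth-$n$ network structure of $f$ is irrelevant.

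For ``$\le$'' I would argue coordinatewise. The $j$-th entry of $\sum_i\lambda_i[W[f(\mathbf x_i)]_+]_+$ is precisely $\sum_i\lambda_i[\mathbf w_j^\top[f(\mathbf x_i)]_+]_+$, whose absolute value is at most $\|\mathbf w_j\|_p\,S$ by the definition of $S$ (a zero row gives a zero entry, consistent with this). Taking the $\ell_{p'}$-norm over $j$ and then invoking the elementary comparison $\|v\|_{p'}\le H^{[1/p'-1/q]_+}\|v\|_q$ on $\mathbb R^H$ gives $\big\|\sum_i\lambda_i[W[f(\mathbf x_i)]_+]_+\big\|_{p'}\le S\,\big\|(\|\mathbf w_j\|_p)_j\big\|_{p'}\le H^{[1/p'-1/q]_+}S\,\|W\|_{p,q}$; dividing by $\|W\|_{p,q}$ and taking $\sup_W$ gives this half.

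For ``$\ge$'' I would build a near-optimal $W$ from an arbitrary $\mathbf w$: for $1\le k\le H$ let $W^{(k)}$ have its first $k$ rows equal to $\mathbf w$ and its remaining rows zero. Then $\|W^{(k)}\|_{p,q}=k^{1/q}\|\mathbf w\|_p$, while $[W^{(k)}[f(\mathbf x_i)]_+]_+$ is the vector whose first $k$ coordinates equal $[\mathbf w^\top[f(\mathbf x_i)]_+]_+$ and whose remaining coordinates vanish, so $\big\|\sum_i\lambda_i[W^{(k)}[f(\mathbf x_i)]_+]_+\big\|_{p'}=k^{1/p'}\big|\sum_i\lambda_i[\mathbf w^\top[f(\mathbf x_i)]_+]_+\big|$. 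The corresponding ratio equals $k^{1/p'-1/q}$ times the scalar quantity for $\mathbf w$; taking $k=H$ when $p'\le q$ and $k=1$ otherwise makes the prefactor $\max_{1\le k\le H}k^{1/p'-1/q}=H^{[1/p'-1/q]_+}$, and a supremum over $\mathbf w$ finishes.

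The step that needs care is ``$\le$'': the whole argument rests on the Rademacher signs being aggregated \emph{inside} one $\ell_{p'}$-norm over the $H$ hidden units, so that the per-coordinate estimate $|\sum_i\lambda_i[\mathbf w_j^\top[f(\mathbf x_i)]_+]_+|\le\|\mathbf w_j\|_p\,S$ is literally applicable; the factor $H^{[1/p'-1/q]_+}$ is then exactly the $\ell_{p'}$-versus-$\ell_q$ comparison constant on $\mathbb R^H$, which is also precisely what the replicated-rows construction saturates. Arranged this way, no convexity or contraction argument is needed — this is the same ``peeling'' device as in \cite{neyshabur2015norm}.
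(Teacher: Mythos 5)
Your proof is correct and follows essentially the same route as the paper's: the upper bound comes from the per-row estimate $\bigl|\sum_i\lambda_i[\mathbf w_j^\top[f(\mathbf x_i)]_+]_+\bigr|\le\|\mathbf w_j\|_p\,S$ followed by the $\ell_{p'}$-versus-$\ell_q$ comparison on $\mathbb{R}^H$ (the paper does exactly this by raising the row-wise inequality to the power $p'$ and summing over the $H$ rows), and the matching lower bound comes from a matrix with replicated rows. Your only refinement is the explicit case split in the saturating construction — taking a single nonzero row when $p'>q$ rather than replicating $\mathbf w^*$ into all $H$ rows, which is what actually attains the constant $H^{[1/p'-1/q]_+}=1$ in that regime — but this does not change the argument in substance.
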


\begin{proof}
\begin{equation}
  g(\mathbf{w}) =  \left| \sum_{i=1}^{m} \rrvi{i} \|\mathbf{w}^{\top}\left[f(\mathbf{x}_{i}\right)]_{+}\|_{\pc}\right|\nonumber
\end{equation}
We define $\mathbf{w}^{*}$ as
\begin{equation}
    \mathbf{w}^{*} = \arg \max _{\mathbf{w}} \frac{g(\mathbf{w})}{\|\mathbf{w}\|_{p}} \nonumber
\end{equation} 
Thus,
\begin{align}
    g(V_{i}) &= \left| \sum_{i=1}^{m} \rrvi{i} \|V_{i}^{\top}\left[f(\mathbf{x}_{i}\right)]_{+}\|_{\pc}\right|\nonumber 
\end{align}
where $V_{i}$ is row of any matrix V.
Now we know that,
\begin{align}
    \frac{g({\mathbf{w}}^{*})}{\|{\mathbf{w}}^{*}\|_{p}} &\geq \frac{g(V_{i})}{\|V_{i}\|_{p}} \nonumber\\
    \bigg(\frac{g({\mathbf{w}}^{*})}{\|{\mathbf{w}}^{*}\|_{p}}\bigg)^{\pc} &\geq \bigg(\frac{g(V_{i})}{\|V_{i}\|_{p}}\bigg)^{\pc} \nonumber\\
    \bigg(\frac{g({\mathbf{w}}^{*}){\|V_{i}\|_{p}}}{\|{\mathbf{w}}^{*}\|_{p}}\bigg)^{\pc} &\geq \big(g(V_{i})\big)^{\pc} \nonumber\\
    \sum_{i=1}^{H}\bigg(\frac{g({\mathbf{w}}^{*}){\|V_{i}\|_{p}}}{\|{\mathbf{w}}^{*}\|_{p}}\bigg)^{\pc} &\geq \sum_{i=1}^{H}\big(g(V_{i})\big)^{\pc} \nonumber\\
    \bigg(\frac{g({\mathbf{w}}^{*})}{\|{\mathbf{w}}^{*}\|_{p}}\bigg)^{\pc}\sum_{i=1}^{H}\|V_{i}\|_{p}^{\pc} &\geq \sum_{i=1}^{H}\big(g(V_{i})\big)^{\pc} \nonumber\\
    \frac{g({\mathbf{w}}^{*})}{\|{\mathbf{w}}^{*}\|_{p}}\bigg(\sum_{i=1}^{H}\|V_{i}\|_{p}^{\pc}\bigg)^{\frac{1}{\pc}} &\geq \bigg(\sum_{i=1}^{H}\big(g(V_{i})\big)^{\pc}\bigg)^{\frac{1}{\pc}} \nonumber\\
    \frac{g({\mathbf{w}}^{*})}{\|{\mathbf{w}}^{*}\|_{p}} &\geq \frac{\|g(V)\|_{\pc}}{\|V\|_{p,\pc}} \label{eqn: Wtow}
\end{align}
We have 2 cases now,
$q>\pc$ and $q<\pc$. If $q<\pc$ $\|V\|_{p,\pc}\leq\|V\|_{p,q}$ and $H^{[\frac{1}{\pc}-\frac{1}{q}]_{+}}=1$.
Thus,
\begin{equation}
    \frac{g({\mathbf{w}}^{*})}{\|{\mathbf{w}}^{*}\|_{p}} \geq \frac{\|g(V)\|_{\pc}}{\|V\|_{p,q}} \nonumber
\end{equation}
We also know that $\|V\|_{p,\pc}\leq H^{[\frac{1}{\pc}-\frac{1}{q}]}\|V\|_{p,q}$ and thus,
\begin{equation}
    \frac{\|g(V)\|_{\pc}}{\|V\|_{p,q}} \leq H^{[\frac{1}{\pc}-\frac{1}{q}]}\frac{\|g(V)\|_{\pc}}{\|V\|_{p,\pc}} \nonumber
\end{equation}
And from eqn.(\ref{eqn: Wtow}) we get,
\begin{align}
    \frac{g({\mathbf{w}}^{*})}{\|{\mathbf{w}}^{*}\|_{p}} \geq \frac{\|g(V)\|_{\pc}}{\|V\|_{p,\pc}} &\geq \frac{\|g(V)\|_{\pc}}{H^{[\frac{1}{\pc}-\frac{1}{q}]}\|V\|_{p,q}} \nonumber \\
    H^{[\frac{1}{\pc}-\frac{1}{q}]}\frac{g({\mathbf{w}}^{*})}{\|{\mathbf{w}}^{*}\|_{p}} &\geq \frac{\|g(V)\|_{\pc}}{\|V\|_{p,q}} \nonumber 
\end{align}
The LHS of the lemma is greater than RHS is true for any given vector $\mathbf{w}$, not $\mathbf{w}^{*}$ in the RHS. Also, the equality exists when $W$ matrix contains $\mathbf{w}^{*}$ as all of its rows.
\end{proof}

\begin{table*}[t]
		
		\begin{tabularx}{\textwidth}{|  >{\hsize=2.1cm}Y  | >{\hsize=2.3cm}Y | Y | Y | Y | Y |}
			\hline
% 			\\ 
% 			\multirow{1}{*}{}& \multirow{1}{*}{+ Cross Entropy)} & \multirow{1}{*}{} & \multirow{1}{*}{+Cross Entropy)} & \multirow{1}{*}{} & \multirow{1}{*}{}
			\multirow{2}{*}{} & \multirow{2}{*}{\textbf{RISAN}} & \multirow{2}{*}{\textbf{RISAN-NA}} & \multirow{2}{*}{\textbf{SNN}} & \multirow{2}{*}{\textbf{SNN-NA}} & \multirow{2}{*}{\textbf{DAC}}
			\\ 
			\multirow{2}{*}{} & \multirow{2}{*}{} & \multirow{2}{*}{} & \multirow{1}{*}{} & \multirow{2}{*}{} & \multirow{1}{*}{} \\
			\multirow{1}{*}{}& \multirow{1}{*}{Double Sigmoid} & \multirow{1}{*}{Double Sigmoid} & \multirow{1}{*}{Selective Loss} & \multirow{1}{*}{Selective Loss} & \multirow{1}{*}{DAC Loss}
			\\ 
			\multirow{1}{*}{}& \multirow{1}{*}{+ Cross Entropy} & \multirow{1}{*}{} & \multirow{1}{*}{+ Cross Entropy} & \multirow{1}{*}{} & \multirow{1}{*}{}
			\\ \hline
			\multirow{2}{*}{\textbf{Architecture}} & \multirow{2}{*}{} & \multirow{1}{*}{} & \multirow{2}{*}{} & \multirow{1}{*}{} & \multirow{2}{*}{} \\
			\multirow{2}{*}{} & \multirow{2}{*}{} & \multirow{1}{*}{} & \multirow{2}{*}{} & \multirow{1}{*}{} & \multirow{2}{*}{} \\\hline
			\multirow{1}{*}{FC Layers} & \multirow{1}{*}{} & \multirow{1}{*}{} & \multirow{1}{*}{} & \multirow{1}{*}{} & \multirow{1}{*}{}\\
			\multirow{1}{*}{Prediction head} & \multirow{1}{*}{512,256,128} & \multirow{1}{*}{512,256,128} & \multirow{1}{*}{512,256,128} & \multirow{1}{*}{512,256,128} & \multirow{1}{*}{512,256,128} 
			\\ \multirow{1}{*}{Rejection head/}& \multirow{2}{*}{512,256,64} & \multirow{2}{*}{512,256,64} & \multirow{2}{*}{512,256,64} & \multirow{2}{*}{512,256,64} & \multirow{2}{*}{} 
			\\ \multirow{1}{*}{Selective head} & \multirow{1}{*}{} & \multirow{1}{*}{} & \multirow{1}{*}{} & \multirow{1}{*}{} & \multirow{1}{*}{}\\
			\hline
			\multirow{1}{*}{Weight Decay} & \multirow{1}{*}{} & \multirow{1}{*}{} & \multirow{1}{*}{} & \multirow{1}{*}{} & \multirow{1}{*}{}
			\\ \multirow{1}{*}{CNN}& \multirow{1}{*}{1e-4} & \multirow{1}{*}{1e-4} & \multirow{1}{*}{1e-4} & \multirow{1}{*}{1e-4} & \multirow{1}{*}{1e-4} 
			\\ \multirow{1}{*}{FC} & \multirow{1}{*}{1e-7} & \multirow{1}{*}{1e-7} & \multirow{1}{*}{1e-7} & \multirow{1}{*}{1e-7} & \multirow{1}{*}{1e-7}
			\\ \hline
			\multirow{2}{*}{\textbf{Datasets}} & \multirow{1}{*}{} & \multirow{1}{*}{} & \multirow{1}{*}{} & \multirow{1}{*}{} & \multirow{1}{*}{}
			\\ \multirow{2}{*}{} & \multirow{2}{*}{} & \multirow{1}{*}{} & \multirow{2}{*}{} & \multirow{1}{*}{} & \multirow{2}{*}{} \\\hline
			\multirow{1}{*}{}
			 & \multirow{1}{*}{Figure~\ref{fig:DNN-2}}
			 & \multirow{1}{*}{Figure~\ref{fig:DNN-na-1}}  & \multirow{1}{*}{Figure~\ref{fig:DNN-2}}  & \multirow{1}{*}{Figure~\ref{fig:DNN-na-1}}   & \multirow{1}{*}{Figure~\ref{fig:DNN-na-1}}  \\
			 \multirow{2}{*}{Cats vs Dogs} & $\alpha$ = 0.9 $\gamma = 1e-3$ & $\gamma = 1e-3$ & $\alpha$=0.5 & & (no rejection head)\\ 
			  \multirow{1}{*}{}& 250 epochs & 250 epochs & 250 epochs & 250 epochs& 250 epochs
			 \\\hline
			 \multirow{4}{*}{}
			 & \multirow{1}{*}{Figure~\ref{fig:DNN-2}} & \multirow{1}{*}{Figure~\ref{fig:DNN-na-1}}  & \multirow{1}{*}{Figure~\ref{fig:DNN-2}}  & \multirow{1}{*}{Figure~\ref{fig:DNN-na-1}}   & \multirow{1}{*}{Figure~\ref{fig:DNN-na-1}}  \\
			 CIFAR & $\alpha$ = 0.7 $\gamma = 1e-3$ & $\gamma = 1e-3$ & $\alpha$=0.5 & & (no rejection head)\\
			 \multirow{4}{*}{}& 250 epochs & 250 epochs & 250 epochs & 250 epochs& 250 epochs\\ \hline\multirow{3}{*}{}
			 & \multirow{1}{*}{Figure~\ref{fig:DNN-2}} & \multirow{1}{*}{Figure~\ref{fig:DNN-na-1}}  & \multirow{1}{*}{Figure~\ref{fig:DNN-2}}  & \multirow{1}{*}{Figure~\ref{fig:DNN-na-1}}   & \multirow{1}{*}{Figure~\ref{fig:DNN-na-1}}  \\
			 MNIST & $\alpha$ = 0.7 $\gamma = 1e-3$ & $\gamma = 1e-3$ & $\alpha$=0.5 & & (no rejection head)\\\multirow{1}{*}{}& 150 epochs & 150 epochs & 150 epochs & 150 epochs& 150 epochs\\  \hline
			 \multirow{4}{*}{}
			 & \multirow{1}{*}{Figure~\ref{fig:DNN-2}} & \multirow{1}{*}{Figure~\ref{fig:DNN-na-1}}  & \multirow{1}{*}{Figure~\ref{fig:DNN-2}}  & \multirow{1}{*}{Figure~\ref{fig:DNN-na-1}}   & \multirow{1}{*}{Figure~\ref{fig:DNN-na-1}}  \\
			 CBIS-DDSM & $\alpha$ = 0.7   
			 $\gamma = 1.0$ & $\gamma = 1e-3$ & $\alpha$=0.5 & & (no rejection head)\\\multirow{1}{*}{}& 250 epochs & 250 epochs & 250 epochs & 250 epochs& 250 epochs\\  \hline
			%check
% 			\multirow{3}{*}{CIFAR}
% 			 &5470& 22955 & 4055  & 1415  & 18900    \\ \hline
% 			\multirow{3}{*}{MNIST}
% 			 &5470& 16164 & 1927 & 3543  & 14237  \\ \hline
% 			 \multirow{3}{*}{CBIS-DDSM} & 1& 2& 3&4 & 5\\ \hline
%             \multirow{3}{*}{Phishing} & 1& 2&3 & 4& 5\\ 
%             \hline
		\end{tabularx}
		\caption{Architecture and Hyperparameters for Large datasets}
		\label{Table:tbl}
% % 		\caption{Trader based results on Dataset 5 (N=5, M=5)}
% % 		\begin{tabularx}{\textwidth}{| >{\hsize=1.6cm}Y | Y | Y | Y |}
% % 			\hline
% % 			Dataset&
% % 			Malicious Traders &  Explore Score ($\eta$) & Exploit Score ($\eta$)\\
% % 			\hline
% % 			\multirow{1}{*}{Dataset 1} & 4 & 85.71 & 71.44 \\ 
% % 			\hline
% % 			\multirow{1}{*}{Dataset 2} & 4 & 71.44 & 100 \\ 
% % 			\hline
% % 			\multirow{1}{*}{Dataset 3} & 4 & 36 & 94 \\ 
% % 			\hline
% 		\end{tabularx}
% 		\label{table:Traders}
	\end{table*}

	\subsection{Architecture Details and Hyperparameter Selection}
 	\label{sec:arc-details-hyp-selection}
\paragraph{Small Datasets Experiments:}
The experiments with regular small dimensional data are conducted with the network architecture shown in Figure ~\ref{fig:DNN-1}. We use 3 fully connected layers in the main body block of the network architecture with batch normalization \citep{ioffe2015batch} and dropout \citep{srivastava2014dropout} at each layer. Each layer uses ReLU (Rectified Linear Units) as the activation function with 64 neurons in each layer. We further use Adagrad (adaptive gradient) optimizer with a learning rate of $1\mathrm{e}{-3}$ and run it for 100 epochs. We fix the batch size as 32. We use a $\gamma$ value of 2 in the double sigmoid loss function for Ionosphere dataset whereas a value of 1 for ILPD dataset. For both SDR-SVM and DH-SVM we use a Gaussian kernel. We select the best values of regularization parameter $\lambda$ and kernel parameter $\gamma$ using 10-fold cross validation. We also use $\mu=1$ for SDR-SVM.
%SDR-SVM 

\paragraph{Large Datasets Experiments:}
For phishing dataset, we use RISAN with input dependent rejection.
%Phishing dataset
However, the rejection head is dependant on the input, hence it gets input not from a constant valued neuron but the fully connected (FC) layers. We used 4 FC layers with 64 neurons each, and followed each layer with dropout and batch normalization layers. The same architecture is used for experiments with SNN-NA and DAC with phishing dataset. 
The phishing dataset being a small dimensional dataset, the auxiliary loss becomes redundant and hence we remove the auxiliary loss for this experiment for SNN.

In the CNN based experiments, we used and followed the architecture and hyperparameters similar to the ones used in \cite{geifman2019selectivenet}.
The VGG-16 architecture from \cite{simonyan2014very} was  optimized for the small datasets and image sizes as suggested in \cite{liu2015very} with following alterations: 
 (i) used only one fully connected layer with 512 neurons (the original VGG-16 has two fully connected layers of 4096 neurons). (ii) added batch normalization \cite{ioffe2015batch} (iii)  added dropout \cite{srivastava2014dropout}.
 Also, the standard data augmentation consisting of horizontal flips, vertical and horizontal shifts, and rotations were included. The network was optimized using stochastic gradient descent (SGD) with a momentum of 0.9, an initial learning rate of 0.1, and a weight decay of 5e-4. The learning rate was reduced by 0.5 every 25 epochs.
 
We made further amendments to it by incorporating a separate stack of fully connected layers for each head. While the main body block of across all algorithms is the VGG-16 architecture. We added individual hidden layers to both prediction head and rejection head. We used three fully connected layers of sizes 512, 256 and 128, followed by a single neuron (prediction head). We also used additional three fully connected layers of size 512, 256 and 64 neurons followed by a single neuron for the rejection head. Moreover, for the rejection head, ReLU activation was used to ensure a positive rejection region parameter. A separate weight decay of $1\mathrm{e}{-4}$ for CNN layers and $1\mathrm{e}{-7}$ for FC layers. The data for individual CNN based datasets and algorithm is provided in Table \ref{Table:tbl}.
The learning rate scheduler was used which reduces learning rate by 0.5 once the validation loss stagnates. We utilize the same learning rate scheduler across all algorithms and datasets. 
\begin{figure}
%   \addtocounter{figure}{+0}
    \centering
    \includegraphics[width=0.4 \textwidth]{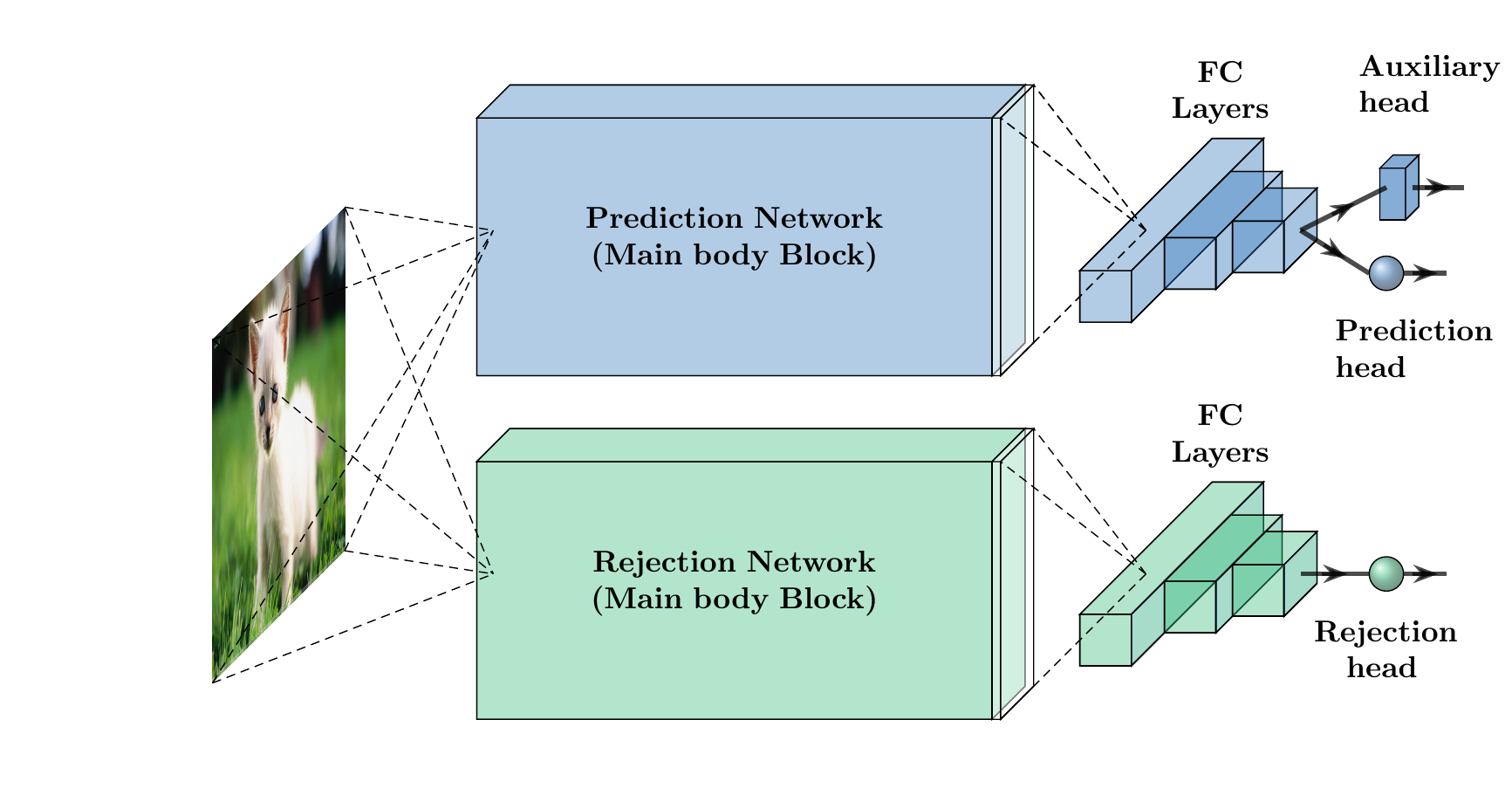}
    \caption{GradCAM Implementation}
    \label{fig:DN-3}
\end{figure}
% Figure asas \ref{fig:2D classifier}
\subsection{GradCAM Implementation}
\label{sec:GradCAM}
One exciting prospect of deep learning models is their ability to help us understand why a classification decision was made or the important generic features learned during the training process. We followed the GradCAM technique of \cite{selvaraju2017grad} that produces a localization map highlighting important regions in the image corresponding to particular predictions(class) to evaluate our model. We used the architecture described in Fig. \ref{fig:DN-3}, with VGG-16 in both the networks, and executed the GradCAM technique on the sigmoid outputs of the auxiliary head associated with the prediction network. 
 We explored the possibility of having two separate networks, prediction network and rejection network, for each head separately. 
%  We explored using separate networks for rejection head and prediction head so that sharing of features can be minimized and important features learned by each network can be studied independently. 
The use of separate networks was adopted to minimize the sharing of features, and subsequently, important features learned by each network could be examined independently.
This helps in visualizing features learned by the prediction network to produce highlighted regions corresponding to the image's different classes.

% \bibliography{kalra_590}
\end{document}